\newcolumntype{Y}{>{\raggedright\arraybackslash}X}
\newcommand{\thumb}[1]{%
  \adjustbox{valign=m}{%
    \fboxsep=0pt\relax
    \fboxrule=0.3pt\relax
    \fbox{\includegraphics[height=12mm,keepaspectratio]{#1}}% Edit size of imgs with height
  }%
}
\newtcbox{\thumbcontainer}{colback=white, colframe=gray!25,
  boxrule=0.3pt, arc=2mm, left=2mm, right=2mm, top=1.5mm, bottom=1.5mm}
\newcommand{\thumbstrip}[1]{%
  % Automatically shrink to fit the column if needed
  \adjustbox{max width=\linewidth, valign=m}{%
    \thumbcontainer{%
      \tikz[baseline=(base.base)]{
        \node (base) at (0,0) {};
        \foreach \f [count=\i] in {#1} {
          \node[anchor=base] at (0.0,0) {\thumb{\f}};
          % add a small horizontal gap between thumbnails except after the last
          \ifnum\i<1000\hspace{5pt}\fi
        }
      }%
      \hspace{3mm}
    }%
  }%
}
\newcommand{\miniwithlabels}[1]{%
\begin{tikzpicture}[baseline=(img.base)]
  \node[inner sep=0,outer sep=0] (img) {\includegraphics[width=\dimexpr\textwidth-6pt\relax]{#1}\hspace*{6pt}};
  \path[use as bounding box] (img.north west) rectangle (img.south east);
  \begin{scope}[overlay]
    % Y labels
    \node[rotate=90, anchor=east, font=\scriptsize]
          at ([xshift=0pt,yshift=15pt]img.west) {Samples};
    % X labels
    \node[anchor=north, font=\scriptsize]
          at ([yshift=5pt]img.south) {Score};
  \end{scope}
\end{tikzpicture}%
}
\newcommand{\miniwithlabelsShift}[2]{
\begin{tikzpicture}[baseline=(im.base)]
  % Add left inset and reduce size of plots
  \node[inner sep=0,outer sep=0] (im)
    {\hspace*{#2}\includegraphics[width=\dimexpr\textwidth-#2\relax]{#1}};
  \path[use as bounding box] (im.north west) rectangle (im.south east);
  \begin{scope}[overlay]
    % Labels anchored to the edge of the plots
    \node[rotate=90, anchor=east, font=\scriptsize]
          at ([xshift=#2,yshift=15pt]im.west) {Samples};
    \node[anchor=north, font=\scriptsize]
          at ([xshift=#2,yshift=5pt]im.south) {Score};
  \end{scope}
\end{tikzpicture}
}
\setlist[itemize]{noitemsep} % Make itemize lists more compact
\titleformat{\section}[block]{\large\bfseries}{\thesection}{1em}{\MakeUppercase}{} % Change the look of the section titles
\titlespacing*\section{0pt}{6pt}{5pt}
\titleformat{\subsection}[block]{\large\bfseries}{\thesubsection}{1em}{} % Change the look of the section titles
\titlespacing*\subsection{0pt}{6pt}{5pt}
\titleformat{\subsubsection}[runin]{\normalsize\itshape}{\thesubsubsection}{1em}{} \titlespacing*\subsubsection{0pt}{6pt}{5pt}
\newlength{\bibitemsep}\setlength{\bibitemsep}{.1\baselineskip plus .05\baselineskip minus .05\baselineskip}
\newlength{\bibparskip}\setlength{\bibparskip}{0pt}
\let\oldthebibliography\thebibliography
\renewcommand\thebibliography[1]{%
  \oldthebibliography{#1}%
  \setlength{\parskip}{\bibitemsep}%
  \setlength{\itemsep}{\bibparskip}%
}
\newtheorem{theorem}{Theorem}
\theoremstyle{definition}
\newtheorem{definition}{Definition}
\newtheorem{lemma}{Lemma} 
\newtheorem{assumption}{Assumption} 
\newtheorem{remark}{Remark}
\newtheorem*{customthm}{Theorem 2}
\title{Video and Language Alignment in 2D Systems for 3D Multi-object Scenes with Multi-Information Derivative-Free Control} % Article title
\author{%
\textsc{Jason Armitage} \\
\normalsize University of Zurich \\
\normalsize Switzerland
\and 
\textsc{Rico Sennrich} \\
\normalsize University of Zurich \\
\normalsize Switzerland
%\textsc{Rico Sennrich} \\
%\normalsize Department of Computational Linguistics \\ \normalsize University of Zurich \\
%\normalsize Switzerland
}
\date{} % Leave empty to omit a date
\begin{document}

% Print the title
\maketitle
\pagestyle{empty} % Remove page numbers
\thispagestyle{specialfooter}

\begin{abstract}
\vspace*{-.9em}
\noindent 
Cross-modal systems trained on 2D visual inputs are presented with a dimensional shift when processing 3D scenes. An in-scene camera bridges the dimensionality gap but requires learning a control module. We introduce a new method that improves multivariate mutual information estimates by regret minimisation with derivative-free optimisation. Our algorithm enables off-the-shelf cross-modal systems trained on 2D visual inputs to adapt online to object occlusions and differentiate features. The pairing of expressive measures and value-based optimisation assists control of an in-scene camera to learn directly from the noisy outputs of vision-language models. The resulting pipeline improves performance in cross-modal tasks on multi-object 3D scenes without resorting to pretraining or finetuning.
\end{abstract}

%\keywords{Machine Learning \and  Multimodal AI \and Video and Language Learning \and Cross-modal Tasks \and Multivariate mutual information \and 3D Computer Vision \and Information theory \and Zeroth-order optimisation.}

% Sections

% Section
\label{sec:intro}

Vision-language models (VLMs) trained on 2D visual inputs ingest 3D scenes in the form of sequences of discrete viewpoints. Systems rely on this method to perform tasks where samples at test include an additional dimension~\cite{wang2022clip, voigt-etal-2023-paparazzi, xue2024ulip, liu2024openshape}. Consider a visual sequence to be a set of views rendered from a camera moving over a 3D scene. In this case, the problem reduces to predicting in order the location of viewpoints over $x$-, $y$-, and $z$-axes. Given a system and a 3D reconstructed scene, we posit that an optimal sequence of 2D viewpoints exists, that is the sequence that - when paired with the respective linguistic representations - is most likely to return an accurate prediction from the VLM.

\begin{figure}[hbt!]
  \begin{tcolorbox}[
    colframe=black,
    colback=white,
    boxrule=1.0pt,
    width=\linewidth,
    left=0pt,
    right=0pt,
    top=0pt,
    bottom=0pt,
    arc=0pt
  ]
    \begin{minipage}{\linewidth}
      \centering
      \hspace*{-1.0em}%
      \adjustbox{%
        trim=10mm 0mm 8mm 0mm,clip,%
        width=\textwidth%
      }{%
        \includegraphics{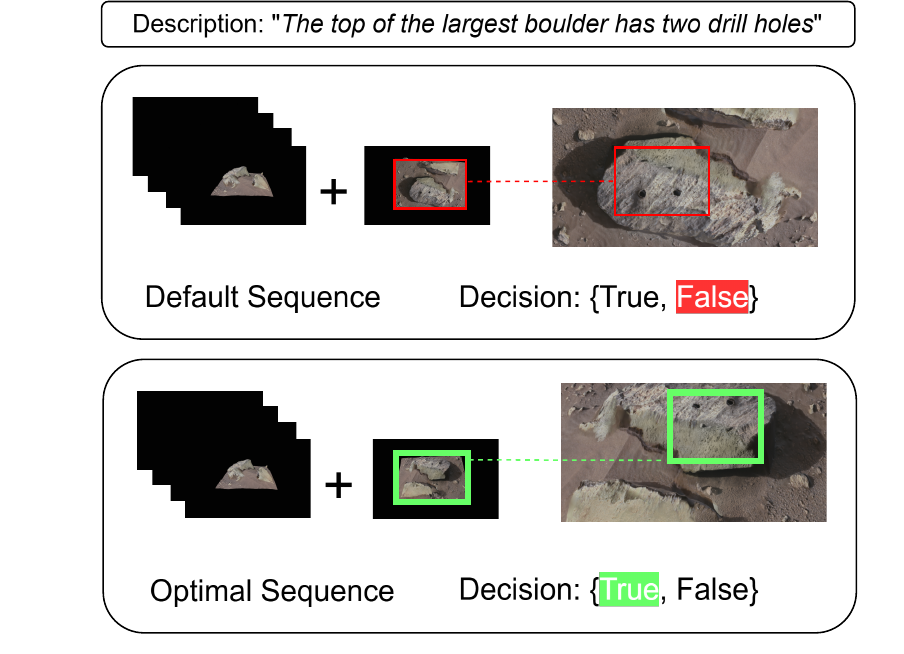}%
      }%
      \vspace{-1em}
      \caption{Textures of objects in 3D scenes vary in appearance depending on the position of the in-scene camera. Aligning a description with a scene is even harder when referenced objects belong to a group such as a single boulder in an outcrop on Mars. An optimal sequence of viewpoints improves the decisions of VLMs trained on 2D data where understanding a 3D reconstructed scene relies on a set of views.}
      \label{fig:front}
    \end{minipage}
  \end{tcolorbox}
\end{figure}

Why should we be concerned with the precise locations of the viewpoints presented to the VLM? Reasoning over the features of several objects at the same time is highly challenging for these systems \cite{linghu2024multi}. In the generation of 3D scenes, systems can expect with some confidence that the left door of a car will be the same colour as the right. The front view of a building is in contrast a less reliable indicator of the appearance of its roof or rear door. A high level of confidence in a VLM's assessments is of utmost importance if these systems are to be of use for experts performing scientific analysis. In planetary science, features such as a specific boulder's texture form the core evidence when performing geological analysis on visual data from a Mars rover (see Figure \ref{fig:front}). Object appearances vary depending on the lighting at the time of day, wind conditions, and the position of the cameras in relation to the objects~\cite{bell2022geological,paar2023three}. Rock surface details are highly variable: features of interest clearly visible on one side of a boulder may be covered by detritus on all other sides. 

In addition to these variances in 3D scenes, the internal dynamics of VLMs are also unknown. Generalisation is improved for tasks where the test samples are 2D by finetuning. In contrast, availability of cross-modal training data with 3D visual inputs is restricted~\cite{conficlrPooleJBM23}. This problem is acute in exploratory science where visual evidence is novel and costly to generate. Additional training is compromised in frontier research by limitations on domain-relevant 3D assets~\cite{hofmann2023record,barnes2018geological,yuan2024method}.

We propose a straightforward approach to predicting a VLM's errors and tune an in-scene controller guided by a measure that quantifies both the information presented in the scene and model uncertainty. The success of this approach will rely on the ability of our measure to express the complexity of the scene. A measure of one or two features alone is insufficient - all available variables should be accounted for. The estimated value of this metric should combine the information for all variables in a way that limits redundancies - or regret as defined in \cite{krichevsky1968relation} - to a minimum. To limit compute and data costs, we target a process that avoids differentiable training associated with auxiliary models or finetuning the VLM.

Information theory provides an efficient and principled framework in which to develop statistical approaches with provable bounds~\cite{kullback1997information}. We opt for a form of mutual information estimation over multiple variables (MI) as the basis for our metric. Assuming no access to the VLM's parameters at inference time, we apply a zeroth-order algorithm (ZO) to optimise the expressiveness of our MI measure by reducing the redundancies between variables~\cite{chen2017zoo,malladi2023fine,wollstadt2023rigorous} of the model's predictions. Our multi-information metric~\cite{studeny1987asymptotic} optimised with ZO algorithm (MI-ZO) overcomes the theoretical challenges of estimating MI measures over $n>2$~\cite{berrett2019efficient} mixed discrete and continuous~\cite{gao2017estimating} variables using active regret minimisation.  

Accurate and fast reasoning on 3D scenes is a key to unlocking the potential of integrating 2D VLMs into processes where automation complements human expertise. An acute challenge is presented by scenes populated with similar objects containing minor differences. In addition to forms of scientific analysis noted above, this challenge is evident in asset creation for 3D virtual environments. Examples include generating a set of buildings to populate a street and 3D representations of components in a particle accelerator~\cite{chen2022learning,peixoto2020getting,huang2022method}. A deficit of efficient methods to evaluate and perform rapid testing on 3D generations slows development and constrains adoption of systems~\cite{hofmann2023record}. New methods for assisting experts in planetary science and adjacent domains are also required as the volume of observations and data processing are on the rise \cite{nasa2024expanding,cuillandre2024euclid}. 

We formulate the problem of the dimensional shift presented by 3D scenes as adaptive control of the in-scene camera to return an optimal sequence of actions. In robotics, efficient methods from control theory are used to optimise the selection of sequences of views~\citep{swain1993promising,tallamraju2019active}. Here control of the camera is optimised by measuring the information of the visual scene in relation to the linguistic description. The result is a method for applications with complex 3D scenes that requires no access to the VLM's parameters or costly backpropagation~\cite{malladi2023fine}. Our research presents four contributions to improve VLMs trained on 2D data in analysis performed on 3D multi-object scenes:
\begin{itemize}
\item A new algorithm denoted as MI-ZO that estimates online multi-information with active regret minimisation on $n>2$ variables using zeroth-order optimisation. Our method reduces overlaps between a set of continuous and discrete variables representing cross-modal inputs with a  novel application of zeroth-order optimisation. A set of multivariate multi-information proposals is provided. Theoretical proofs for our method are added in Appendix~\ref{sec:app_proofs}.
\item A novel framework to apply adaptive control with multivariate information theoretic measures to predict the actions of an in-scene camera using noisy VLM outputs. Our controller is tailored to low-data settings with compute-efficient polynomial regression, least-squares approximation, and an interaction matrix. 
\item A diagnostic comprising 3D scenes of polygons and descriptions with two levels of complexity termed UC-3DS-MI (Uniform and Complex 3D Scenes for Multi-information) and a new custom measure of variance to demonstrate the relation of online feedback and active regret minimisation in estimating multivariate mutual information. 
\item Three new cross-modal benchmarks with 3D multi-object scenes (GeoProperties-3DS, FeatureID-3DS, and PartialView-3DS). We introduce sets of 3D scenes with language descriptions to evaluate methods for controlling an in-scene camera to enable a VLM system to analyse object properties, handle feature identification on constrained action counts, and adapt to visual occlusions.
\end{itemize}
Additional details are available on our project page at \url{https://mi-zo.github.io/mi-zo/}.

% Section

\section{Problem}
\label{sec:setup}
We reduce the challenge of promoting accurate and fast reasoning by VLMs on 3D scenes to predicting a sequence of camera actions on the scene that returns a correct response with the least number of views. To predict the optimal sequence of actions, we propose measuring the information capacity of our multimodal inputs. 

An optimal method will return accurate assessments of the information in the inputs at low cost. Information theoretic principles are a natural choice for these measurements: 
\begin{itemize}
\item Simple measurements on the visual and linguistic elements of the inputs can be obtained at negligible cost and serve as additional entropy sources.
\item Entropy is a fundamental logarithmic measure of information and a basis for the statistical theory of estimation~\cite{shannon1949mathematical, kullback1997information}. In cases with $n\text{=}2$ variables, entropy estimators have been proposed that provide theoretical guarantees of convergence at different scales of data - including in problems where sample sizes are small~\cite{kraskov2004estimating}. 
\item The problem is that mutual information provides no theoretical guarantees of optimality when the number of variables or individual measurements exceeds two. Short of perfect knowledge on inputs, a measurement with $n>2$ variables may be negative when some nonzero quantity of information is redundant~\cite{te1980multiple} invalidating our estimate. 
\end{itemize}

Our next step is to demonstrate what the problem of negative measurement in MI estimation and the resulting regret documented in theoretical work~\cite{krichevsky1968relation} means in practice. To make this relevant to 3D multi-object scenes, we have designed a novel diagnostic called UC-3DS-MI. 

\subsection{An Empirical Demonstration of Regret}
Our diagnostic UC-3DS-MI is a novel set of 3D scenes with varying numbers of objects and a range of features (see Appendix~\ref{sec:app_data}). The aim is to demonstrate how minimising regret can result in accurate expression of the information in both simple uniform and complex scenes. Complexity is defined in simple terms as the number of colours and sides of the simple polygons in sample scenes. In contrast quantifying complexity in relation to real-world objects is undermined by biases in the training data of a VLM~\cite{parashar2024neglected}. The International Space Station is a complex device with a non-uniform geometry, but trivial for large VLMs to handle as it appears frequently in datasets.

We next define the setup used for the UC-3DS-MI diagnostic and in our experiments (see Figure \ref{fig:benchmark_setup}). Prediction errors in VLM decisions and complexity estimates are collected for $n$ demonstrations ahead of starting the runs. During evaluation (see also Appendix~\ref{sec:app_data}), the sequence of camera actions in the measurement round contains default rotations. In the correction round, the controller predicts camera actions tuned on feedback from the demonstrations and measurement round. VLM decisions on scene summary questions in this second round are scored and averaged over all samples in the benchmark.

\begin{figure}[hbt!]
    \begin{tcolorbox}[colframe=black,colback=white,boxrule=1.0pt,width=\linewidth,
                      left=0pt,right=0pt,top=0pt,bottom=0pt,arc=0pt]
        \begin{minipage}{\linewidth}
            \raggedright
            \bf \small a) Measurement round
            \par
            \normalfont Run measurement on a default sequence of actions.
            \par
            \vspace{0.25em}
            
            \begin{minipage}{\linewidth}
                \centering
                \hspace{-1.0em}
                \includegraphics[width=\textwidth]{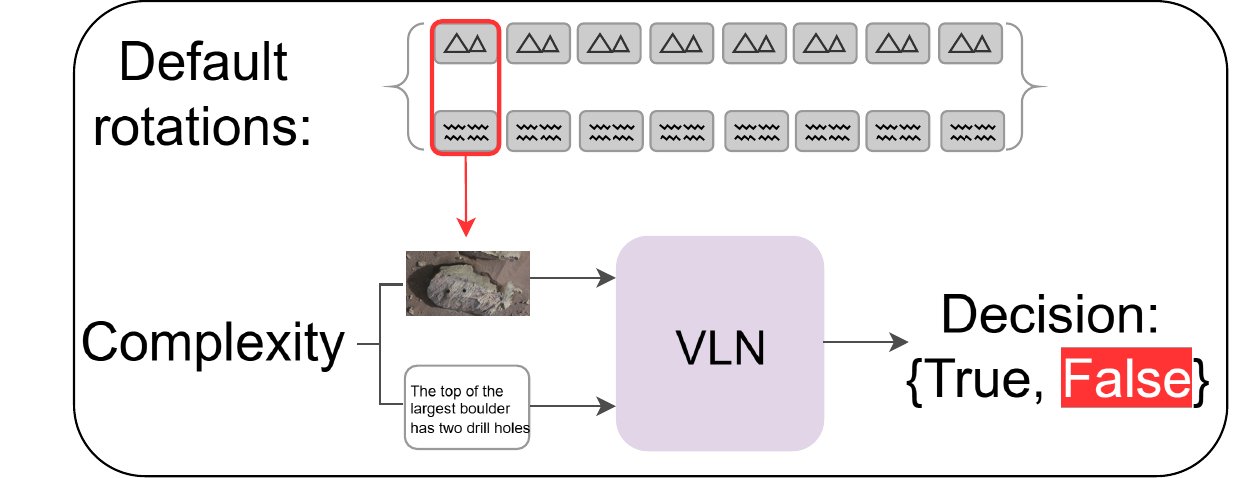}%
                \label{fig:am}
            \end{minipage}

            \vspace{0.25em}
            \bf \small b) Correction round
            \par
            \normalfont Run correction on a controller predicted sequence of actions.
            \par
            \vspace{0.25em}
            
            \begin{minipage}{\linewidth}
                \centering
                \hspace{-1.0em}
                \includegraphics[width=\textwidth]{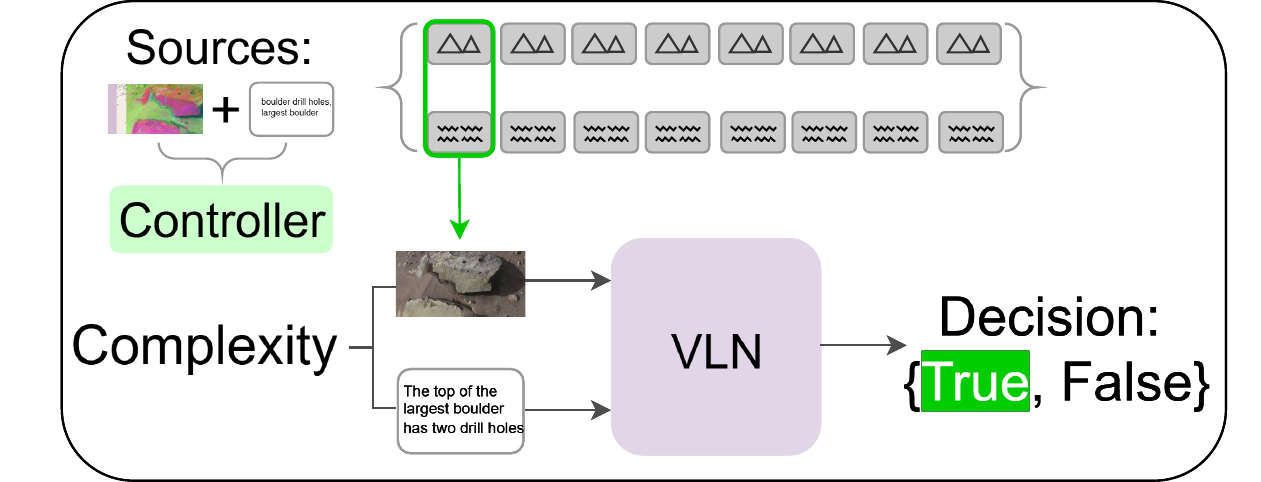}%
                \label{fig:bc}
            \end{minipage}

            \vspace{-0.75em}
            \caption{Evaluation consists of two rounds with a set number of viewpoints. In the 
            correction round, an in-scene camera controller uses multiple sources of information to predict a sequence of camera actions. In each round a VLM makes decisions based on viewpoints 
            and descriptions.}
            \label{fig:benchmark_setup}
        \end{minipage}
    \end{tcolorbox}
\end{figure}

In UC-3DS-MI, our analysis compares the impact of MI measurements for samples where the VLM system responses are correct and incorrect. We compute the means of multivariate measurements and correctness labels for each viewpoint to estimate the joint probability densities over both the full set of scenes - and the uniform and complex subsets. Two variants of MI metrics learned by our MI-ZO algorithm are assessed against bivariate and multivariate measures estimated by standard methods:

\begin{itemize}
\item $GO\text{-}LED\text{-}OL\textsubscript{ar}$. Regret minimisation over four inputs including global and local visual inputs from the CIELAB colour space.
\item $GH\text{-}LED\textsubscript{ar}$. Regret minimisation over three inputs including a global visual hue variable extracted from the HSV colour model.
\end{itemize}

Results for our information theoretic measurements are reported by variant and method in Figure~\ref{fig:hist_go_led_cv_all} (see also Appendix~\ref{sec:app_results} for additional results). Note in the figure that variants of the two metrics with active regret minimisation $MI\text{-}ar$ separate samples with correct and incorrect labels by learning the relation between information content in a scene and system error. The outcome with camera control is that the most advantageous view of the scene can be selected. Additional results from the diagnostic are provided in Appendix~\ref{sec:app_numerical}.

\begin{figure}[hbt!]
    \begin{tcolorbox}[
        colframe=black,colback=white,boxrule=1.0pt,width=\linewidth,
        left=0pt,right=0pt,top=0pt,bottom=0pt,arc=0pt
    ]
        \begin{minipage}{\linewidth}
            \raggedright
            \centering
            
            % 1) GO-LED-OL
            \underline{\bf \small GO-LED-OL}
            \begin{minipage}{\linewidth}
                \centering
                \begin{minipage}{0.45\textwidth}
                    \centering
                    % Shift left mini-plot by 6pt
                    \miniwithlabelsShift{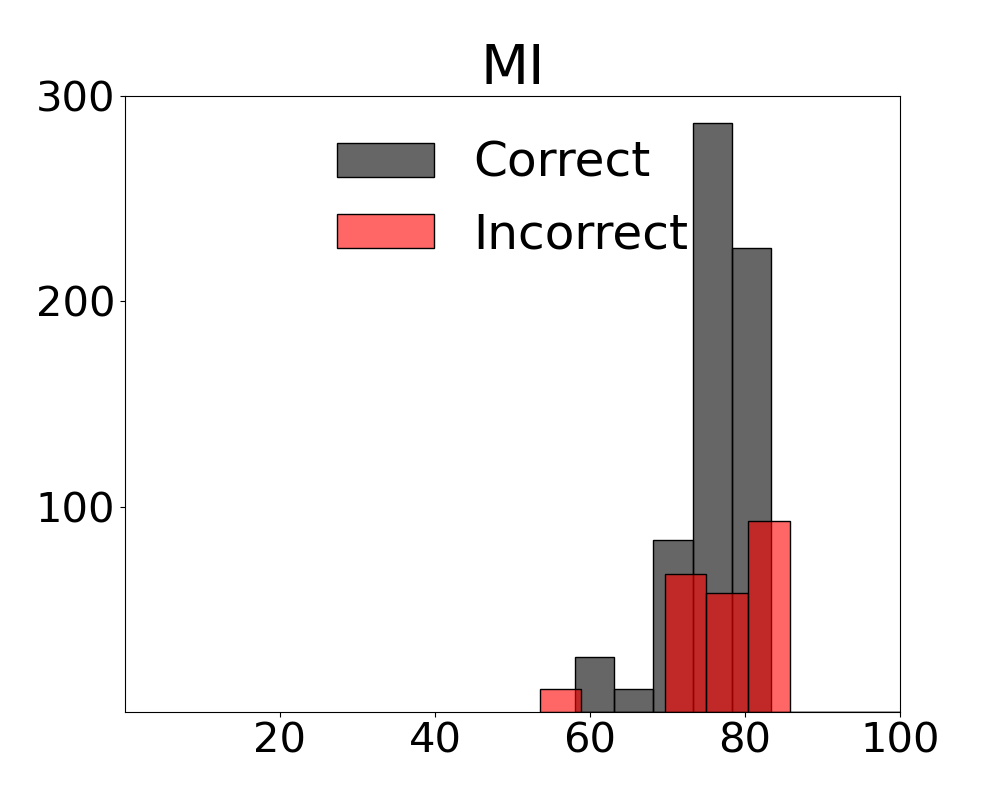}{6pt}
                    \label{fig:left1}
                \end{minipage}
                \hfill
                \begin{minipage}{0.45\textwidth}
                    \centering
                    \miniwithlabels{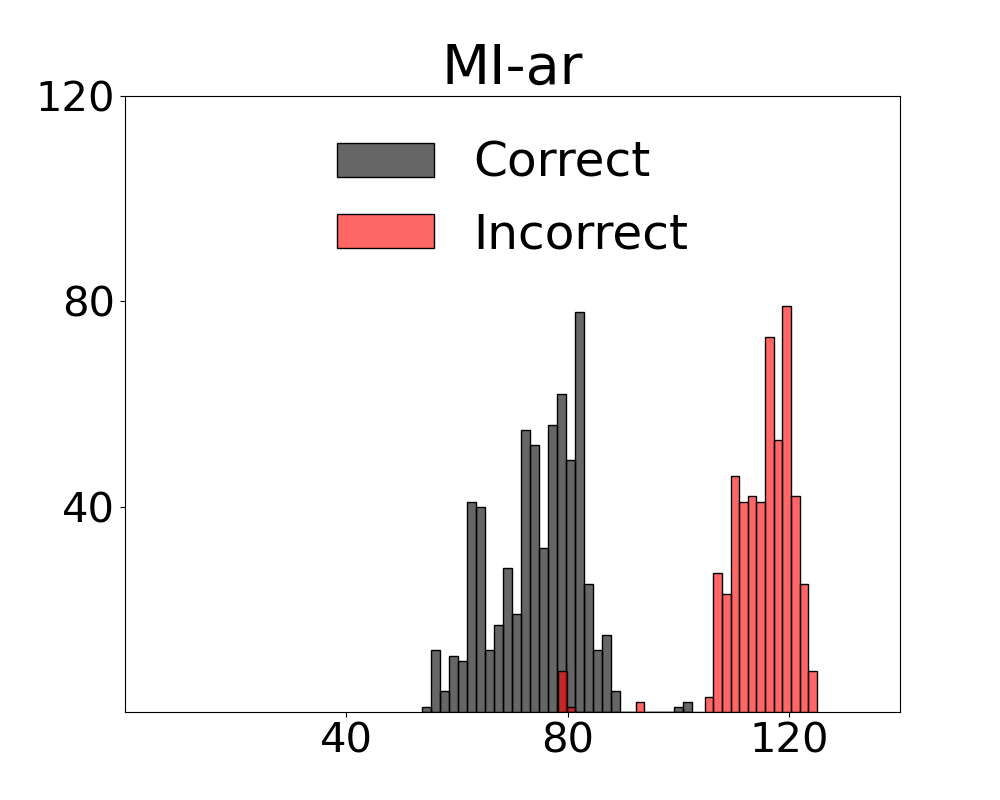}
                    \label{fig:right1}
                \end{minipage}
            \end{minipage}
            \vspace{1em}

            % 2) GH-LED
            \vspace{-2.0em}
            \underline{\bf \small GH-LED}
            \begin{minipage}{\linewidth}
                \centering
                \begin{minipage}{0.45\textwidth}
                    \centering
                    % Shift left mini-plot by 6pt
                    \miniwithlabelsShift{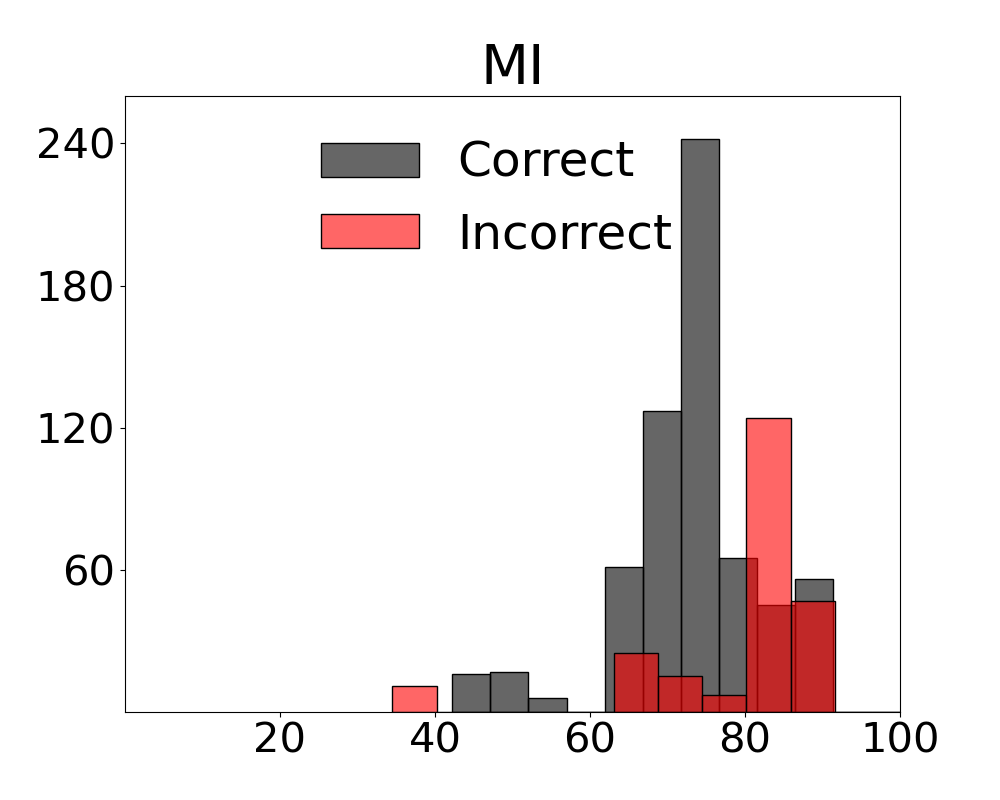}{6pt}
                    \label{fig:ghled_left1}
                \end{minipage}
                \hfill
                \begin{minipage}{0.45\textwidth}
                    \centering
                    \miniwithlabels{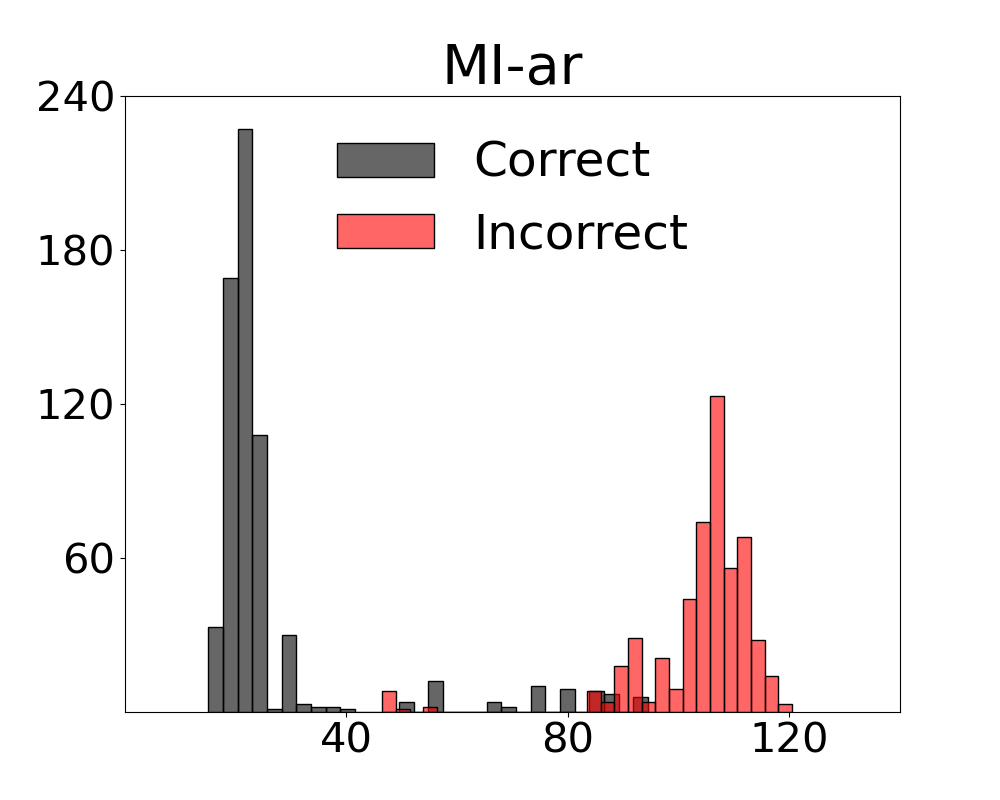}
                    \label{fig:ghled_right1}
                \end{minipage}
            \end{minipage}

            \vspace{-1.5em}
            \caption{Variants of our two multi-information metrics with active 
            regret minimisation ($MI\text{-}ar$) compare favourably with versions 
            calculated with standard methods (MI) on how the measures distribute scores in relation to the decisions of a VLM. $MI\text{-}ar$ groups the respective centres of mass for correctness decisions over separate ranges of the distributions.}
            \label{fig:hist_go_led_cv_all}
        \end{minipage}
    \end{tcolorbox}
\end{figure}

% Section

\section{Related Work}
Ahead of detailing our method to obtain expressive MI estimates, we present prior work in control theory for computer vision, measuring mutual information over multiple variables, and zeroth-order optimisation.

\textbf{Control Theoretic Algorithms in Computer Vision} Our work defines a controller that uses simple estimates of information content in the inputs for cross-modal tasks with 3D data to control an in-scene camera. Efficient control methods for tasks with 3D visual inputs form a canonical topic in robotics~\cite{christie2008camera,wiedemann2023training,gonultas2023system} but have only been explored in a handful of cases for tasks in 3D scenes. System identification - a standard method in control theory - is adapted by \cite{jaques2021vision} to calibrate a camera and estimate 3D poses of objects in scenes using videos. The same authors introduced system identification and control into deep learning models to learn physics from images \cite{jaques2020iclr}. System identification is the term used in control theory for the process of learning or improving a model with measurements~\cite{ljung1994modeling}. We also propose a method that fits the broad definition of system identification in using data to learn a system - in this case, to learn the system that leads to a 2D VLM incorrectly interpreting a 3D scene. Our approach differs in using efficient information theoretic measures to identify system errors. The result is that no pretraining or finetuning is used and only a handful of samples are required to run demonstrations.  

\textbf{Multivariate Mutual Information} Our proposal for measuring multi-information is grounded in prior work assessing the quantity of information in sets of entropy sources expressed in statistical measurements~\cite{watanabe1960information}. Seminal theoretical analysis on challenges in estimating mutual information on multiple variables includes \cite{te1980multiple} and \cite{berrett2019efficient}. \cite{mohammadi2018multivariate} propose a method to eliminate redundant mutual information in sets of variables ahead of learning. \cite{cabeli2021reliable} improve conditional mutual information estimation for mixed variable types designed for causal models~\cite{verny2017learning} by switching negative results to null values. \cite{ijcai17steeg} extended the hierarchical decomposition in the work of \cite{watanabe1960information} and earlier publications~\cite{ver2016information} for unsupervised representation learning in several domains. Our approach to measuring information content aligns closer to multi-information extending the bivariate case outlined by \cite{studeny1987asymptotic} to $n > 2$ variables. Sources in our setup are visual variables extracted from scenes.

\textbf{Zeroth-order Optimisation} Zeroth-order optimisation (ZO) methods minimise an objective function without resort to direct evaluation of the gradient. In this way, ZO dispenses with the computational costs of backpropagating through the graph. Methods that work directly with function values were an early area of investigation in minimising the cost in the form of a function~\citep{conn2009introduction}. \cite{malladi2023fine} modified the ZO-SGD process with a memory-efficient algorithm that finetunes LLMs with non-differentiable metrics. Efficient optimisation on function values motivated \cite{hoffman2022optimizing} to use the ZO-AdaMM~\cite{chen2019zo} algorithm to enhance molecule design. In our use case of combining entropies, ZO fits our requirements from an optimisation method in limiting computational operations and replacing access to model parameters with non-differentiable scores. Our ZO algorithm ensures MI expresses the information in inputs by reducing the redundancies between the variables. To our knowledge, the application of ZO in multi-information estimation is a novel proposal in optimisation.

% Section

\section{Multi-information Estimation with Zeroth-order Optimisation}
In this section, we specify a method for estimating multi-information with active regret minimisation that minimises redundancies in a set of entropy sources with a ZO algorithm (see Algorithm \hyperref[alg:algo1]{1}). We then outline the design of our controller and illustrate how it uses our MI measures to guide the camera in a 3D scene to optimal views. 

\subsection{Active Regret Minimisation}
\label{sub:mizo_method}
Our aim in designing a method to manage regret is to ensure that the addition of variables is always contributive to assessing if the VLM is correct or not. The measure for quantifying these relationships is MI. In MI-ZO, this is implemented by ensuring the addition of information is performed in line with the feedback in the initial demonstrations and after each round of inputs (see Algorithm \hyperref[alg:algo1]{1}). The redundancies between a set of variables are reduced by forming these into a weighted mixture distribution and the subsequent novel application of a ZO algorithm. 

\begin{algorithm}[hbt!]
\caption{MI-ZO}
\label{alg:algo1}
\begin{algorithmic}
\STATE {\bfseries Input:} Sources $H_n$, system responses $Y$, sequence of views $V$
\STATE {\bfseries initialise:} policy $\pi$, parameters $\theta^{mix}$
\FOR{$t$ in $\tau$}
    \STATE $Mixture^t(X) \leftarrow \sum_{n=1}^{N} \pi_n(\theta^{mix};\phi_t)\, H_n(v_t)$
    \STATE $S_t \leftarrow Mixture^t(X)\,\cdot\,\Lambda^t$
    \STATE $MI_t \leftarrow MI(S_t, Y)$
    \STATE $\theta \leftarrow \arg\max_{\theta}\ MI\!\big(S_t(\theta),\,Y\big)$
    \STATE $MI'_t \leftarrow \frac{1}{t}\sum_{k=1}^{t} MI_k,\quad MI'_0:=0$
    \STATE $Regret_t(\theta) \leftarrow MI'_{t-1} - MI\!\big(S_t(\theta),\,Y\big)$
    \STATE $Loss_t(\theta) \leftarrow -\,MI\!\big(S_t(\theta),\,Y\big)$
    \STATE $\theta^{mix} \leftarrow \arg\min_{\theta}\, Regret_t(\theta) = \arg\min_{\theta}\, Loss_t(\theta)$
    \FOR{i = 1, \ldots, N}    
        \STATE $Loss_{t,i}(\theta_i) \leftarrow -\,MI\!\big(S_t(\theta_i),\,Y\big)$
        \STATE $\theta_i \leftarrow \arg\min_{\theta_i}\, Loss_{t,i}(\theta_i)$
    \ENDFOR
    \STATE Maximise $d(\mathfrak{b}_{\mathfrak{u}}$, $\mathfrak{b}_{\mathfrak{d}}$) in $H_n$
\ENDFOR
\STATE {\bfseries return} $MI'_\tau$
\end{algorithmic}
\end{algorithm}

\textbf{Definitions} We start with a rendered view $v$ of a 3D scene at time $t$. An \textbf{entropy source} $H$ is a score for a view  and one of multiple component distributions in a \textbf{mixture distribution} $Mixture^t(X)$. The mixture distribution is a probability distribution over multiple entropy sources. By setting weights, $Mixture^t(X)$ provides a single score $S_t$ indicating if the \textbf{VLM's responses} $Y$ are correct or not. A \textbf{policy} $\pi$ is a set of values for the mixing weights on the sources. Mixing weights are further updated by \textbf{scaling factors} $\Lambda$ for textual inputs. The aim is to maximise MI and capture in full the information capacity in the inputs. A movement in the direction of maximum MI is described as \textbf{positive}.

\textbf{Sources} We describe the inputs constituting the sources of entropy. A viewpoint render is converted to a colour space and a series of intervals is constructed over $n$ axes where $n\text{=}2$ for the CIELAB colour space and the single hue axis is selected for the HSV colour model (see our ablation on colour spaces in Subsection \ref{subsection:ablate_cs}). These intervals are adjacent and enable multivariate estimation with mixed variable types~\cite{hall1993estimation}. A Sobel operator is passed over grayscale object masks segmented from the image to estimate local edge density ($LED$). MI variant $GH\text{-}LED$ is the product of global hue ($GH$) and local edge density histograms. $GO\text{-}LED\text{-}OL$ is composed of global AB axes ($GO$) with an additional series at object level on LAB axes ($OL$). Scaling factors $\Lambda$ are computed over counts of noun phrases and descriptor terms in the textual input.

\textbf{Weighted Mixture Distribution} Our selection of a mixture distribution for $Mixture(X)$ avoids integration when combining histograms for sources and measures of $\Lambda$ linguistic inputs. Sources are combined at time $t$ with values for each component weight $\theta^{mix}$ set by $\pi$ summing to 1:
\begin{equation}
Mixture^t(X)=\sum_{h=1}^{N} \pi_h(\theta^{mix}; \phi_t) \, H_h.
\end{equation} 

\newlength{\thumbheight}
\setlength{\thumbheight}{13mm} % <-- adjust: 12mm .. 16mm
\makeatletter
\renewcommand{\thumb}[1]{%
  \adjustbox{valign=m}{%
    \fboxsep=0pt\relax
    \fboxrule=0.3pt\relax
    \fbox{\includegraphics[height=\thumbheight,keepaspectratio]{#1}}%
  }%
}
\makeatother

\textbf{Optimising Feedback to Minimise Regret}
Assume the information of a constituent source is composed of units $\mathfrak{b}_u$ and $\mathfrak{b}_d$ in the set $\mathfrak{B}$ specified by the states $Up$ and $Down$. Orientation is defined as the contribution to MI where $Up$ is additive and $Down$ is reductive in relation to the product of the MI calculation. A reduction means that MI is failing to express a proportion of the information capacity in the inputs. Consider the similarities and differences in the visual sources in Table~\ref{table:infoset}. Our ZO algorithm is applied to eliminate the redundancies between these sources - in practical terms, to express only the differences that contribute to a correct assessment. At each step, signed changes to the mixing weights are initialised and only retained in the update if the running mean on MI increases. A regret minimisation process of this form is equivalent to an optimal mixture of unit states from each source. The goal is to identify a separating margin that minimises the loss on the distance of the closest units $\mathfrak{b}_{\mathfrak{u}}$ and $\mathfrak{b}_{\mathfrak{d}}$:
\begin{equation}
\text{Regret}
= \min_{\vec{w},\,bias}\,\{
    \max_{\mathfrak{b}_{\mathfrak{u}}\in\mathfrak{B},\;\mathfrak{b}_{\mathfrak{d}}\in\mathfrak{Bd}}%
    \bigl|\vec{w}^T\mathfrak{b}_{\mathfrak{u}} + bias - 
     (\vec{w}^T\mathfrak{b}_{\mathfrak{d}} + bias)\bigr|
  \}.
\end{equation}

The objective in our gradient-free optimisation step is to reduce the delta on losses for the current set of sources in relation to the mean over all sets accumulated to $i-1$ rounds. A finite difference approximates a set of parameters for each set of components $\phi$ in $Mixture(X)$. We minimise the finite difference between the two loss terms as follows 
\begin{multline}
f_t(\theta_t)
=
\left(
\frac{1}{t-1}\sum_{i=1}^{t-1} Loss(\hat{\theta}_{\phi_i}) - Loss(\hat{\theta}_{\phi_t})
\right)\,\vec{v}_t
\end{multline}
where $\vec{v}_t$ consists of multiple iterations $r$ $\{\vec{v}_{t,1},\ldots,\vec{v}_{t,r}\}$ averaged to get the single update $\vec{v}_t = \frac{1}{r}\sum_{j=1}^{r}\vec{v}_{t,j}$,

To demonstrate the ability of our algorithm to minimise regret at a rate that is constant with the quantity of feedback available, we provide scores for MI metrics on the UC-3DS-MI diagnostic in Table~\ref{table:infoset} in three levels of feedback setting. Note that performance improves in line with the number of variables added and only variants with active regret minimisation benefit from feedback. Theoretical analysis for our method is provided next and detailed in Appendix~\ref{sec:app_proofs}.

\subsection{Outline of the Theoretical Analysis}
We begin our theoretical analysis by defining multi-information MI and specifying the condition observed in the numerical analysis where - precluding access to advance information on all outcomes - the respective densities for uniform and complex inputs tend toward equivalent means. If the definition of Shannon entropy is adhered to, there is no guarantee of returning a positive value in all cases~\cite{cover2006elements} (see Lemma 1 and the subsequent proof in Appendix~\ref{sec:app_proofs}). 

Our aim in the remainder of the analysis is to describe the conditions where a function $g$ acts on an estimate of $MI(x, y)$ such that the reductive contribution of $\mathfrak{b}_{\mathfrak{d}}$ is bounded as $\leq$ $g\bigl(\vec{w}, \alpha, \gamma\bigr)$, where $\vec{w}$ is the weight vector normal to a separating hyperplane, $\alpha$ is the vector representing observed information, and $\gamma$ is the margin between additive $\mathfrak{b}_{\mathfrak{u}}$ and reductive $\mathfrak{b}_{\mathfrak{d}}$ units. The subsequent analysis proposes that if the hyperplane is selected and updated online using regret minimisation as defined in the previous subsection, then the negative contribution to the mutual information will remain bounded. 

We proceed after stating Theorem 1 and the initial definitions to prove the following:
\begin{customthm}[Function to maximise the margin]
For any $X$, an optimisation process $f(x) = \vec{w}^T x + b$ exists to solve the minimax form of deriving the margin using the hyperplane in $\mathbb{R}^D$ that maximises the distance between $\mathfrak{b}_{\mathfrak{u}}$ and $\mathfrak{b}_{\mathfrak{d}}$:
\begin{equation}
\max_{\mathfrak{b}_{\mathfrak{d}} \in \mathfrak{Bd}} \min_{\mathfrak{b}_{\mathfrak{u}} \in \mathfrak{Bu}} f(\mathfrak{b}_{\mathfrak{u}}, \mathfrak{b}_{\mathfrak{d}}) = \min_{\mathfrak{b}_{\mathfrak{d}} \in \mathfrak{Bd}} \max_{\mathfrak{b}_{\mathfrak{u}} \in \mathfrak{Bu}} f(\mathfrak{b}_{\mathfrak{d}}, \mathfrak{b}_{\mathfrak{u}}).
\end{equation}
\end{customthm}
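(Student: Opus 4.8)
The plan is to read the asserted equality as a minimax (saddle-point) identity for an affine payoff and to discharge it with a classical minimax theorem after a convexification step. First I would make the payoff explicit: for a fixed hyperplane $(\vec{w}, b)$ the separation functional $f(\mathfrak{b}_{\mathfrak{u}}, \mathfrak{b}_{\mathfrak{d}}) = (\vec{w}^T \mathfrak{b}_{\mathfrak{u}} + b) - (\vec{w}^T \mathfrak{b}_{\mathfrak{d}} + b) = \vec{w}^T(\mathfrak{b}_{\mathfrak{u}} - \mathfrak{b}_{\mathfrak{d}})$ is affine, hence simultaneously concave and convex, in each of its two unit arguments separately, and the two orderings in the statement refer to the two iterated optima of this single margin functional. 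This structural fact drives everything: affinity in $\mathfrak{b}_{\mathfrak{u}}$ with $\mathfrak{b}_{\mathfrak{d}}$ held fixed, and conversely, makes $f$ trivially quasi-concave in the first argument and quasi-convex in the second, which is precisely the regularity a minimax theorem demands.

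Second, I would record the trivial direction. Weak duality always yields $\max_{\mathfrak{b}_{\mathfrak{d}}} \min_{\mathfrak{b}_{\mathfrak{u}}} f \le \min_{\mathfrak{b}_{\mathfrak{d}}} \max_{\mathfrak{b}_{\mathfrak{u}}} f$, with no hypotheses beyond existence of the extrema, since for every fixed pair the inner minimum never exceeds the inner maximum. The content of the theorem is therefore the reverse inequality, equivalently the existence of a saddle point, which is exactly the optimal separating hyperplane the statement promises.

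Third, I would obtain the reverse inequality by Sion's minimax theorem (von Neumann's suffices in this bilinear case), whose hypotheses are convexity of the two strategy domains, compactness of at least one, and the quasiconcave/quasiconvex structure already verified. Because the unit sets $\mathfrak{Bu}$ and $\mathfrak{Bd}$ are finite collections, I would replace them by their convex hulls $\mathrm{conv}(\mathfrak{Bu})$ and $\mathrm{conv}(\mathfrak{Bd})$, the mixed-strategy polytopes, which are compact and convex in $\mathbb{R}^D$, and extend $f$ by linearity (expectation under the mixing weights); the extension stays affine and continuous. Sion's theorem then gives equality of the two iterated optima over the hulls. Since an affine function attains its extrema over a polytope at a vertex, each outer and inner optimum over a hull is realised at an original unit, so the equality descends from the convexified game back to the discrete sets of the statement. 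The saddle pair $(\mathfrak{b}_{\mathfrak{u}}^\star, \mathfrak{b}_{\mathfrak{d}}^\star)$ produced this way is the pair of closest opposing units, and the hyperplane $f(x) = \vec{w}^T x + b$ bisecting them is the margin maximiser linking this result to the regret objective of Subsection~\ref{sub:mizo_method}.

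The main obstacle I anticipate is the convexification step rather than the invocation of Sion's theorem itself: I must argue carefully that passing to convex hulls leaves the value of each iterated optimum unchanged, so that solving the relaxed game solves the original, and I must guard against degeneracy when the additive and reductive units are not linearly separable, where the margin collapses and the saddle value can vanish. Handling that boundary case, either through the separability implicit in the regret-minimisation setup or by admitting a soft margin, is where the real care is needed; once compactness and affinity are in place, the order swap of $\max$ and $\min$ follows automatically.
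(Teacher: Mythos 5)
Your route is genuinely different from the paper's, and it contains one step that does not go through. The paper never argues via Sion/von Neumann over convexified pure-strategy sets: it writes the margin problem as the constrained program ``maximise $t$ subject to $\vec{w}^T x_i - bias \geq t$, $\vec{w}^T y_i - bias \leq -t$, $\|\vec{w}\|_2 \leq 1$'', forms the Lagrangian with multipliers $u_i$, $v_i$, $\lambda$, and passes to the dual, whose feasible set $\sum_i u_i = \tfrac12$, $\sum_i v_i = \tfrac12$, $u_i, v_i \geq 0$ is exactly the (scaled) pair of mixing simplices you introduce by hand; it then bounds the dual value below by $D\cdot\mathrm{eigmin}(Q)$ for the quadratic-form matrix $Q$. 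So your mixed-strategy polytopes are the paper's dual variables in disguise, but the paper stays in that convexified/dual world throughout and only ever asserts weak duality plus a lower bound, whereas you try to return to the discrete unit sets.

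That return step is the genuine gap. For fixed $\mathfrak{b}_{\mathfrak{d}}$ the inner optimum over $\mathrm{conv}(\mathfrak{Bu})$ is indeed attained at a vertex, because $f$ is affine in $\mathfrak{b}_{\mathfrak{u}}$. But the outer optimisation is applied to the value function $g(\mathfrak{b}_{\mathfrak{d}}) = \min_{\mathfrak{b}_{\mathfrak{u}}} f$, which is a pointwise minimum of affine functions and hence concave, not affine; a concave function maximised over a polytope is generically attained in the interior, not at a vertex. Consequently the saddle value of the convexified game need not equal either iterated optimum over the original finite sets --- matching pennies is the standard counterexample, with mixed value $0$ but pure $\max\min = -1$ and pure $\min\max = +1$. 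Your own caveat about ``the convexification step'' points at the right place, but the sentence ``each outer and inner optimum over a hull is realised at an original unit'' is where the argument breaks: it is true for the inner optimum only. To repair the proof you must either (i) keep the statement at the level of the convex hulls / dual multipliers, as the paper implicitly does, or (ii) add a structural hypothesis on $\mathfrak{Bu}$ and $\mathfrak{Bd}$ (e.g.\ strict linear separability with the closest pair realised at extreme points) under which a pure saddle point exists. Your separate worry about the degenerate non-separable case is well taken but secondary to this.
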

Our proof defines an optimisation function $f(x) = \vec{w}^T x + b$ that characterises the separating hyperplane in $\mathbb{R}^D$. By converting the primal optimisation problem into a dual form~\cite{rockafellar1974conjugate}, we obtain an expression for the margin $\gamma = \frac{2}{\|\vec{w}\|}$. The analysis is then extended to the online setting where the solution $\omega_t$ is a projection to the nearest point in the convex set of variable values $C$, which is updated by information $\alpha_t$ and the gradient $\nabla$ of the loss $L$ scaled by step size $\eta_t$:
\begin{equation}
\omega^t \leftarrow \operatorname{proj}_C\Bigl( \omega^{t-1} - \eta_t \cdot \alpha_t \cdot \nabla_{\omega} L(\omega^{t-1}) \Bigr).
\end{equation}
Regret is dependent on the interaction of the margin and the inner product $\langle \alpha, \vec{w} \rangle$. Analysis of this relation confirms that the selected hyperplane determined by our function maximises the separation between unit types and guarantees that the negative impact of $\mathfrak{b}_{\mathfrak{d}}$ is bounded.

% Figure
\begin{table}[hbt!]
\centering
% Remove the outer padding so there's no apparent right "inset"
\begin{tabularx}{\linewidth}{@{} l Y S S S @{}}
\toprule
\textbf{Group} & \textbf{Sources} & \multicolumn{3}{c}{\textbf{Feedback setting}} \\
 &  & \textbf{Full} & \textbf{50\%} & \textbf{20\%} \\
\midrule

\multicolumn{5}{@{}l}{\textbf{Single visual input}} \\
\addlinespace[2pt]
\hspace{1mm} - OL
& \thumbstrip{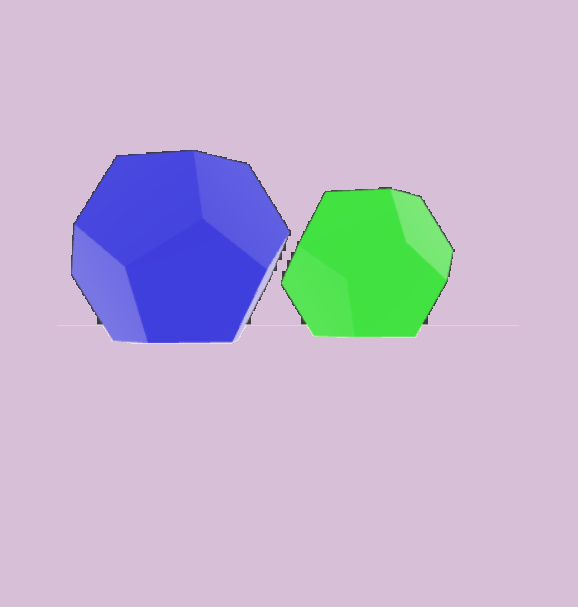}
& 0.37 & 0.37 & 0.37 \\

\addlinespace[2pt]
\hspace{1mm} - LED
& \thumbstrip{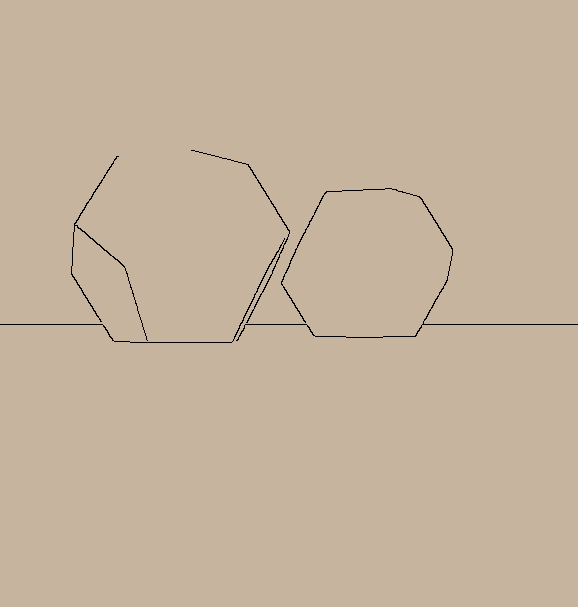}
& 0.42 & 0.42 & 0.42 \\

\addlinespace[2pt]
\hspace{1mm} - GO
& \thumbstrip{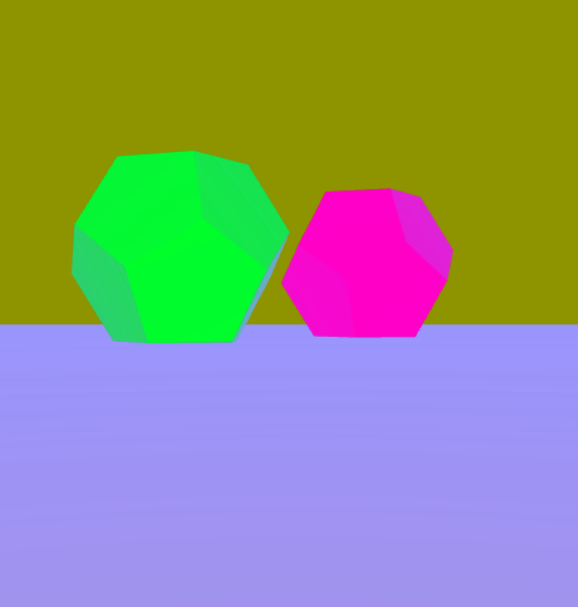}
& 0.46 & 0.44 & 0.45 \\
\cmidrule{1-5}

\multicolumn{5}{@{}l}{\textbf{Multivariate}} \\
\addlinespace[2pt]
\hspace{1mm} - GO-LED-OL (no ar)
& \thumbstrip{figs/scn_022_zm_4_front_score_global_lab.png,figs/scn_022_zm_4_front_led_hist.png,figs/scn_022_zm_4_front_ol_repr.png}
& 0.38 & 0.34 & 0.38 \\

\addlinespace[2pt]
\hspace{1mm} - GO-OL\textsubscript{ar}
& \thumbstrip{figs/scn_022_zm_4_front_score_global_lab.png,figs/scn_022_zm_4_front_ol_repr.png}
& 0.76 & 0.65 & 0.60 \\

\addlinespace[2pt]
\hspace{1mm} - GO-LED\textsubscript{ar}
& \thumbstrip{figs/scn_022_zm_4_front_score_global_lab.png,figs/scn_022_zm_4_front_led_hist.png}
& 0.78 & 0.68 & 0.60 \\
\cmidrule{1-5}

\multicolumn{5}{@{}l}{\textbf{Proposed metric}} \\
\addlinespace[2pt]
\hspace{1mm} - GO-LED-OL\textsubscript{ar}
& \thumbstrip{figs/scn_022_zm_4_front_score_global_lab.png,figs/scn_022_zm_4_front_led_hist.png,figs/scn_022_zm_4_front_ol_repr.png}
& 0.83 & 0.74 & 0.69 \\
\bottomrule
\end{tabularx}
\caption{Our active regret ($ar$) measure $GO\text{-}LED\text{-}OL\textsubscript{ar}$ is assessed against metrics with less inputs and a variant calculated with no $ar$. Scores for feedback setting demonstrate the impact when metrics with $ar$ are provided continuous feedback on $y$ correctness labels. Decomposition over variables indicates more inputs contribute to the sensitivity of multivariate metrics with $ar$ and demonstrate the benefits of minimising regret in securing additive effects from new sources when $n>2$. Feedback on correctness provides no guaranteed improvements in univariate or multivariate measures without an active process for minimising regret.}
\label{table:infoset}
\end{table}

\subsection{Controller}
Our controller consists of a chain of functions to predict camera actions $\mathfrak{a}$ for $n>1$ conversation rounds. We combine sample efficient data filters to exploit the information capacity in MI measurements estimated by the MI-ZO algorithm detailed below. A Central Unit predicts errors and confidence scores on axis-level traces and scores from two Component Models. View-level data updates a low dimensional representation of the 3D space in the form of an interaction matrix updated with a strong product at each step. A full specification of the controller is presented in Appendix~\ref{sec:app_method}.

% Section

\section{Evaluation on Benchmarks}
\label{experiments}

\subsection{Experiment Design}
We implement a controller framework for empirical testing of multi-information and optimal control methods with open source VLMs. Evaluation on closed systems is precluded to limit the likelihood of data contamination invalidating subsequent benchmarking~\cite{xu2024benchmark}. Our three benchmarks are designed to identify methods that predict the optimal sequence of viewpoints for appraising 3D scenes. Experiments in both benchmarks present a video-to-text system with a render of a viewpoint of the 3D scene and a description. Input pairs form turns in a conversation where state is accumulated to present the VLM with a complete view of the 3D scene. At the end of the sequence, the VLM assesses the state in relation to the scene and returns a boolean label $\{True, False\}$. 

An initial measurement round uses a camera initialised at the nearest of four distances to the scene and rotated to cardinal viewpoints by a default sequence of actions. In the correction round, the controller predicts camera actions based on data from the prior round and data for demonstrations on $n$ scenes run prior to the start of the evaluation (see Appendix~\ref{sec:app_experiments}). Demonstrations provide the controller with feedback in the form of the prediction errors made by the VLM.

\subsection{Experiment Settings}
\textbf{Systems:} Testing is performed with Video-LLaMA-13B~\cite{zhang-etal-2023-video} and Chat-UniVi-13B~\cite{jin2024chat} as the VLM baselines. Video-LLaMA-13B runs with Llama 2~\cite{touvron2023llama} for decoding and the number of frames defined by camera action count. Conversation hyperparameter settings are as follows: temperature $= 1.0$, beams $= 2$, repetition penalty $= 1.1$. Chat-UniVi-13B uses Vicuna-13B v1.5~\cite{NEURIPS2023_91f18a12} and hyperparameter settings: temperature $= 0.2$, beams $= 1$. \textbf{Methods Assessed} Controller performance using polynomial regression is assessed with standard MI and MI-ZO derived $MI\text{-}ar$ variants. Tests are conducted with two canonical control methods~\cite{johnson2005pid,ljung1979asymptotic}, a linear layer optimised with stochastic gradient descent (SGD), and a Radial Basis Function (RBF) network~\cite{orr1996introduction}. \textbf{Camera:} A single camera is added to the 3D scene at $x-, y-, -z-$ coordinates $(0, 0.5, -40)$ and pointed at the origin in all viewpoints. Initial viewpoint is front and a single rotation about the $x-$axis places the camera equal to $\pm 90^\circ$ to the left or right from its current position. Rotation about the $y-$axis is $\pm 45^\circ$ and constrained to the front or back as starting positions. Zoom operations are rotations on the $z-$axis performed in increments of $\pm 5$ in the range $[-10, -25]$ forward or backward to the origin. \textbf{Metrics:} We use balanced error rate (BER) to assess the performance on multi-object scenes for scientific analysis in our first benchmark~\cite{Zhao2020Conditional}. A mean is calculated over the False Positive Rate (FPR) and False Negative Rate (FNR) as $\frac12(FPR + FNR)$ where
\[
\mathrm{FPR} = \frac{\mathrm{FP}}{\mathrm{FP} + \mathrm{TN}}, 
\quad
\mathrm{FNR} = \frac{\mathrm{FN}}{\mathrm{FN} + \mathrm{TP}}.
\]

Test performance on our two evaluations on virtual environments is measured with mean accuracy $Acc$ on VLM decisions $Dec$ in the scene summary question $SQ$ and calculated over all scenes: 
\[
\text{Acc}_{\text{SQ}} = \left( 100 \times \sum_{i=1}^N \frac{\text{Dec}_{\text{Correct},i}^{\text{SQ}}}{\text{Dec}_{\text{Correct},i}^{\text{SQ}} + \text{Dec}_{\text{Incorrect},i}^{\text{SQ}}} \right).
\]

\begin{table}[hbt!]
\begin{center}
\begin{threeparttable}
\setlength{\tabcolsep}{0pt}
\begin{adjustbox}{max width=\columnwidth}
\begin{tabular}{l l l l l}
\hline
                                & \multicolumn{2}{l }{\textbf{Video-LLaMA-13B}} & \multicolumn{2}{l}{\textbf{Chat-UniVi-13B}} \\ \cline{2-5}
                                & \textbf{BER@8$\downarrow$}    & \textbf{$\Delta$ on R1}   & \textbf{BER@8$\downarrow$}    & \textbf{$\Delta$ on R1}   \\ \hline
\textbf{VLM (no control)}                   & 59.8 & -1.3 & 61.3 & -1.7   \\
\textbf{PID}                   & 57.2 & -4.6 & 58.5 & -3.6   \\ 
\textbf{Linear+SGD}          & 55.8 & -6.0 & 56.1 & -6.7   \\ 
\textbf{RBF Network}                   & 55.1 & -5.8 & 57.3 & -5.3   \\ 
\textbf{Extended Kalman}       & 54.5 & -6.5 & 55.5 & -6.1   \\
\cmidrule{1-5}
\textbf{Poly+ZO+MI (ours)} & 53.8 & -7.1 & 54.7 & -7.0   \\ 
\textbf{+GO-LED-OL}            & 48.4 & -13.9 & 50.3 & -12.7   \\ 
\textbf{+GH-LED}                & 47.7 & -14.0 & 49.2 & -13.0   \\ 
\textbf{+GO-LED-OL\_ar}            & 41.3 & -18.9 & 43.1 & -18.4   \\ 
\textbf{+GH-LED\_ar}            & \textbf{40.4} & -21.0 & \textbf{42.2} & -20.4   \\ \hline
\end{tabular}
\end{adjustbox}
\caption{Results for our benchmark on reasoning over visual properties to perform scientific analysis. The new GeoProperties-3DS benchmark compares control methods to variants of our $GO\text{-}LED\text{-}OL$ and $GH\text{-}LED$ metrics in reducing errors when reasoning over the properties of objects in 3D scenes. Versions incorporating active regret minimisation $MI\text{-}ar$ close to double the performance of a VLM with no control. Performance is measured using balanced error rate (BER).}
\label{table:geo_ber}
\end{threeparttable}
\end{center}
\end{table}

\subsection{Cross-Modal Reasoning on Visual Properties}
\textbf{Benchmark:} We introduce GeoProperties-3DS to evaluate control methods in reducing error rates of VLMs on 3D scenes for geological analysis in planetary science. A collection consists of five scenes with a single textual summary where only one scene matches the description. Scenes with a small set of rocks are extracted from large 3D models \cite{bell2022geological,owl_creek}. These models are reconstructions from visual data collected by scientific instruments installed on Mars Rovers~\cite{bell2021mars, paar2023three}. The textual summary refers to the physical properties of formations and outcrops. The objective for camera control algorithms is to assist the VLM in maximising TPR and minimising FPR. \textbf{Results:} Our controller with $GO\text{-}LED\text{-}OL\textsubscript{ar}$ reduces the BER on the GeoProperties-3DS benchmark (see the results in Table \ref{table:geo_ber}). Performance on all runs where Poly+ZO+MI is enhanced with our MI measurements exceeds the reduction in errors attained by the highest ranked standard control method. As detailed in Appendix~\ref{sec:app_results}, variations in results between runs relate to noise in VLM responses and stochastic operations in calculations performed by methods.

% Figure
\begingroup
\renewcommand{\arraystretch}{1.2}
\begin{table}[ht]
\centering
\begin{tabularx}{\linewidth}{l Y S S}
\toprule
\textbf{Group} & \textbf{Sources} & \textbf{BER@8$\downarrow$} & \textbf{$\Delta$ on R1} \\ \hline
\addlinespace[2pt]
J\textsubscript{z}a\textsubscript{z}b\textsubscript{z} & \thumbstrip{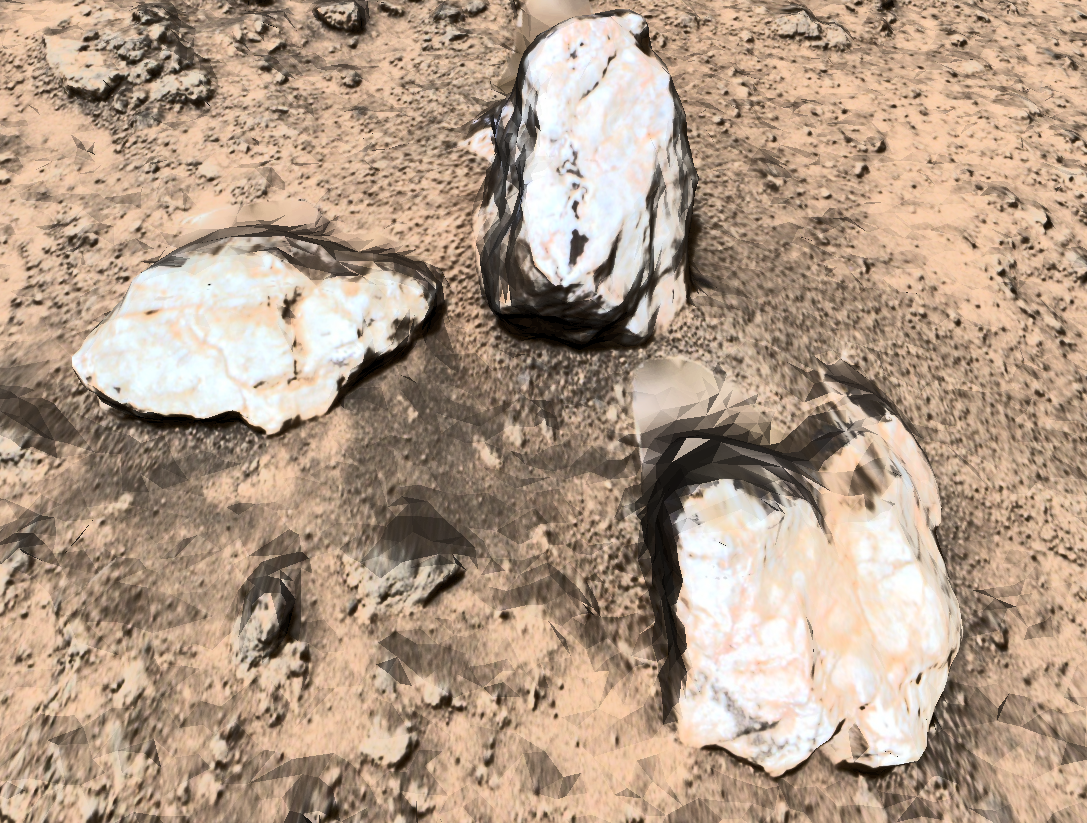,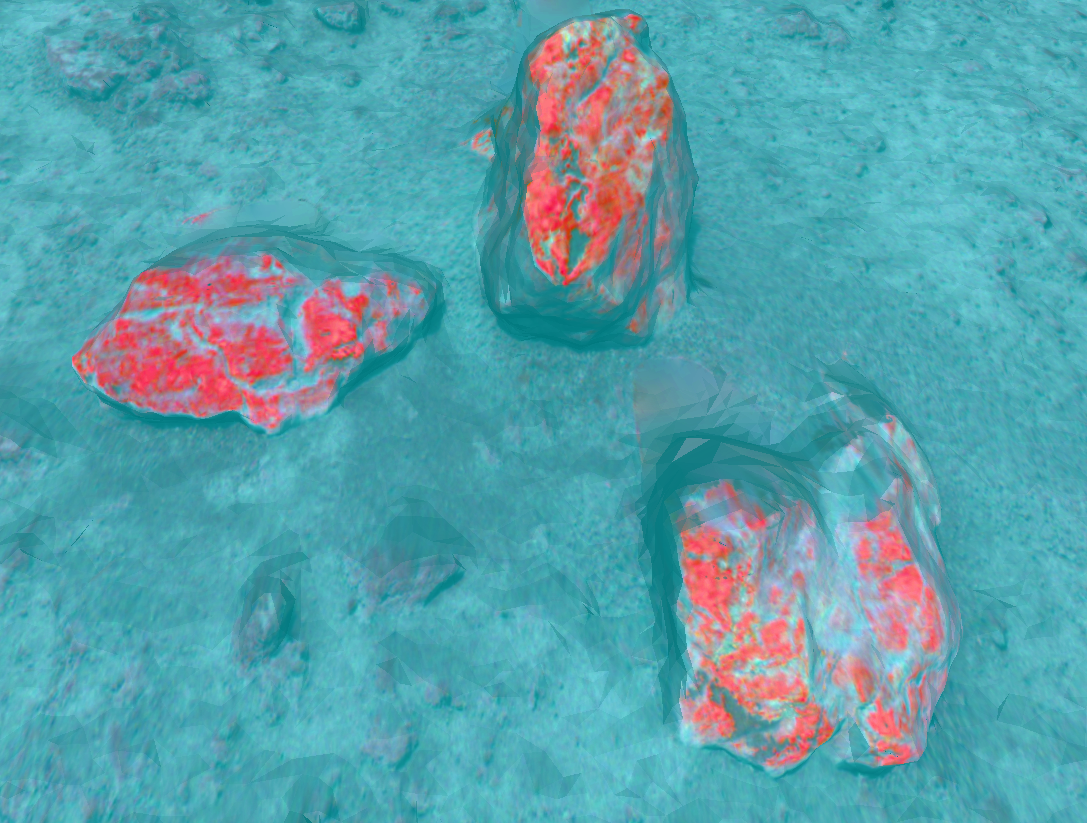} & 51.6 & -10.8 \\
\addlinespace[2pt]
Oklab & \thumbstrip{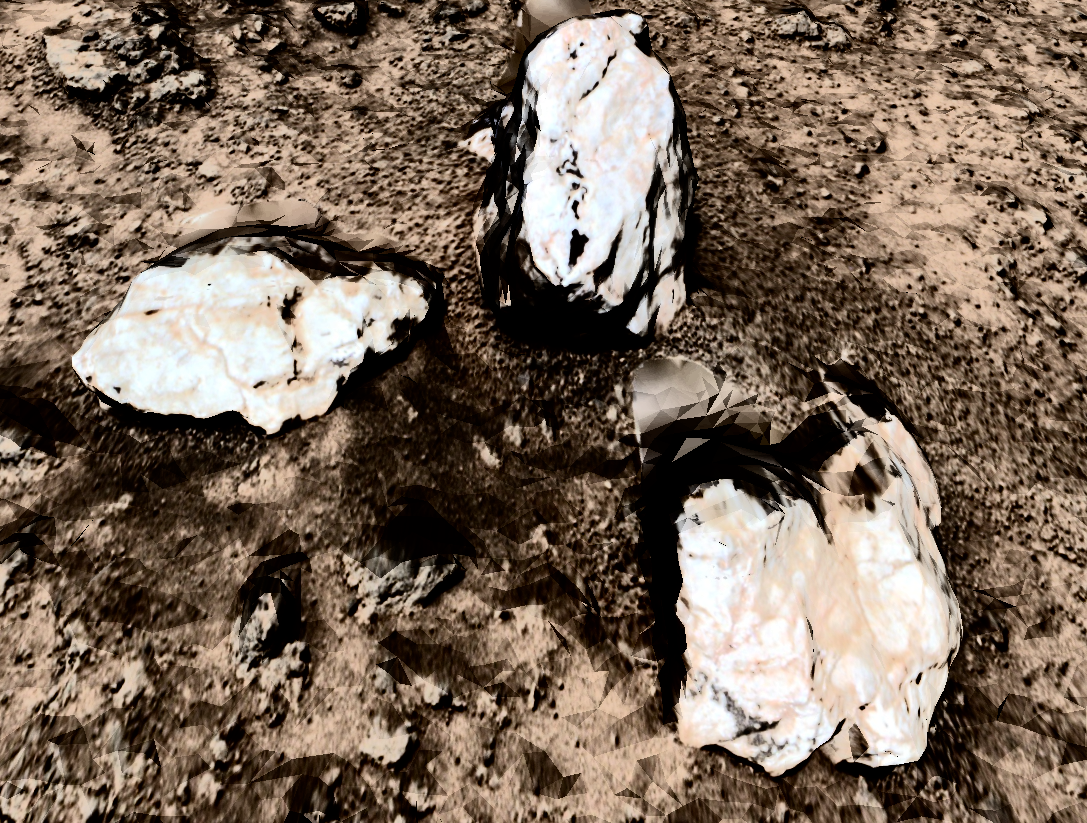,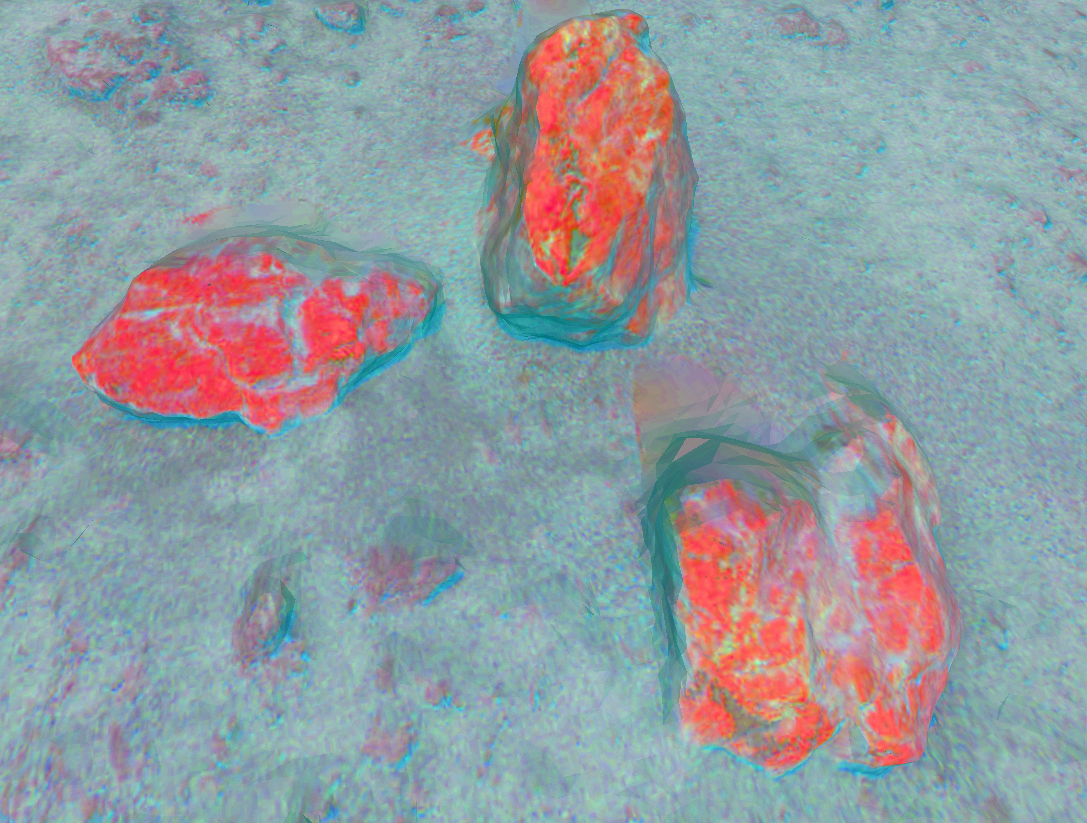} & 49.3 & -12.6 \\
\addlinespace[2pt]
GO-LED-OL\textsubscript{ar}\text{+}a & \thumbstrip{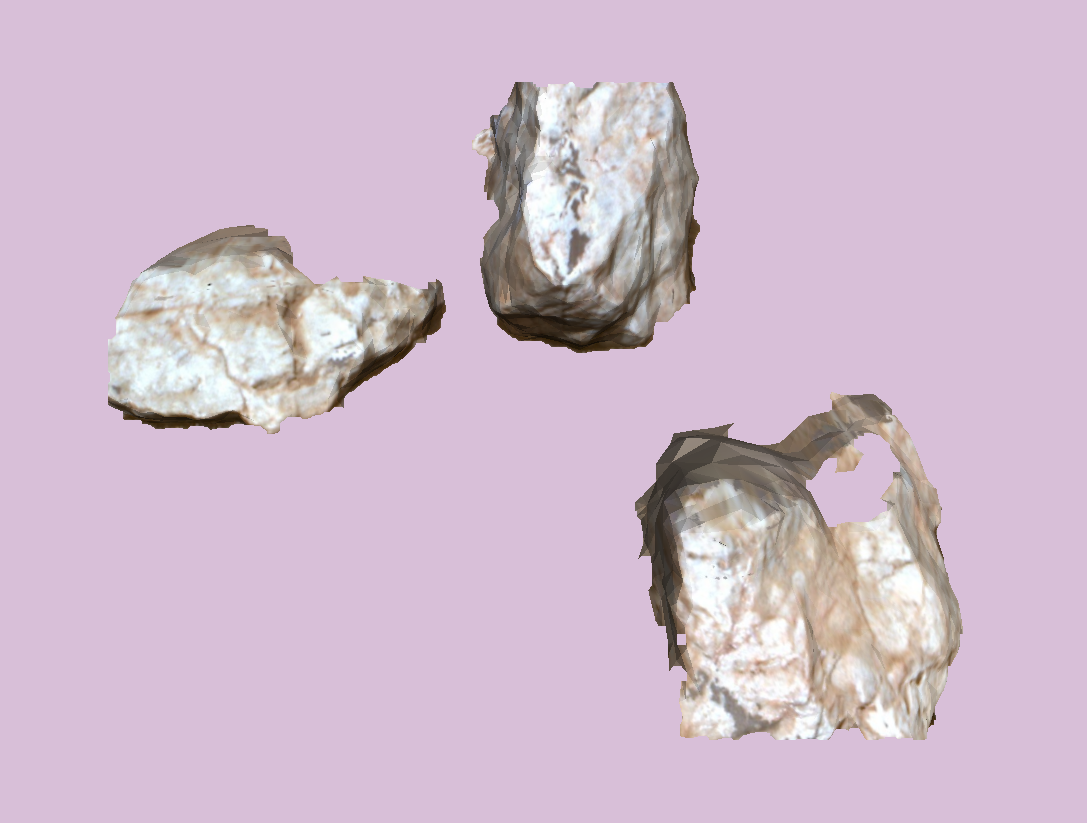,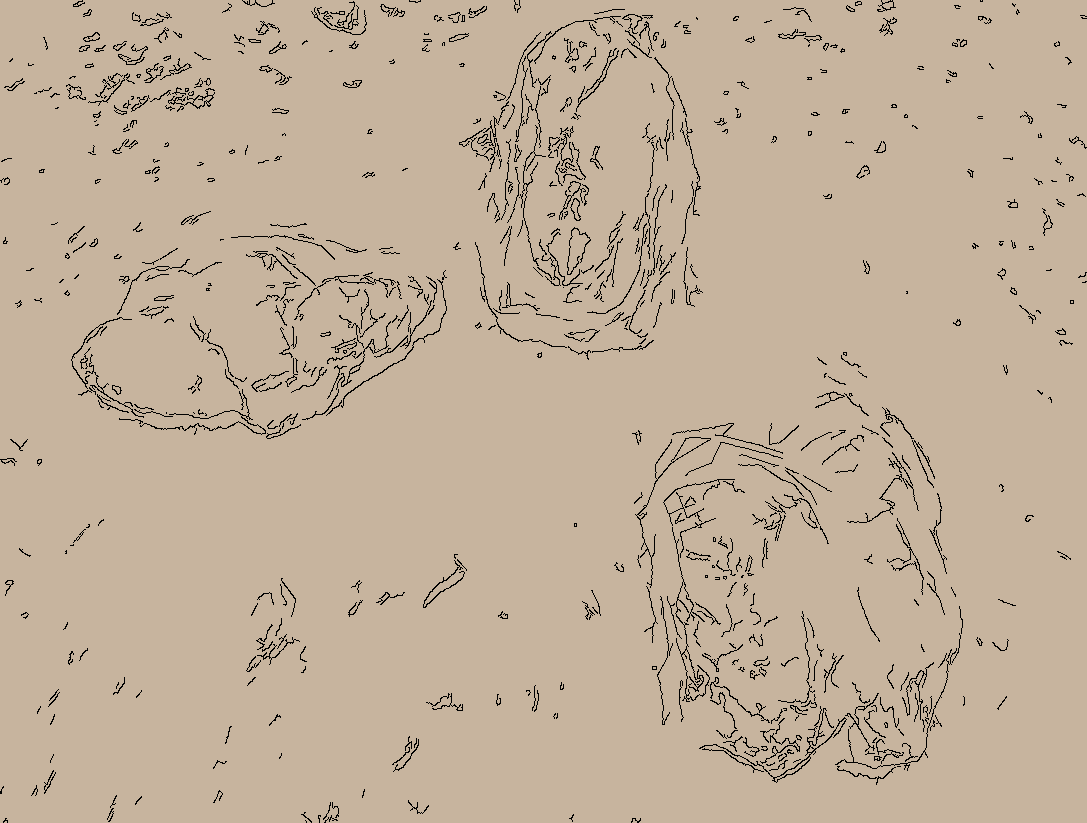,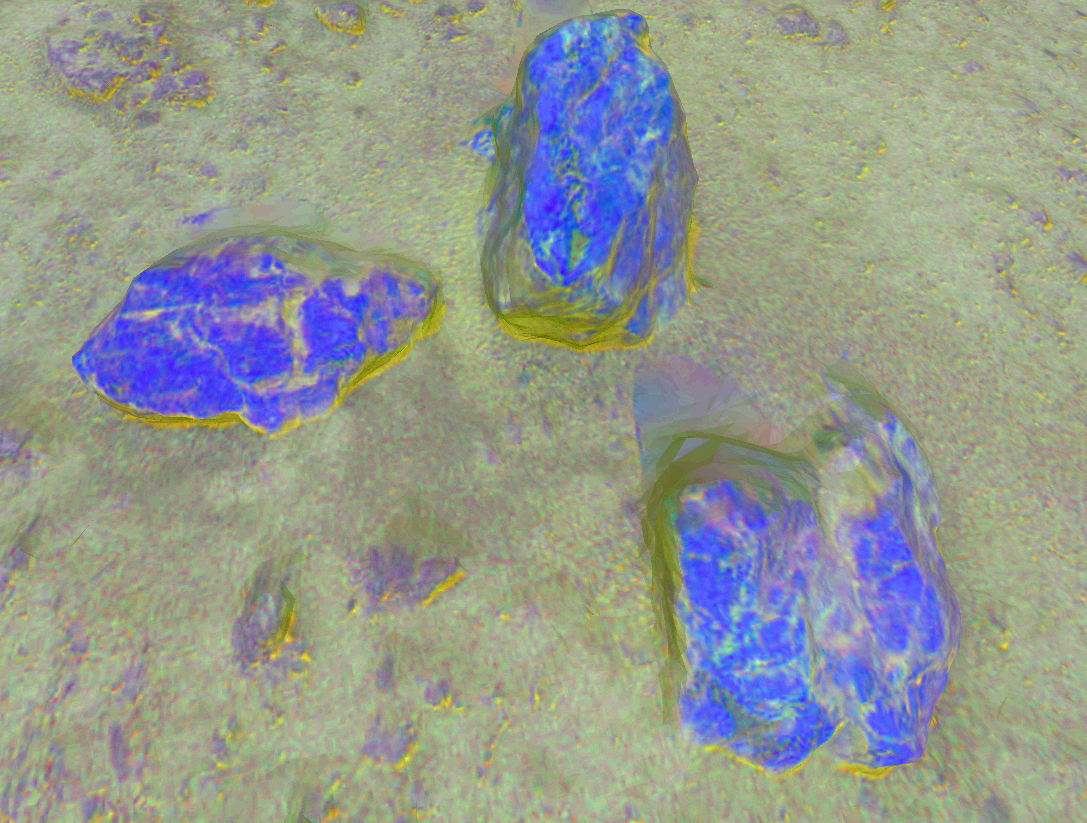,figs/wr_jzazbz_hist.png} & 40.3 & -21.8 \\
\addlinespace[2pt]
GO-LED-OL\textsubscript{ar}\text{+}b & \thumbstrip{figs/wr_ol_repr.png,figs/wr_led_hist.png,figs/wr_global_lab.png,figs/wr_jzazbz_hist.png,figs/wr_jzazbz_comp.png} & 40.3 & -21.7 \\
\addlinespace[2pt]
GO-LED-OL\textsubscript{ar}\text{+}c & \thumbstrip{figs/wr_ol_repr.png,figs/wr_led_hist.png,figs/wr_global_lab.png,figs/wr_oklab_hist.png} & 40.3 & -21.3 \\
\addlinespace[2pt]
GO-LED-OL\textsubscript{ar}\text{+}d & \thumbstrip{figs/wr_ol_repr.png,figs/wr_led_hist.png,figs/wr_global_lab.png,figs/wr_oklab_hist.png,figs/wr_oklab_comp.png} & 40.2 & -21.2 \\
\addlinespace[2pt]
\bottomrule
\end{tabularx}
\caption{Performance of Video-LLaMA-13B with our Poly+ZO+MI controller on GeoProperties-3DS benchmark and sources from additional colour spaces J\textsubscript{z}a\textsubscript{z}b\textsubscript{z} and Oklab. The best performing measure $GO\text{-}LED\text{-}OL\textsubscript{ar}$ is extended to a) one source or b) two sources from J\textsubscript{z}a\textsubscript{z}b\textsubscript{z} - and c) one source or d) two sources from Oklab. In total then a) and c) have five sources and b) and d) have six sources. The error rate of the VLM is not impacted when increasing the number sources beyond 5.}
\label{table:col_spaces}
\end{table}
\endgroup

\textbf{Ablation on Colour Spaces:}
\label{subsection:ablate_cs}
Our selection of HSV and CIELAB is grounded on comparisons with four candidate colour spaces. The performance for our controller with J\textsubscript{z}a\textsubscript{z}b\textsubscript{z} and Oklab underperforms these candidates. Results for runs on GeoProperties-3DS with four or five features extracted from multiple colour spaces suggest there is a ceiling to gains from adding new sources (see Table~\ref{table:col_spaces}). Eye-level comparisons indicate that the increase in information extracted for a single sample peaks as the differences between inputs reduces.

\begin{table*}[hbt!]
\begin{center}
\begin{threeparttable}
\begin{tabular}{l l l l l l l l l l}
\hline
                                &                      & \multicolumn{4}{l }{\textbf{Video-LLaMA-13B}} & \multicolumn{4}{l}{\textbf{Chat-UniVi-13B}} \\ \cline{3-10}
                                &                      & \textbf{Acc@5}    & \textbf{$\Delta$ on R1}    & \textbf{Acc@8}    & \textbf{$\Delta$ on R1}   & \textbf{Acc@5}   & \textbf{$\Delta$ on R1}    & \textbf{Acc@8}    & \textbf{$\Delta$ on R1}   \\ \hline
\textbf{VLM (no control)}                   &                      & 19.8 & 0.3 & 25.1 & 0.5 & 14.8 & 0.3 & 19.6 & 0.3   \\
\textbf{PID}                   &                      & 21.1 & 2.1 & 32.8 & 7.3 & 18.1 & 1.6 & 28.1 & 6.5   \\ 
\textbf{RBF Network}                   &                      & 22.2 & 2.6 & 33.0 & 7.3 & 18.9 & 0.6 & 28.5 & 5.5   \\
\textbf{Linear + SGD}          &                      & 22.8 & 2.9 & 30.1 & 6.5 & 17.5 & 2.4 & 26.0 & 5.9   \\ 
\textbf{Extended Kalman}       &                      & 23.6 & 3.6 & 34.2 & 8.5 & 21.4 & 2.9 & 27.1 & 7.2   \\
\cmidrule{1-10}
\textbf{Poly+ZO+MI (ours)} &                      & 23.6 & 3.1 & 35.8 & 9.2 & 20.8 & 2.6 & 27.7 & 7.8   \\ 
\textbf{+GO-LED-OL}            &                      & 25.9 & 4.2 & 37.1 & 12.4 & 18.6 & 3.4 & 29.1 & 9.3   \\ 
\textbf{+GH-LED}               &                      & 26.2 & 4.9 & 39.6 & 13.9 & 22.9 & 4.1 & 33.3 & 10.8   \\ 
\textbf{+GH-LED\_ar}           &                      & 30.9 & 10.3 & \textbf{53.3} & 27.2 & 26.8 & 7.8 & \textbf{44.4} & 21.5   \\ 
\textbf{+GO-LED-OL\_ar}        &                      & \textbf{31.4} & 14.3 & 44.5 & 19.7 & \textbf{27.4} & 11.4 & 40.3 & 15.3   \\ \hline
\end{tabular}
\caption{Analysis of methods on prioritising viewpoints in feature identification given a restriction on the number of actions. The objective of the VLM system in our FeatureID-3DS benchmark is to match a summary of the 3D scene by discriminating on descriptions that describe an object feature visible only from selected viewpoints. Accuracy is reported by camera action count in the correction round and the delta $\Delta$ on the measurement round. Action counts are run with counts of 8 actions and a reduced budget of 5 actions. We present above mean values over 10 runs and include reporting on variance in Appendix~\ref{sec:app_results}.}
\label{table:featureid}
\end{threeparttable}
\end{center}
\end{table*}

\subsection{Feature Identification on Short Runs}
\textbf{Benchmark:} In our first evaluation on virtual environments, we introduce the FeatureID-3DS benchmark to test feature identification over multiple objects. A VLM provides a boolean response on world features (eg ``ladder'', ``doorway'') on presentation of a language description. A scene summary containing the features is presented on the final turn. Applications with low latency between feedback and generation rely on efficient estimation~\cite{hofmann2023record}. In order to assess control methods on prioritising high-information viewpoints, camera action count is restricted to $5$ in the correction round. Counts of camera actions are the main factor in efficiency as each action entails a round of inputs and responses from the VLM. This is demonstrated by wall-clock time results in Appendix~\ref{sec:app_results} indicating that time increases at a rate that is linear to the number of actions. \textbf{Results:} Our $GO\text{-}LED\text{-}OL\textsubscript{ar}$ metric assists the Poly+ZO+MI controller to prioritise viewpoints that reduce VLM errors with fewer actions (see scores for Acc@5 in Table~\ref{table:featureid}). Analysis of viewpoint replacements in Appendix~\ref{sec:app_results} indicates that views displaying features with strong visual prominence are prioritised.

\begin{table}[hbt!]
\begin{center}
\begin{threeparttable}
\setlength{\tabcolsep}{0pt}
\begin{adjustbox}{max width=\columnwidth}
\begin{tabular}{l l l l l}
\hline
                                & \multicolumn{2}{l }{\textbf{Video-LLaMA-13B}} & \multicolumn{2}{l}{\textbf{Chat-UniVi-13B}} \\ \cline{2-5}
                                & \textbf{Acc@8}    & \textbf{$\Delta$ on R1}   & \textbf{Acc@8}    & \textbf{$\Delta$ on R1}   \\ \hline
\textbf{VLM (no control)}                   & 20.4 & 0.6 & 18.1 & 0.3   \\
\textbf{PID}                   & 23.8 & 5.0 & 22.2 & 4.4   \\ 
\textbf{RBF Network}                   & 24.6 & 5.3 & 21.1 & 3.4   \\ 
\textbf{Extended Kalman}       & 25.0 & 7.5 & 20.4 & 6.6   \\
\textbf{Linear+SGD}          & 27.4 & 10.3 & 22.4 & 9.8   \\ 
\cmidrule{1-5}
\textbf{Poly+ZO+MI (ours)} & 24.5 & 6.0 & 20.4 & 5.7   \\ 
\textbf{+GO-LED-OL}            & 28.2 & 8.7 & 22.8 & 8.0   \\ 
\textbf{+GH-LED}                & 32.7 & 13.4 & 31.1 & 12.9   \\ 
\textbf{+GO-LED-OL\_ar}            & 34.7 & 14.7 & 32.7 & 13.3   \\ 
\textbf{+GH-LED\_ar}            & \textbf{39.3} & 21.0 & \textbf{35.3} & 20.2   \\ \hline
\end{tabular}
\end{adjustbox}
\caption{Assessment on object occlusion of standard control and SGD-optimised neural methods to benchmark our controller with multi-information measures. In the PartialView-3DS benchmark, a vertical partition in the scene occludes one of the objects depending on camera positions. Mean accuracy is reported for the correction round and the delta $\Delta$ on the measurement round. Rounds consist of 8 camera actions and mean values are reported over 10 runs (see Appendix~\ref{sec:app_results} for details on variance).}
\label{table:occlusion}
\end{threeparttable}
\end{center}
\end{table}

\subsection{Object Occlusion}
\textbf{Benchmark:} Our second assessment on reasoning over virtual environments focuses on offsetting object occlusions. Scenes in the PartialView-3DS benchmark consist of two objects located on opposite sides of a partition to test the adaptation of control methods when there is full or partial occlusion of one of the objects at every viewpoint. A matching description is selected from five descriptions presented on each turn. At the end of the round, the system uses the information collected to select the correct summary of the scene. Camera action count in both rounds is set to $8$. \textbf{Results:} Our light and derivative-free Poly+ZO+MI controller using multi-information with active regret minimisation (see variants with $ar$ in Table \ref{table:occlusion}) adapts camera actions to return views that improve VLM performance and outperforms other standard control algorithms, an RBF network, and a linear layer optimised with SGD. Variance over runs for all methods are minor (see additional results from the experiments included in Appendix~\ref{sec:app_results}).

% Section

\section{Conclusion} 

In this paper, we propose a novel multi-information estimation method and efficient derivative-free control to predict camera actions that provide optimal sequences of views on 3D scenes. This method improves the performance of VLMs trained on 2D visual data when performing cross-modal tasks on multi-object 3D scenes. Numerical and theoretical analysis provides the basis for obtaining the first application of multivariate mutual information estimation to enhance the performance of VLM systems on empirical 3D tasks. As part of this research, we design and implement a framework for evaluating control and information theoretic measures, design a set of scenes to illustrate the impact of minimising regret when selecting inputs for calculating multi-information metrics, and present three novel cross-modal benchmarks on 3D multi-object scenes.

\section{Acknowledgments}

This publication is supported by the Digital Visual Studies program at the University of Zurich and funded by the Max Planck Society. RS received funding by the Swiss National Science Foundation (project MUTAMUR; no.\ 176727).

% Bibliography

%\bibliographystyle{plain}
{\footnotesize
\bibliography{main.bib}}

%%%%%%%%%%%%%%%%%%%%%%%%%%%%%%%%%%%%%%%%%%%%%%%%%%%%%%%%%%%%%%%%%%%%%%%%%%%%%%%
%%%%%%%%%%%%%%%%%%%%%%%%%%%%%%%%%%%%%%%%%%%%%%%%%%%%%%%%%%%%%%%%%%%%%%%%%%%%%%%
% APPENDIX
%%%%%%%%%%%%%%%%%%%%%%%%%%%%%%%%%%%%%%%%%%%%%%%%%%%%%%%%%%%%%%%%%%%%%%%%%%%%%%%
%%%%%%%%%%%%%%%%%%%%%%%%%%%%%%%%%%%%%%%%%%%%%%%%%%%%%%%%%%%%%%%%%%%%%%%%%%%%%%%
\newpage
\appendix
\begin{appendices}
\twocolumn
\begin{center}
\textbf{\huge Appendices}
\end{center}
\vspace{0.25in}
%\appendixpage

% Section

% List of Appendices

\section*{List of Appendices}
\begin{enumerate}
    \item \nameref{sec:app_notation}
    \item \nameref{sec:app_data}
    \item \nameref{sec:app_experiments}
    \item \nameref{sec:app_results}
    \item \nameref{sec:app_numerical}
    \item \nameref{sec:app_method}
    \item \nameref{sec:app_proofs}
\end{enumerate}

% Section

% Section

\section{Notation Used in the Paper}
\label{sec:app_notation}

Notations are defined here for fast reference. 

\begin{table}[hbt!]  % <-- remove the *
\centering            % <-- simpler than \begin{center}...\end{center}
\caption{Reference List for Standard Notation.}
\label{table:notation}
\begin{tabular}{l l}  % you only use 2 logical columns, so use 2 here
\toprule
\textbf{Notation}    & \textbf{Usage in this paper} \\ 
\midrule  
$\alpha$ & Observed information \\
$\mathbb{A}$ & Interaction matrix \\
$\mathfrak{B}$ & Constituent units of entropy sources \\
$\mathfrak{b}_{\mathfrak{d}}$ & Down unit \\
$\mathfrak{b}_{\mathfrak{u}}$ & Up unit \\
$Cap$ & Information capacity of a channel \\
$\mathcal{D}$ & Dataset \\
$\Delta$ & Difference \\
$Dim$ & Dimension \\
$\eta$ & Step size \\
$\gamma$ & Margin \\ 
$H$ & Entropy \\
$\mathbb{M}$ & Mutual information \\
$MI$ & Multi-information \\
$MI\textsubscript{ar}$  & MI with active regret minimisation \\
$\omega$ & A computational solution \\
$\pi$ & Policy \\
$v$ & View \\
$\vec{w}$ & Weight vector of a hyperplane \\
$x$-, $y$-, $z$-axis & Axes \\
$x$, $y$, $z$ & Coordinates \\
\bottomrule
\end{tabular}
\end{table}

\section{Specifications for Benchmarks and Numerical Analysis}
\label{sec:app_data}

\subsection{Benchmarks}

Cross-modal benchmarks for VLMs where visual inputs are 3D inputs include problems with individual objects~\cite{xue2024ulip} and aligning inputs to improve 3D shape understanding~\cite{liu2024openshape}. Search algorithms are proposed by \cite{voigt-etal-2023-paparazzi} to retrieve optimal viewpoints of 3D objects to improve a 2D CLIP model (see Figure \ref{fig:cam_vps}). We propose benchmarks that focus on reducing errors when reasoning over visual properties, understanding object features with limited views, and handling view occlusions to improve reasoning by VLMs trained on 2D visual inputs on cross-modal tasks with 3D multi-object scenes. 

\begin{figure}[hbt!]
  \centering
  \makebox[\linewidth][c]{
    \includegraphics[scale=0.50]{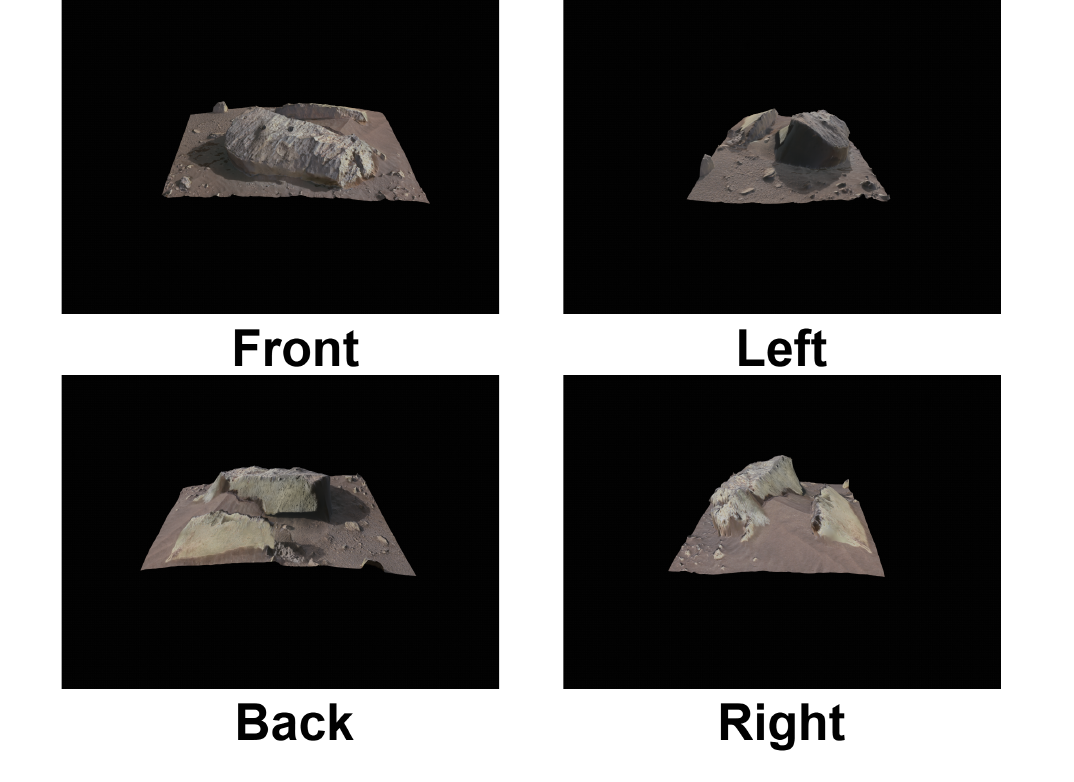}}
  \caption[Default Viewpoints for In‑scene Camera]{Systems trained on 2D data reason over 3D scenes using a set of viewpoints from the in‑scene camera. The objective for camera control methods is to predict the sequence of viewpoints with the highest likelihood of returning a correct assessment of the scene by the VLM. Scenes in all our benchmarks contain multiple objects from a single class.}
  \label{fig:cam_vps}
\end{figure}

\begin{figure}[hbt!]
  \centering
  \makebox[\linewidth][c]{\includegraphics[scale=0.50]{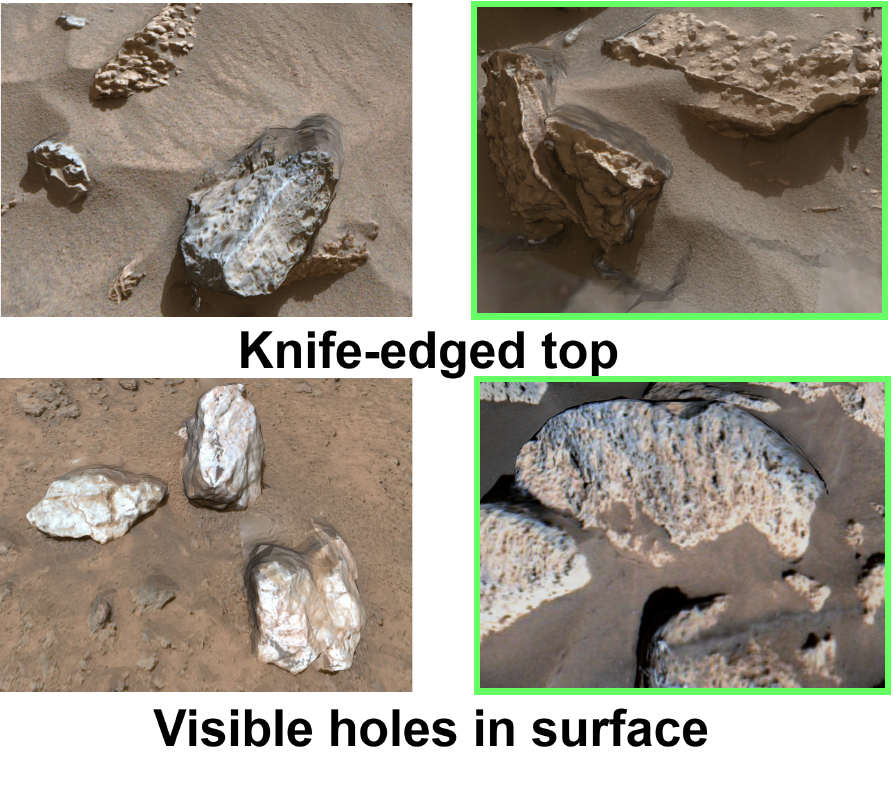}}
  \caption[Properties of Samples from GeoProperties-3DS]{Samples from our GeoProperties-3DS benchmark with close-ups of properties correctly identifying a single object in the scene. Descriptions refer to the largest rock or boulder and match scenes on the right. VLMs return false positives and this weakness is identified by presenting the same descriptions for the samples on the left where the property does not apply to objects in view.}
  \label{fig:geoproperties}
\end{figure}

Our GeoProperties-3DS benchmark consists of sets of scenes that we extract from 3D mesh models of rocks, regolith, and other geological features (see Figure \ref{fig:geoproperties}) generated from observations performed as part of NASA Mars 2020 missions \cite{bell2022geological}. Each scene contains a small set of objects. A total of 6 collections are used in our experiments with each group consisting of 5 individual scenes. We prepare textual descriptions of the objects in scenes describing surface features visible from a subset of viewpoints. The description matches a single member of the 5 scenes in each collection. Our benchmark is designed to assess methods that reduce the likelihood of false positives by 2D VLMs.  We extract scenes from 3D models of the Raton target (sol 130), the area around the Rochette rock (sol 196), the Bastide outcrop (sol 204), the Mure and Dorie outcrop (sol 168), the Rocky Top in Hogwallow Flats (sol 466), the rocks in Badger Creek (sol 1222), and White Rocks (sol 1313) \cite{bell2022geological,owl_creek}. Models are 3D reconstructions from images captured by the Mastcam-Z instrument and navigation cameras mounted on the Perseverance rover \cite{bell2021mars} - and cameras on board the Ingenuity Mars Helicopter \cite{maki2025ingenuity}.

Scenes in FeatureID-3DS are composed from a set of models drawn from ShapeNetCore topLevelSynsetId $04460130$ (\textit{category: Tower})~\cite{chang2015shapenet} and a floor mesh in RGBA \([0.9, 0.9, 0.7, 1.0]\). The dataset comprises 60 glb files with language descriptions. Human-generated descriptions of objects are formed into a sentence for each viewpoint with a template. The true description is one of 5 samples presented as a list. To ensure matches from the true sample, matching modifiers in negative samples are replaced. Each text file is composed of descriptions from 6 viewpoints and a single summary string for the 60 scenes in the dataset. A feature is identified in the description for one of the viewpoints where visibility is confirmed.

\begin{figure}[hbt!]
  \centering
  \makebox[\linewidth][c]{\includegraphics[scale=0.50]{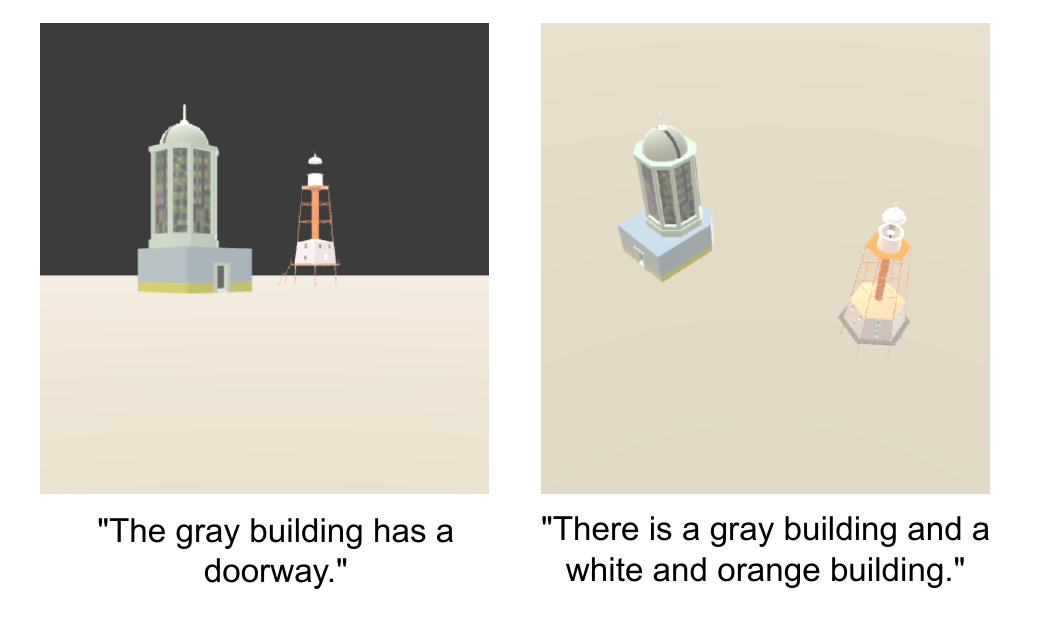}}
  \caption[Samples from the FeatureID-3DS dataset]{Samples from the FeatureID-3DS dataset.}
  \label{EffEdsample}
\end{figure}

PartialView-3DS scenes are generated from ShapeNetCore topLevelSynsetId $[3001627, 4379243]$ (\textit{category: Chair, Table}). Objects are separated by a partition mesh intersecting a floor mesh and limiting visibility to a single item from four of six cardinal viewpoints. A three-stage process to synthesize descriptions for scenes starts with human-generated natural language descriptions of constituent objects from \cite{han2020shapecaptioner}. Modifiers are verified by hand and corrected to match the objects. Sentences are generated for viewpoints by adding an indefinite article and period. Summary descriptions of the scene consist of object-level texts conjoined into a single sentence. The PartialView-3DS benchmark consists of a total of 60 glb files and paired description files. Language descriptions refer to the 3D scene from one of 6 viewpoints.

\begin{figure}[hbt!]
  \centering
  \makebox[\linewidth][c]{\includegraphics[scale=0.45]{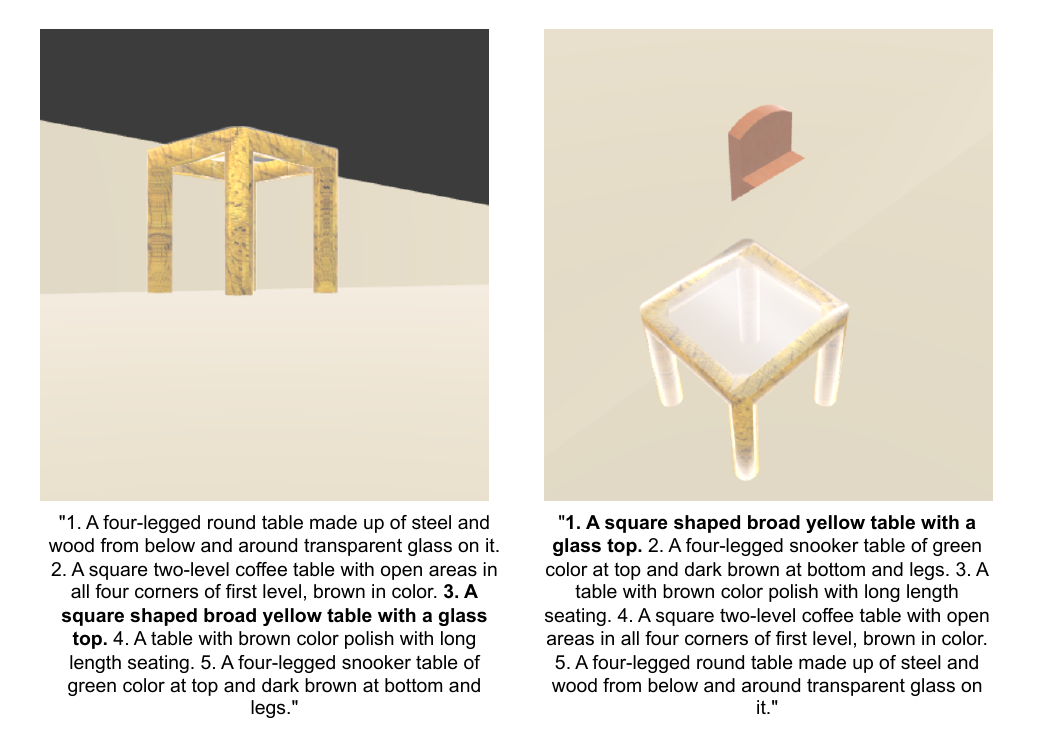}}
 \caption[Samples from the PartialView-3DS dataset]{Samples from the PartialView-3DS dataset. Descriptions in bold are the description matching the viewpoint.}
 \label{PartialView-3DSsample}
\end{figure}

\subsection{Diagnostic for Numerical Analysis}

Multi-information estimation methods where $n>2$ variables represent cross-modal inputs are evaluated with our UC-3DS-MI dataset (see Figure~\ref{fig:ucmscmsample}). Data design is balanced with 24 uniform scenes and an equal number of complex scenes. Each scene contains two polygon objects initialised in one of 6 position groups defined in relation to the centroid of a floor mesh. Abstract polygons are selected to limit the impact of class bias in the training data of VLM systems. Language descriptions and scenes are split between concentration on colour or geometry. Complexity relates to matches or differences between objects on these criteria. VLM systems provide predictions on the correctness of descriptions in relation to 6 viewpoints for each 3D scene.

\begin{figure}[h!]
\centering
%\vskip -0.2in
\includegraphics[width=\columnwidth,keepaspectratio,trim=0 0 0 0,clip]{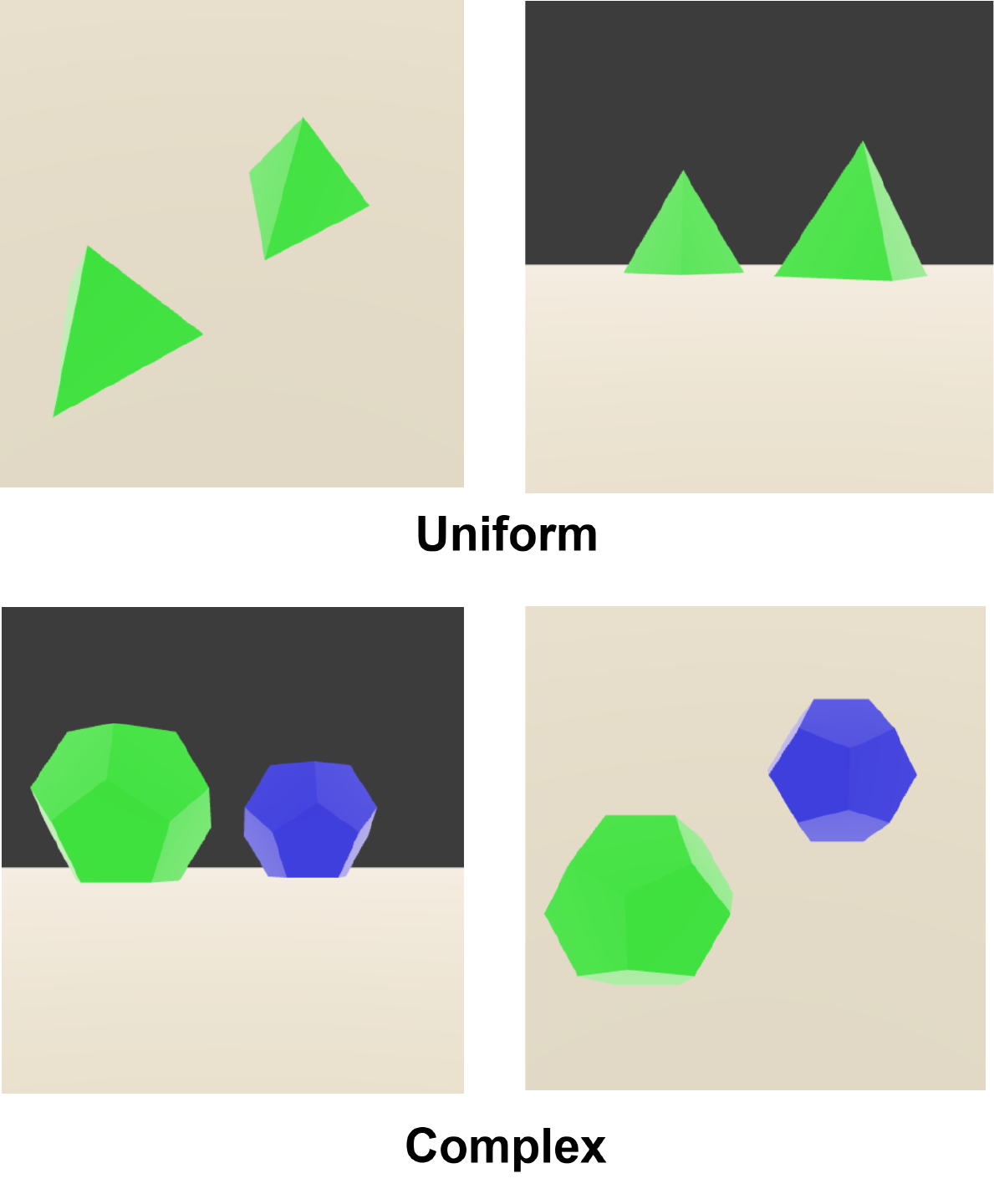}
\caption{Samples of uniform and complex scenes from the UC-3DS-MI dataset.}
  \label{fig:ucmscmsample}
\end{figure}

\subsection{Qualitative Analysis}

Our method selects optimal viewpoints in a sequence with a controller guided by information theoretic measures. The basis for our method is the principle that a VLM's errors in assessments will change given a visual input captured from an optimal viewpoint in relation to the textual input. In selected instances, assessment errors are returned by VLMs on all viewpoints presented to the VLM (see Figure \ref{fig:q_imgs}). A resulting hard limit exists on improvements in the accuracy of methods that run only in inference. 

\begin{figure}[hbt!]
\centering
%\vskip -0.2in
\includegraphics[width=\columnwidth,keepaspectratio,trim=0 0 0 0,clip]{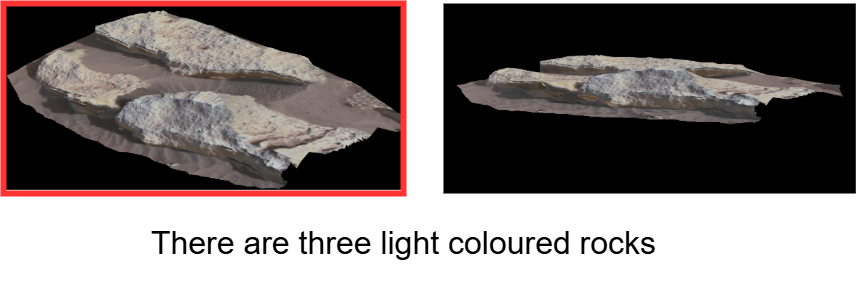}
\caption{We perform a qualitative assessment and find instances of incorrect predictions returned by the VLMs used in the assessments on optimal viewpoints and simple descriptions. In the samples presented here, the VLM returns a label of False for the view on the left. As a result, a hard limit on improvements from methods that are limited to inference-time changes exists.}
  \label{fig:q_imgs}
\end{figure}

% Section

\section{Additional Settings for Experiments}
\label{sec:app_experiments}

In this section, we provide core specifications and computing requirements to perform the experiments in the main paper. A run consists of $R=2$ rounds for the reported results with a sequence of actions estimated during the correction round following a measurement round and $n$ demonstrations. Control operations performed by all evaluated methods are updating demonstration data with system decisions in the measurement round for $(X, Y)$ viewpoints and $z$-axis levels, updating coefficients measured on the set of decisions with corresponding viewpoint labels, and predicting the set of camera actions for the correction round. Methods receive prediction errors for viewpoints where an error was marked in demonstrations and during the measurement round. 

To assess the results in Tables 2 and 3 in the main paper, we present additional details on task settings and technical specifications. \textbf{Number of Demonstrations} Demonstrations are set at $5\%$ of scenes for each benchmark: $n\text{=}3$ for feature identification (FeatureID-3DS) and object occlusion (PartialView-3DS). \textbf{Additional VLM system specifications} Weights for VLM systems are llava-v1.5-13b for Video-LLaMA-13B and vicuna-13b-v1.5 for Chat-UniVi-13B~\cite{liu2023llava,NEURIPS2023_91f18a12}. \textbf{Additional camera and video settings} A main scene camera is initialised. Video is recorded at $60 fps$. \textbf{Software} In-scene camera operations and actions are defined with open source versions of Harfang 3.2.4 and HarfangHighLevel libraries~\cite{Harfangurl}. Video is created with OpenCV 4.9.0.80. \textbf{Infrastructure} A single NVIDIA A100 80GB GPU is used for all runs reported in the experiments section to support VLM systems at inference time. Our controller adds no GPU processing or VRAM memory requirements additional to the hardware allocation for running the VLM. In-scene camera operations are performed with a single NVIDIA GeForce RTX 2080 Super 8GB GPU.

% Section

\section{Additional Results from Experiments}
\label{sec:app_results}

\begin{figure*}[hbt!]
    \centering
    %\begin{tcolorbox}[
        %colframe=black,colback=white,boxrule=1.0pt,
        %width=\linewidth-2em,left=0pt,right=0pt,
        %top=0pt,bottom=0pt,arc=0pt
    %]
        \begin{minipage}{\linewidth}
            \centering
            \underline{\bf \small GO-LED-OL}
            \par
            \vspace{1em}
            \bf \small Uniform
            \hspace{20em}
            \bf \small Complex
            \\[-0.2ex]

            \begin{minipage}{0.24\textwidth}
                \centering
                \includegraphics[width=\textwidth]{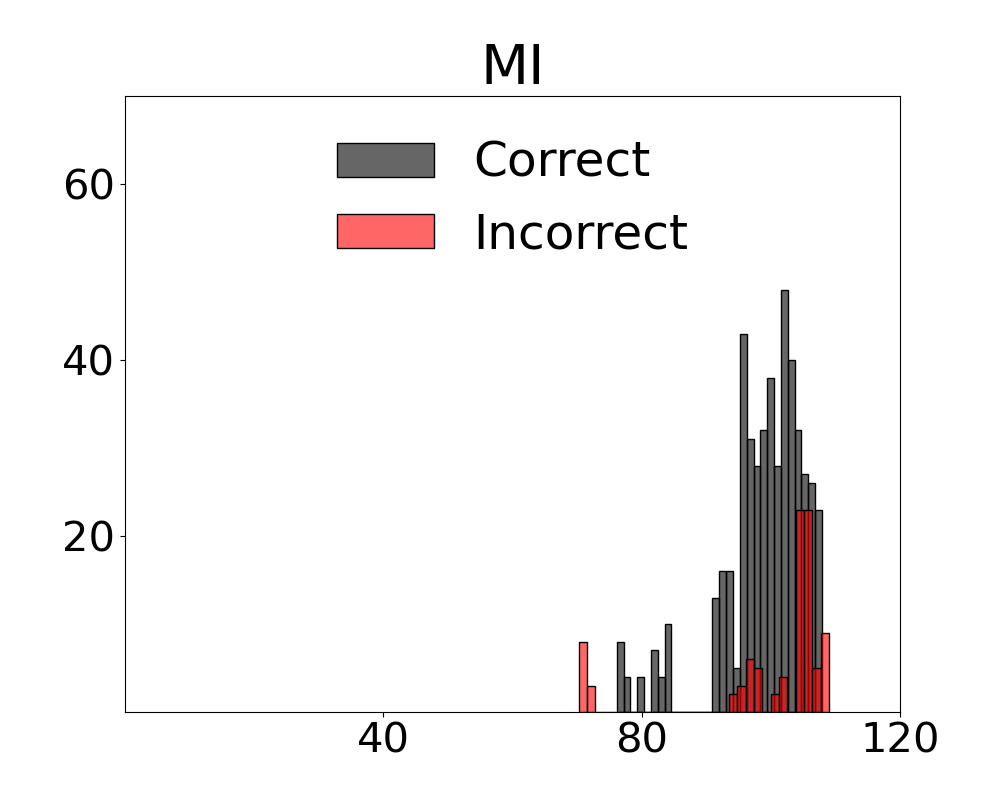}
                \label{fig:multi-info1}
            \end{minipage}
            \hspace{-0.5em}
            \begin{minipage}{0.24\textwidth}
                \centering
                \includegraphics[width=\textwidth]{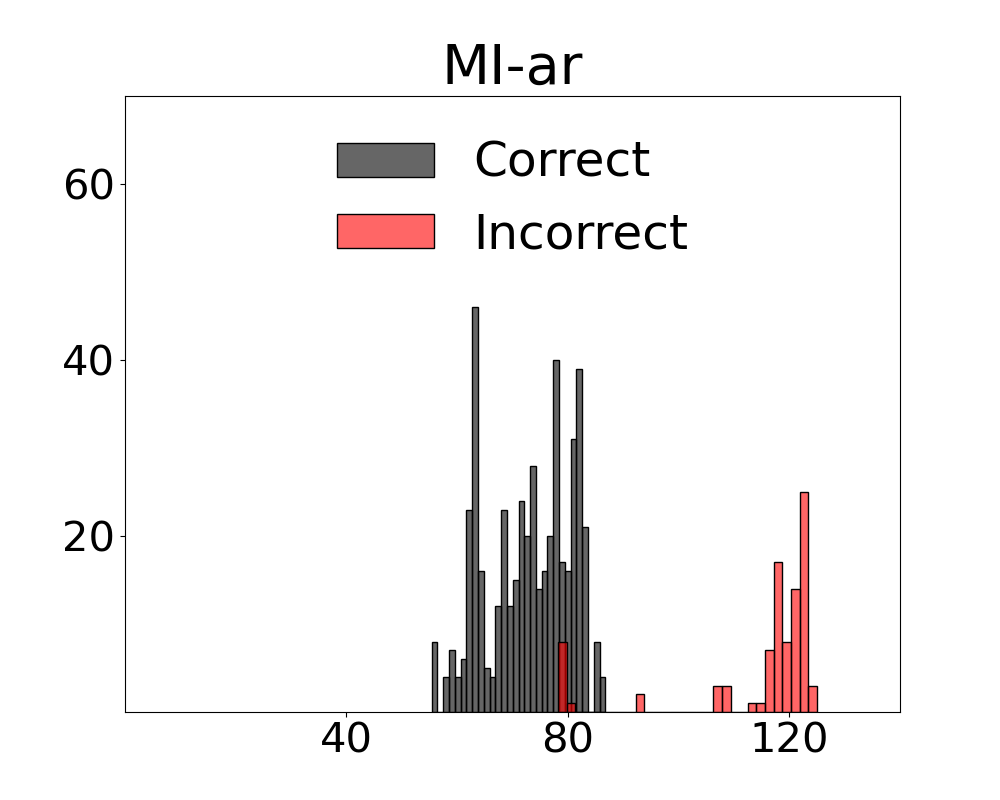}
                \label{fig:multi-info3}
            \end{minipage}
            \hspace{-0.1em}
            \raisebox{-0.35\height}{\rule{0.5pt}{0.11\textheight}}
            \begin{minipage}{0.24\textwidth}
                \centering
                \includegraphics[width=\textwidth]{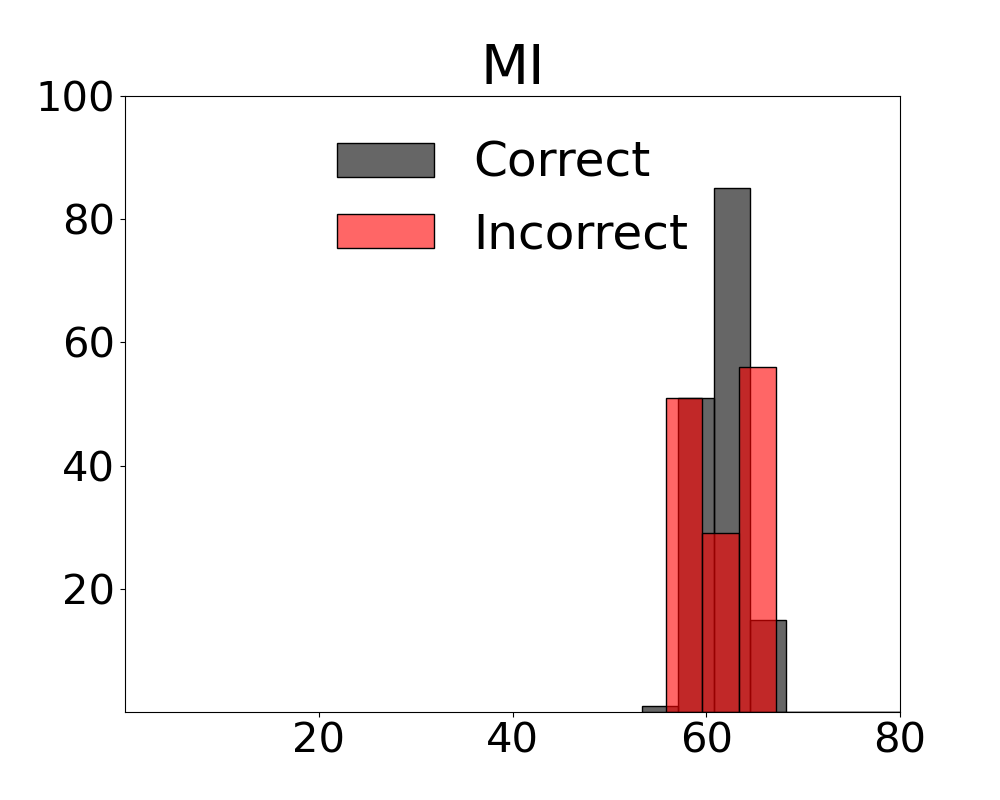}
                \label{fig:multi-info-regret1}
            \end{minipage}
            \hspace{0.01em}
            \begin{minipage}{0.24\textwidth}
                \centering
                \includegraphics[width=\textwidth]{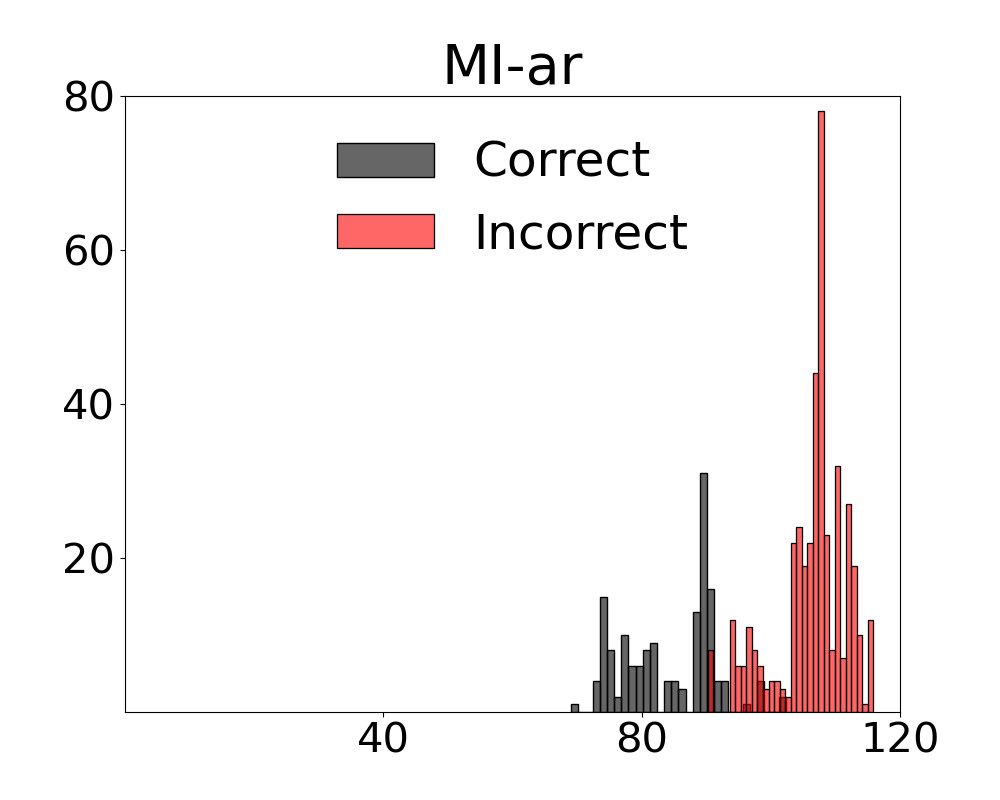}
                \label{fig:multi-info-regret3}
            \end{minipage}
            \\[-2.5ex]

            \caption{The $MI\text{-}ar$ metric groups correct and incorrect scores
            in distinct regions of the distribution, for both uniform objects and more
            complex scenes.}
            \label{fig:hist_uc_horizontal_goled}
        \end{minipage}
    %\end{tcolorbox}
\end{figure*}

\begin{figure*}[hbt!]
    \centering
    %\begin{tcolorbox}[
        %colframe=black,colback=white,boxrule=1.0pt,
        %width=\linewidth-2em,left=0pt,right=0pt,
        %top=0pt,bottom=0pt,arc=0pt
    %]
        \begin{minipage}{\linewidth}
            \centering
            \underline{\bf \small GH-LED}
            \par
            \vspace{1em}
            \bf \small Uniform
            \hspace{20em}
            \bf \small Complex
            \\[-0.2ex]

            \begin{minipage}{0.24\textwidth}
                \centering
                \includegraphics[width=\textwidth]{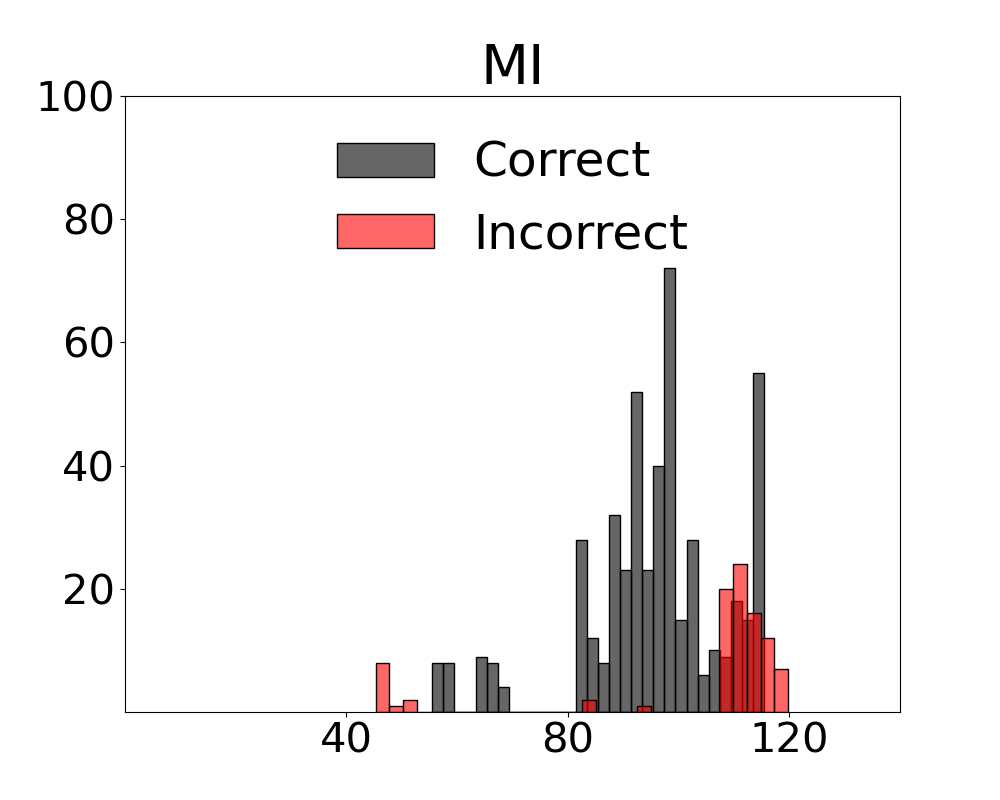}
                \label{fig:multi-info1}
            \end{minipage}
            \hspace{-0.5em}
            \begin{minipage}{0.24\textwidth}
                \centering
                \includegraphics[width=\textwidth]{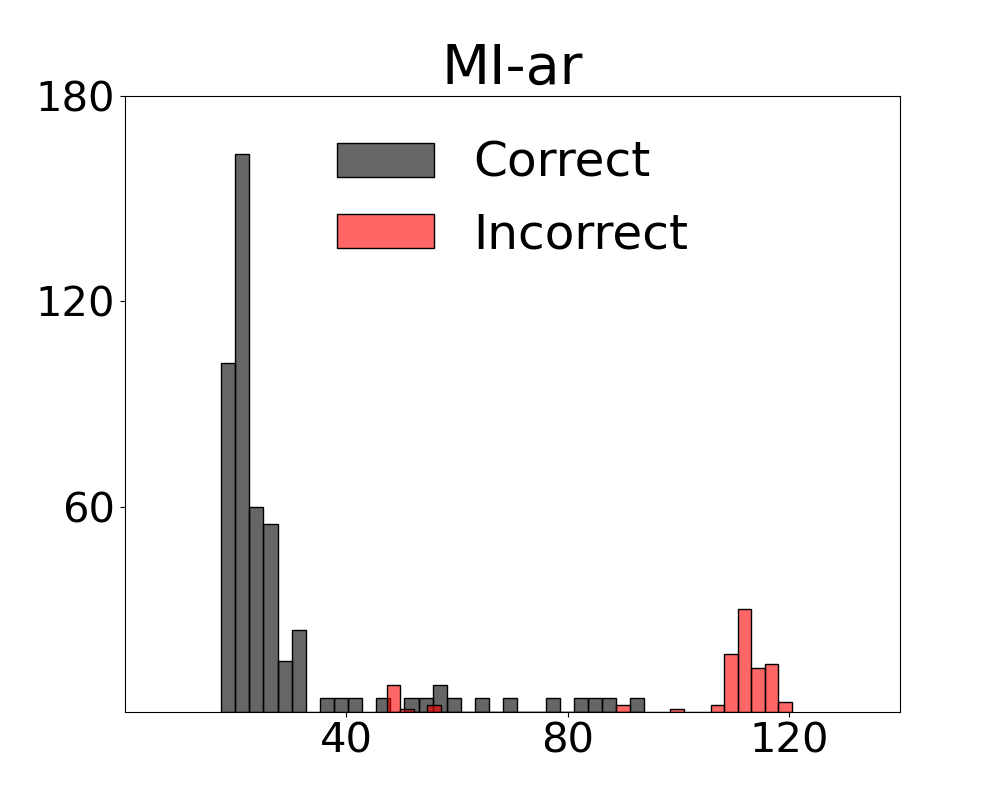}
                \label{fig:multi-info3}
            \end{minipage}
            \hspace{-0.1em}
            \raisebox{-0.35\height}{\rule{0.5pt}{0.11\textheight}}
            \begin{minipage}{0.24\textwidth}
                \centering
                \includegraphics[width=\textwidth]{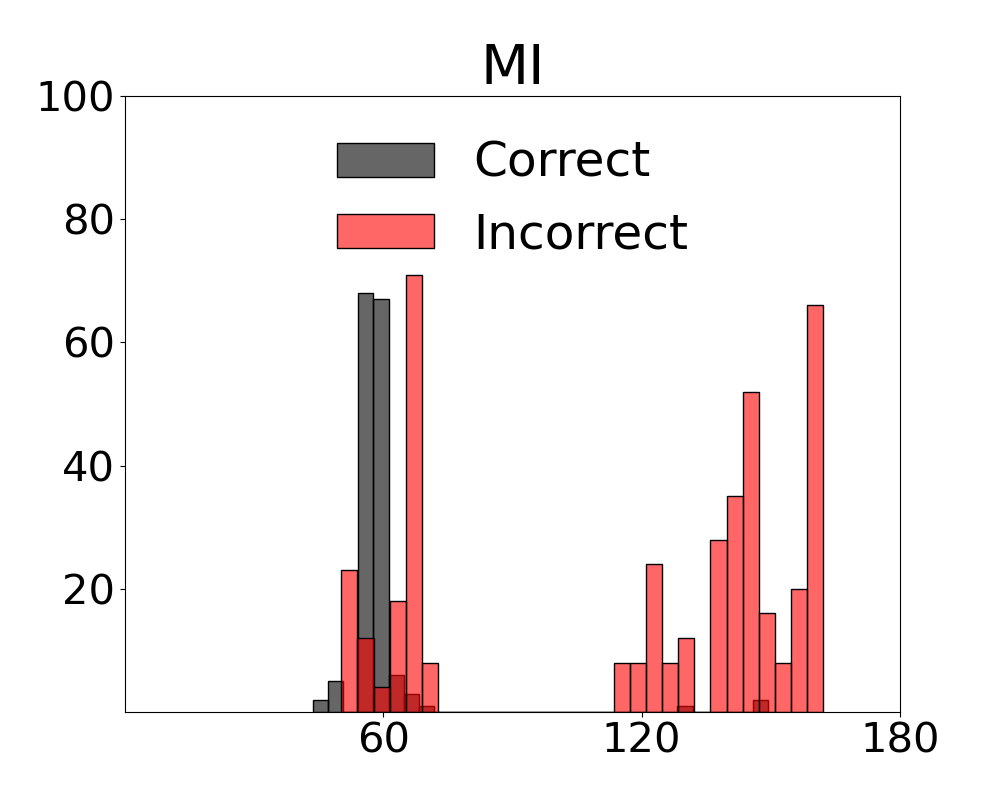}
                \label{fig:multi-info-regret1}
            \end{minipage}
            \hspace{0.01em}
            \begin{minipage}{0.24\textwidth}
                \centering
                \includegraphics[width=\textwidth]{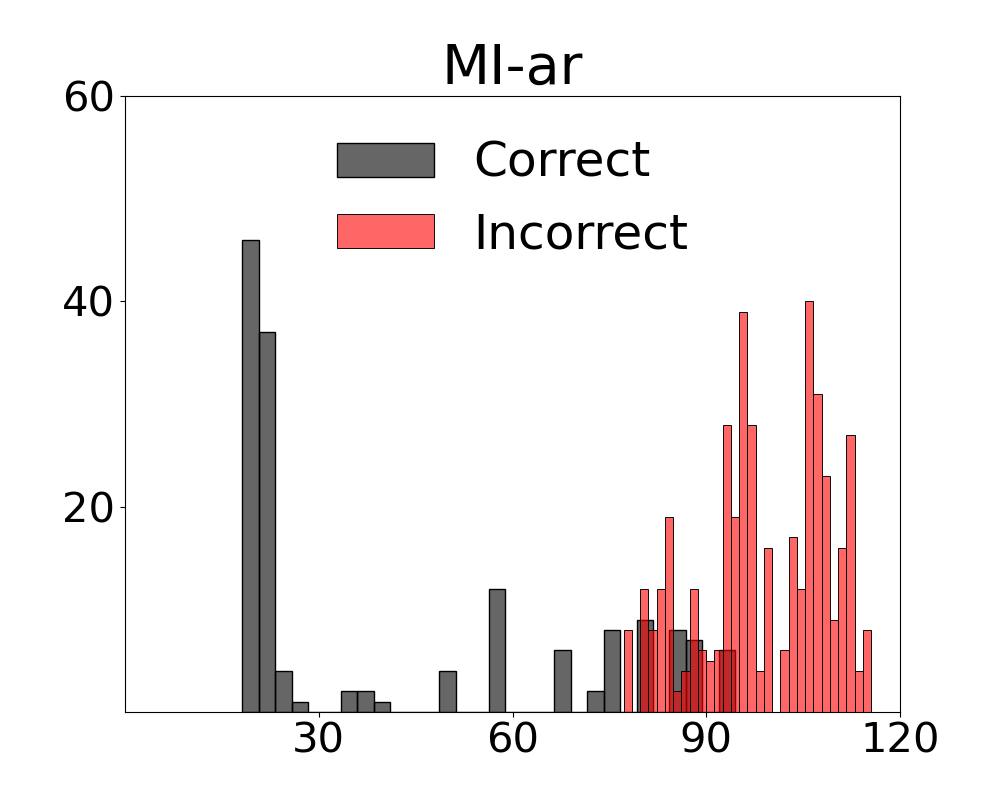}
                \label{fig:multi-info-regret3}
            \end{minipage}
            \\[-2.5ex]

            \caption{Active minimisation of regret distributes correct and incorrect scores 
            in identifiable concentrations for a second multi-information metric.}
            \label{fig:hist_uc_horizontal_gh}
        \end{minipage}
    %\end{tcolorbox}
\end{figure*}

\subsection{Score Distributions for Multivariate Metrics on Uniform and Complex Scenes}

We present fine-grained detail on the efficacy of our multivariate metrics with active regret minimisation to distribute the scores for correct and incorrect responses in separate regions in Figures~\ref{fig:hist_uc_horizontal_goled} and~\ref{fig:hist_uc_horizontal_gh}. These are compared to variants with no $ar$ on scenes in our diagnostic defined as uniform and complex. 

%\onecolumn
%\begin{multicols}{1}

\begin{figure*}[hbt!]
    \centering
    %\begin{tcolorbox}[
        %colframe=black,colback=white,boxrule=1.0pt,
        %width=\linewidth-2em,left=0pt,right=0pt,
        %top=0pt,bottom=0pt,arc=0pt
    %]
        \begin{minipage}{\linewidth}
            \centering
            \begin{minipage}{0.25\textwidth}
                \centering
                \hspace{2em}
                \underline{\bf \small GO-LED-OL\_ar}
                \includegraphics[width=\textwidth]{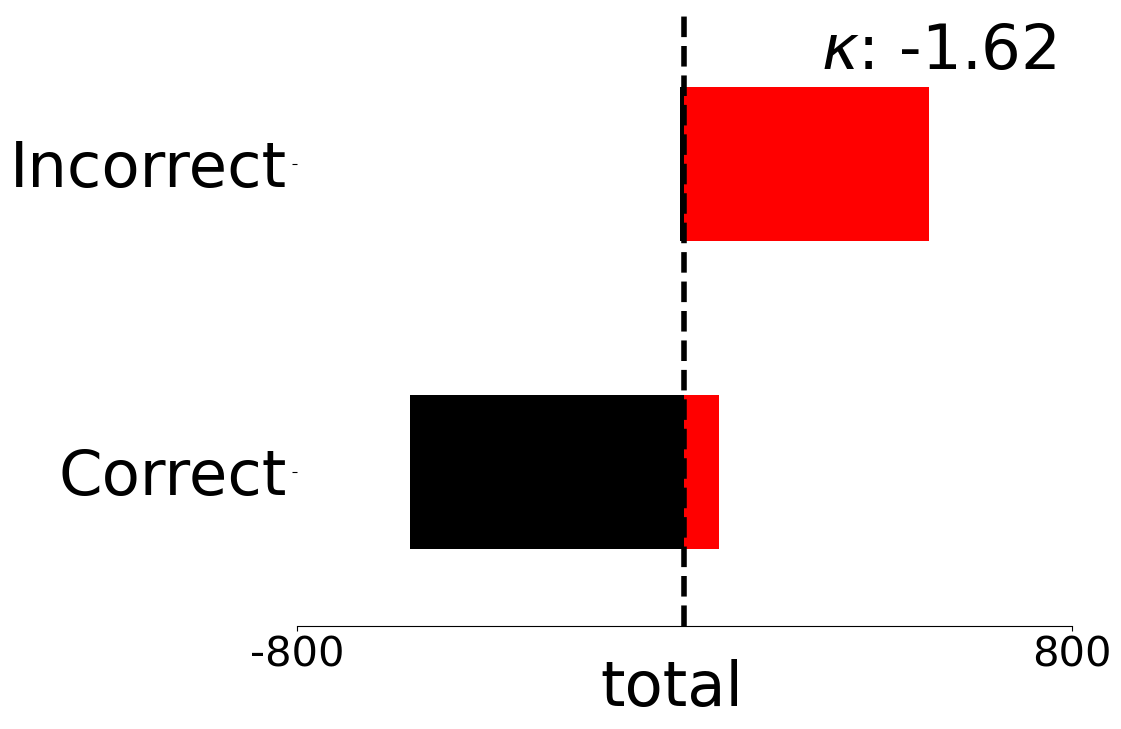}
                \label{fig:multi-info1}
            \end{minipage}%
            \begin{minipage}{0.25\textwidth}
                \centering
                \hspace{3em}
                \underline{\bf \small GO-LED-OL}
                \includegraphics[width=\textwidth]{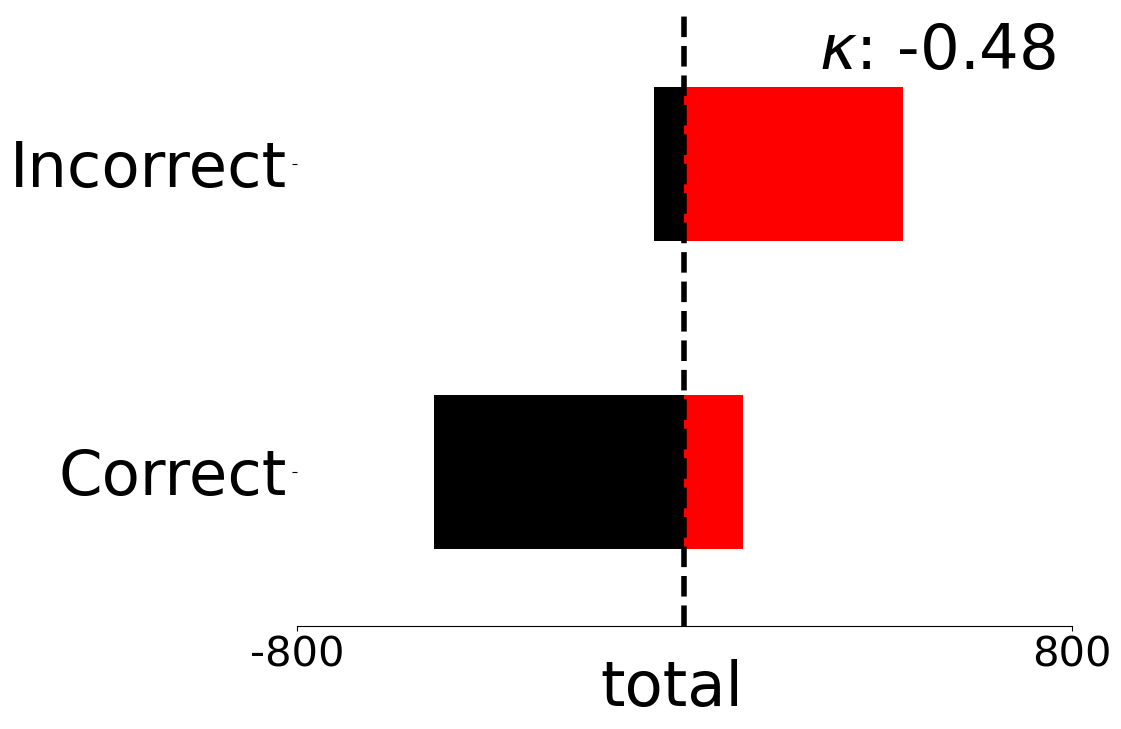}
                \label{fig:multi-info3}
            \end{minipage}%
            \begin{minipage}{0.25\textwidth}
                \centering
                \hspace{2.5em}
                \underline{\bf \small GO}
                \includegraphics[width=\textwidth]{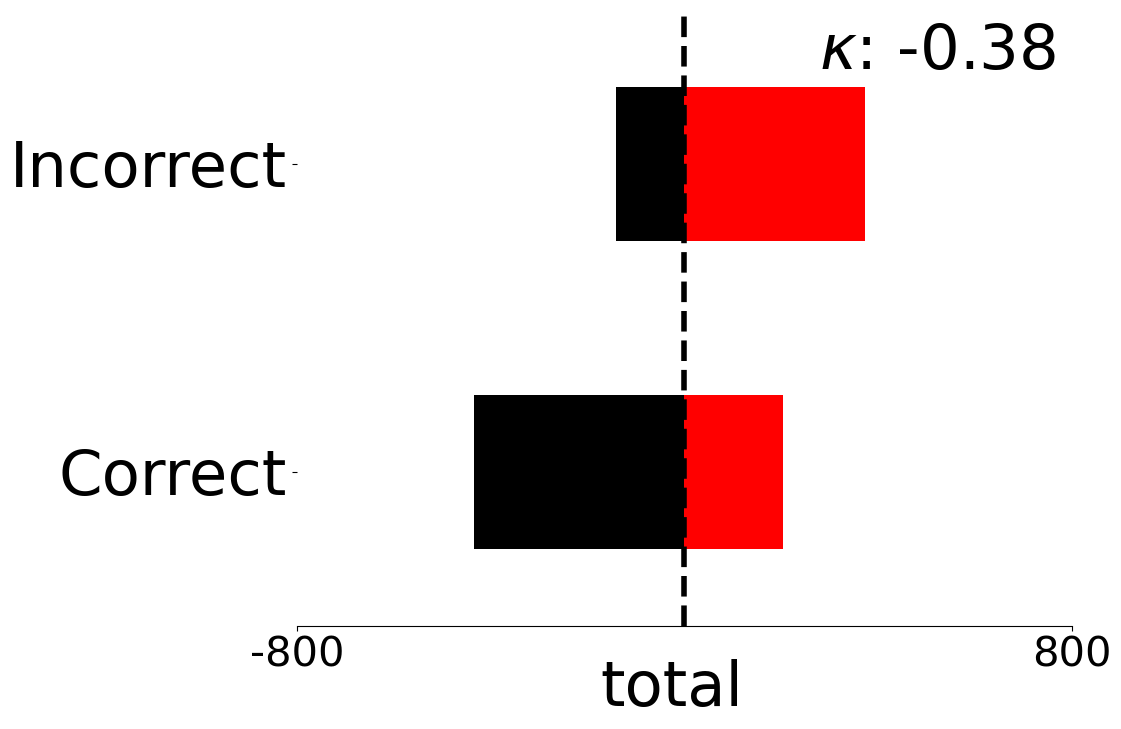}
                \label{fig:multi-info-regret1}
            \end{minipage}%
            \begin{minipage}{0.25\textwidth}
                \centering
                \hspace{2.5em}
                \underline{\bf \small OL}
                \includegraphics[width=\textwidth]{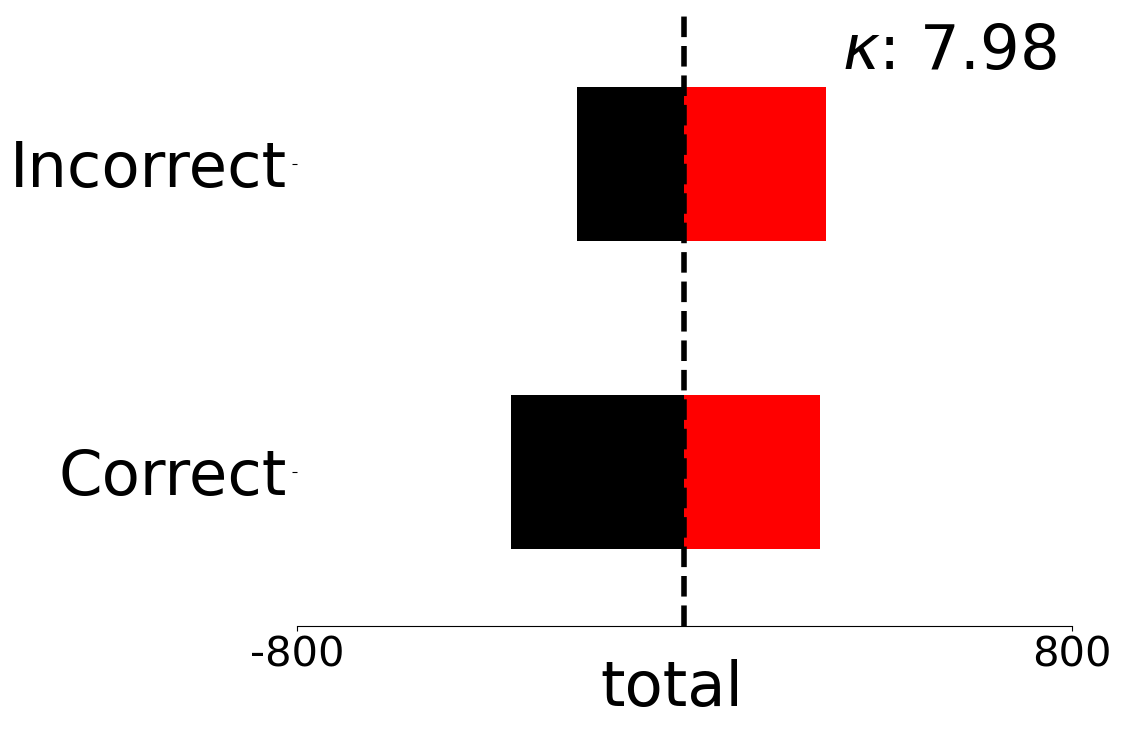}
                \label{fig:multi-info-regret3}
            \end{minipage}
            
            \caption{$GO\text{-}LED\text{-}OL$ scores for correct and incorrect responses around the median.
            LED score is shown in Figure~\ref{fig:gla_med_correct}. The median is preferred
            over the mean for low skew in small-sample tests. Kurtosis $\kappa$ indicates
            moderate to low tail concentration for multi-information with active regret.}
            \label{fig:gla_go_med_correct}
        \end{minipage}
    %\end{tcolorbox}
\end{figure*}

\begin{figure*}[hbt!]
    \centering
    %\begin{tcolorbox}[
        %colframe=black,colback=white,boxrule=1.0pt,
        %width=\linewidth-2em,left=0pt,right=0pt,
        %top=0pt,bottom=0pt,arc=0pt
    %]
        \begin{minipage}{\linewidth}
            \centering
            \begin{minipage}{0.25\textwidth}
                \centering
                \vspace{1em}
                \hspace{2em}
                \underline{\bf \small GH-LED\_ar}
                \includegraphics[width=\textwidth]{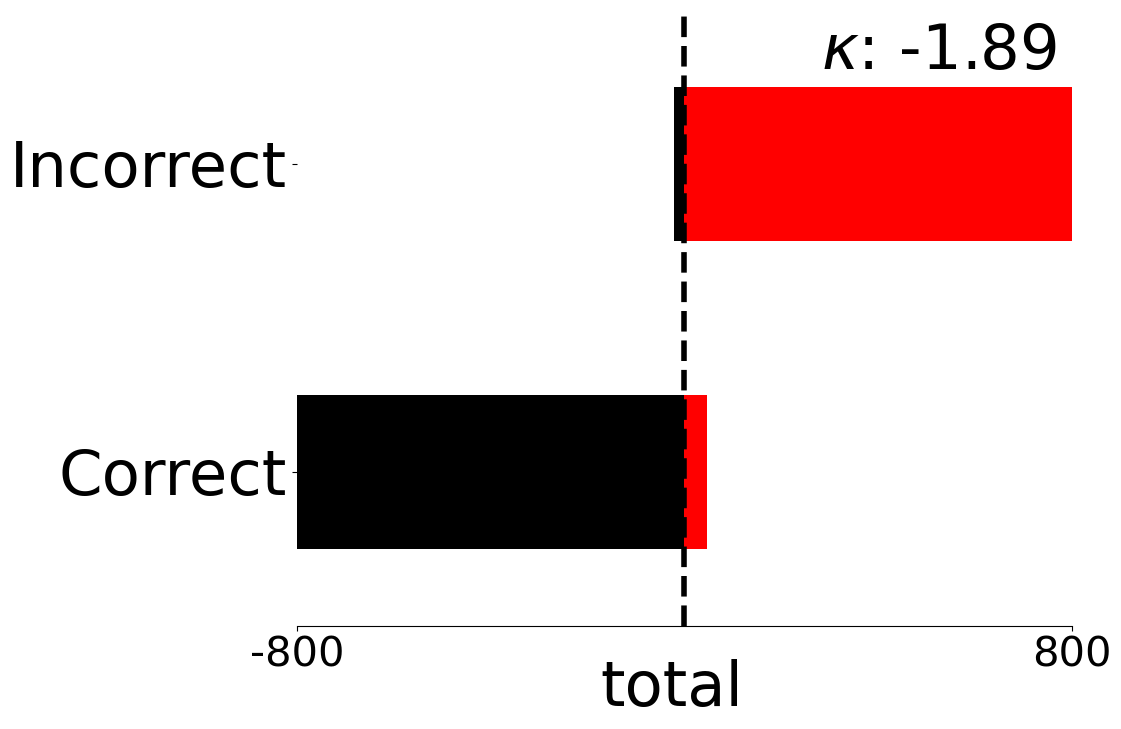}
                \label{fig:multi-info1}
            \end{minipage}%
            \begin{minipage}{0.25\textwidth}
                \centering
                \hspace{3em}
                \underline{\bf \small GH-LED}
                \includegraphics[width=\textwidth]{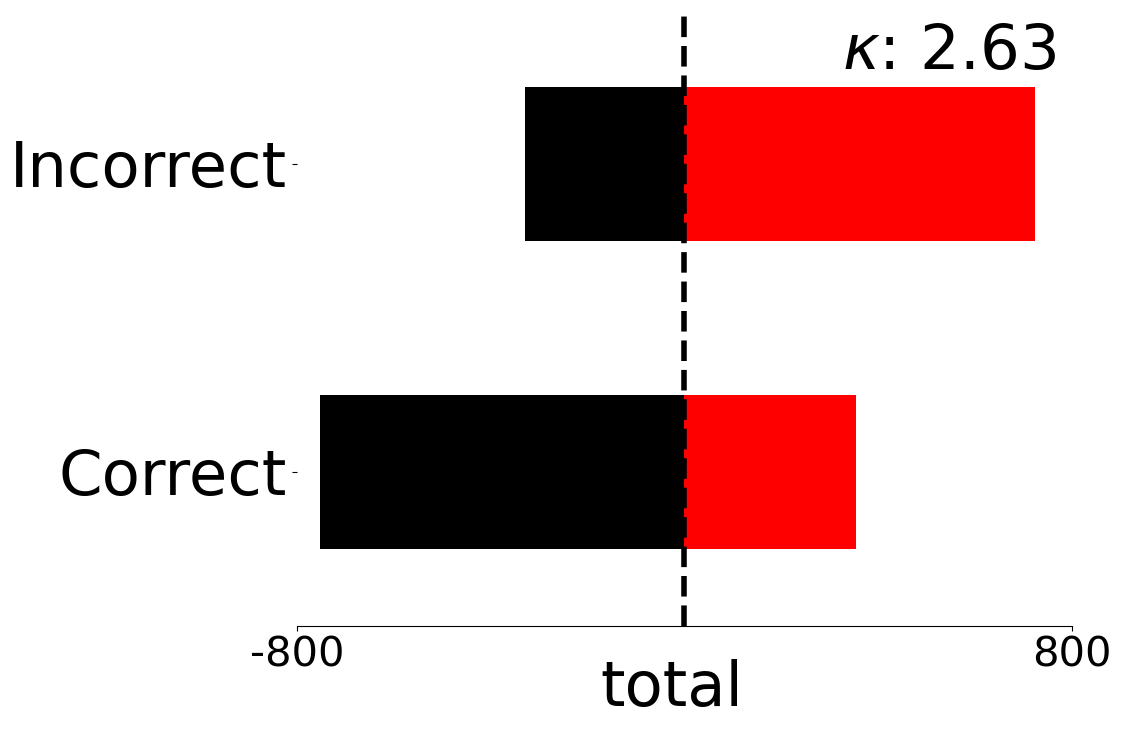}
                \label{fig:multi-info3}
            \end{minipage}%
            \begin{minipage}{0.25\textwidth}
                \centering
                \hspace{2.5em}
                \underline{\bf \small GH}
                \includegraphics[width=\textwidth]{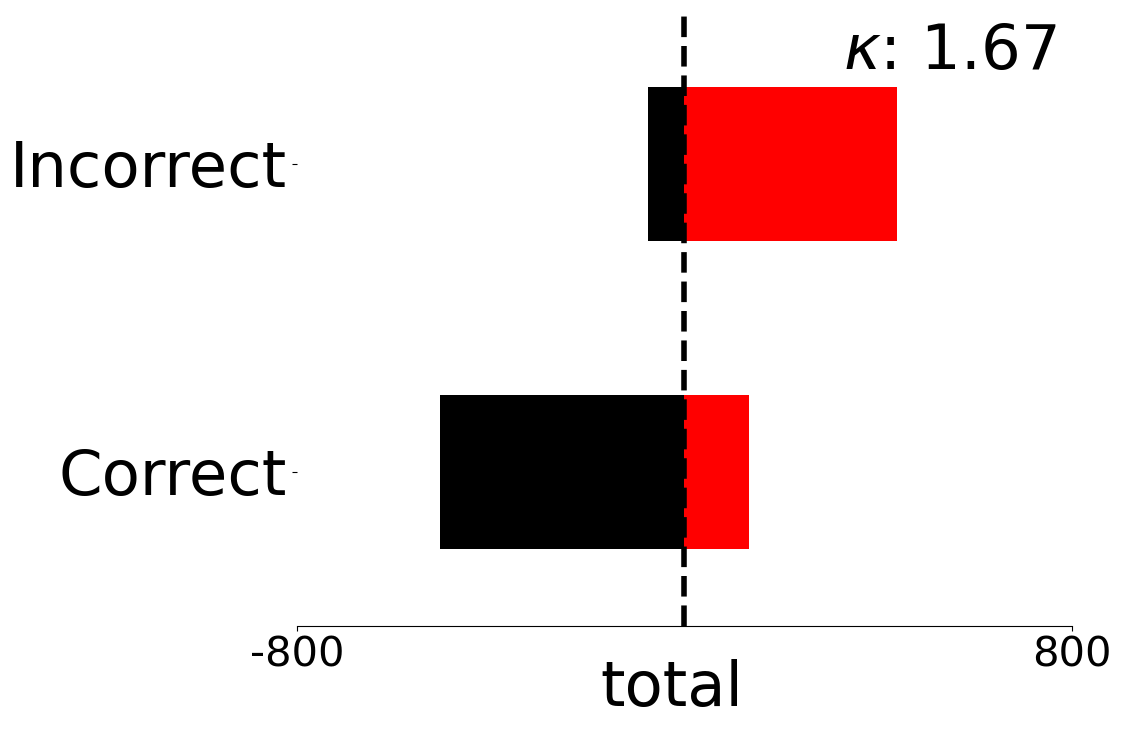}
                \label{fig:multi-info-regret1}
            \end{minipage}%
            \begin{minipage}{0.25\textwidth}
                \centering
                \hspace{2.5em}
                \underline{\bf \small LED}
                \includegraphics[width=\textwidth]{figs/led_score_all_bar.png}
                \label{fig:multi-info-regret3}
            \end{minipage}
            
            \caption{GH-LED scores for correct and incorrect responses distributed around
            the median.}
            \label{fig:gla_med_correct}
        \end{minipage}
    %\end{tcolorbox}
\end{figure*}

\subsection{Score Distributions for Multivariate Metrics in Relation to Median Scores}

In Figures~\ref{fig:gla_go_med_correct} and~\ref{fig:gla_med_correct}, an additional analysis of score distributions for the two variants of our metrics with active regret minimisation $ar$ are analysed in relation to the median scores. This assessment is also provided for variants with no $ar$ and univariate metrics.

\subsection{Score Distributions for Univariate Metrics}
We provide a visualisation of how univariate measures - computed with the constituent variables in our multivariate metrics - perform as indicators of the complexity of inputs in relation to the decisions outputted by the VLM in Figure~\ref{fig:hist_comp_vertical}. The distributions indicate that any variable taken on its own provides scores that conflate samples resulting in a correct or incorrect decision by the system.

\begin{figure*}[hbt!]
    \centering
    %\begin{tcolorbox}[
        %colframe=black,colback=white,boxrule=1.0pt,
        %width=\linewidth-2em,left=0pt,right=0pt,
        %top=0pt,bottom=0pt,arc=0pt
    %]
        \begin{minipage}{\linewidth}
            \centering
            \begin{minipage}{0.24\textwidth}
                \centering
                \hspace{2em}
                \includegraphics[width=\textwidth]{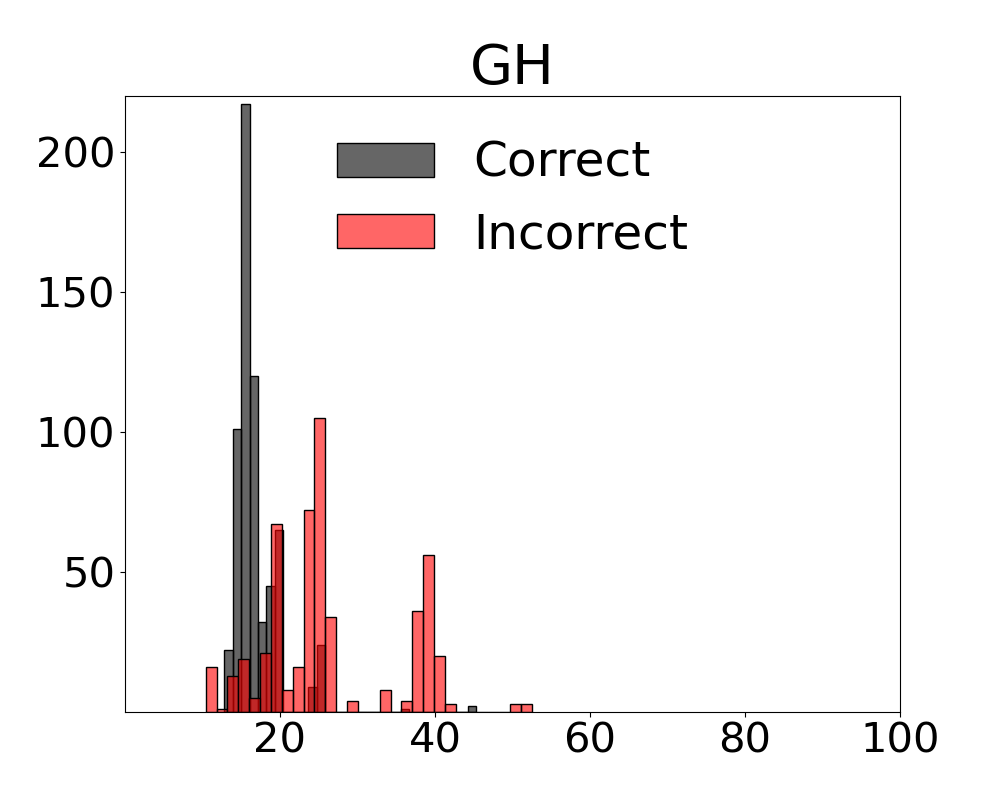}
                \label{fig:left1}
            \end{minipage}%
            \begin{minipage}{0.24\textwidth}
                \centering
                \hspace{3em}
                \includegraphics[width=\textwidth]{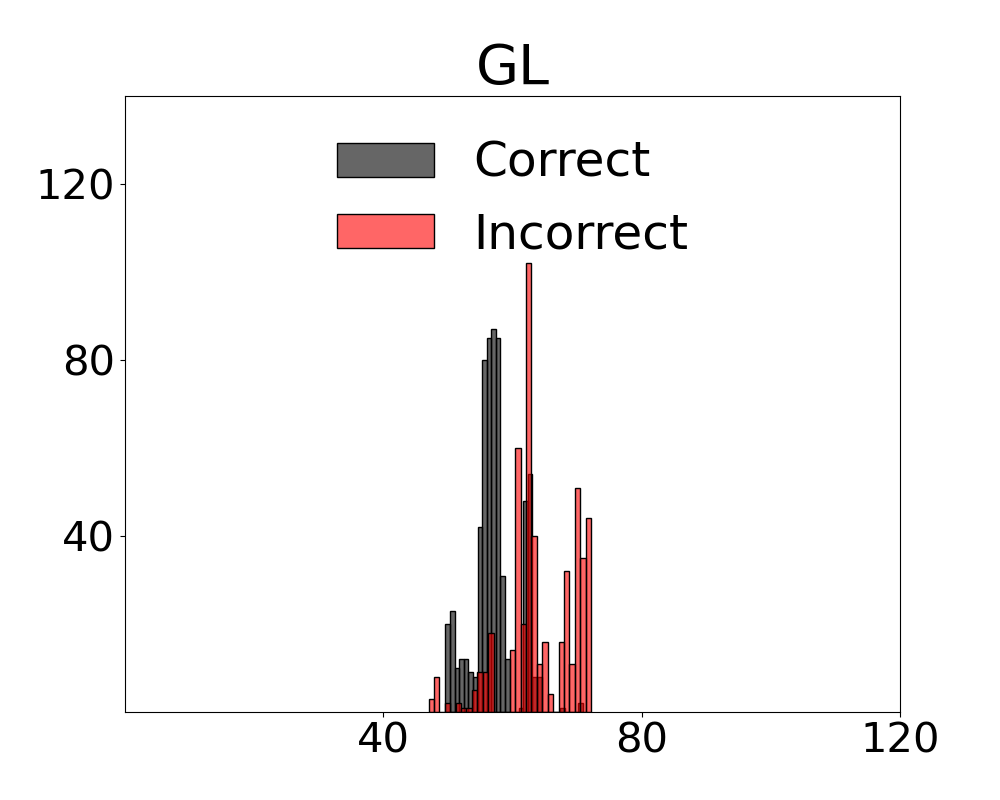}
                \label{fig:left2}
            \end{minipage}%
            \begin{minipage}{0.24\textwidth}
                \centering
                \hspace{2.5em}
                \includegraphics[width=\textwidth]{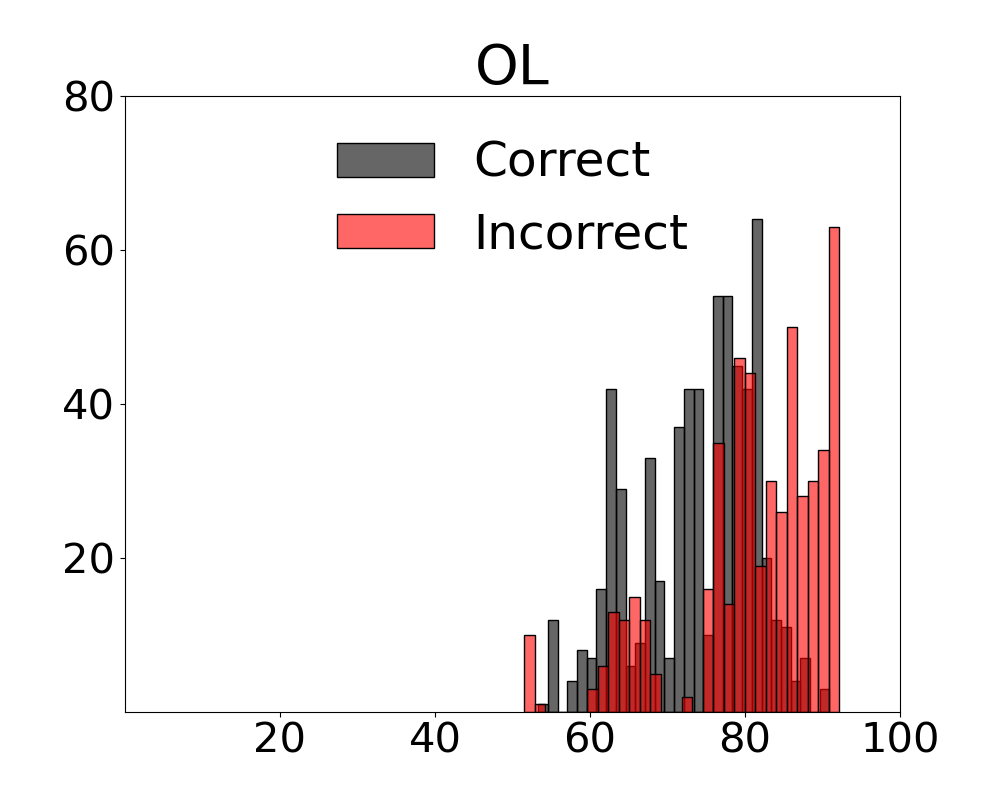}
                \label{fig:right1}
            \end{minipage}%
            \begin{minipage}{0.24\textwidth}
                \centering
                \hspace{2.5em}
                \includegraphics[width=\textwidth]{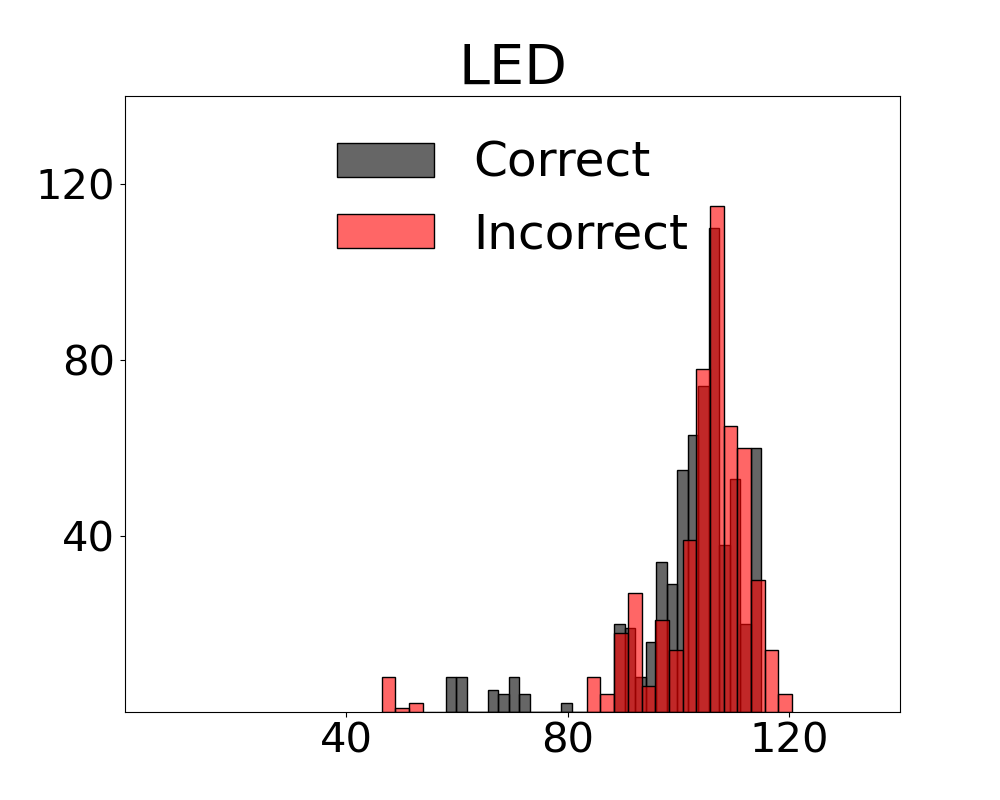}
                \label{fig:right2}
            \end{minipage}

            \caption{Score distributions for single visual inputs in relation to 
            correct and incorrect responses.}
            \label{fig:hist_comp_vertical}
        \end{minipage}
    %\end{tcolorbox}
\end{figure*}

\subsection{Neural Methods and Metrics with Active Regret}
We assess $MI\text{-}ar$ metrics with the two neural methods in our evaluations on the GeoProperties-3DS benchmark (see Table~\ref{table:neural_mizo}). Limited improvements on these methods are a reasonable expectation in the low-data regimes of the tasks that our research focuses on. Neural methods with standard first-order optimisation such as Stochastic Gradient Descent (SGD) are 
better suited to large-scale data scenarios~\citep{anthony2009learning}. In contrast our approach is effective for scenarios suited to online optimisation with limited spare samples for training.

\begin{table}[hbt!]
\centering
\begin{tabular}{p{5.0cm} p{0.8cm} @{\hspace{0.9cm}} p{1.6cm}}
\toprule
{} & \textbf{BER@8\,$\downarrow$} & \textbf{$\Delta$ on R1} \\ \midrule
\textbf{Video-LLaMA-13B} \\ 
\hspace{1mm} Ours+GO-LED-OL\textsubscript{ar} & 41.3 & -18.9 \\
\hspace{1mm} Ours+GH-LED\textsubscript{ar} & 40.4 & -21.0 \\
\hspace{1mm} Linear+SGD+GO-LED-OL\textsubscript{ar} & 54.2 & -7.6 \\
\hspace{1mm} Linear+SGD+GH-LED\textsubscript{ar} & 54.6 & -7.2 \\
\hspace{1mm} RBF+GO-LED-OL\textsubscript{ar} & 54.6 & -6.3 \\
\hspace{1mm} RBF+GH-LED\textsubscript{ar} & 54.5 & -6.4 \\
\cmidrule{1-3}
\textbf{Chat-UniVi-13B} \\ 
\hspace{1mm} Ours+GO-LED-OL\textsubscript{ar} & 43.1 & -18.4 \\
\hspace{1mm} Ours+GH-LED\textsubscript{ar} & 42.2 & -20.4 \\
\hspace{1mm} Linear+SGD+GO-LED-OL\textsubscript{ar} & 54.3 & -8.5 \\
\hspace{1mm} Linear+SGD+GH-LED\textsubscript{ar} & 54.9 & -7.9 \\
\hspace{1mm} RBF+GO-LED-OL\textsubscript{ar} & 56.4 & -6.2 \\
\hspace{1mm} RBF+GH-LED\textsubscript{ar} & 56.7 & -5.9 \\
\bottomrule
\end{tabular}
\caption{Neural methods with our MI-ZO metrics on the GeoProperties-3DS benchmark compared to our controller (Poly+ZO+MI).}
\label{table:neural_mizo}
\end{table}

\subsection{Variation in Results for Experiments}

We report standard deviations $\sigma$ for the two benchmark experiments by system and method in Tables~\ref{table:varbenchoo} and~\ref{table:varbenchfi}. Distributions of scores over runs are presented in Figures \ref{fig:varobjoc} and \ref{fig:varfi}. Means of each set of runs are displayed as black points and bars indicate the credible interval. 

\begin{table}[hbt!]
\centering
\begin{tabular}{p{5.7cm} p{1.2cm}}
\toprule
                                & $\boldsymbol{\sigma@8}$ \\ \midrule
\textbf{Video-LLaMA-13B} \\ 
\hspace{1mm} PID & 0.52 \\
\hspace{1mm} Extended Kalman & 0.50 \\
\hspace{1mm} Linear+SGD & 1.16 \\
\hspace{1mm} Poly+ZO+MI (ours) & 0.38 \\
\hspace{1mm} +GO-LED-OL & 1.76 \\
\hspace{1mm} +GH-LED & 1.32 \\
\hspace{1mm} +GO-LED-OL\textsubscript{ar} & 2.12 \\
\hspace{1mm} +GH-LED\textsubscript{ar} & 2.07 \\
\cmidrule{1-2}
\textbf{Chat-UniVi-13B} \\ 
\hspace{1mm} PID & 0.74 \\
\hspace{1mm} Extended Kalman & 0.40 \\
\hspace{1mm} Linear+SGD & 1.58 \\
\hspace{1mm} Poly+ZO+MI (ours) & 0.53 \\
\hspace{1mm} +GO-LED-OL & 1.18 \\
\hspace{1mm} +GH-LED & 1.65 \\
\hspace{1mm} +GO-LED-OL\textsubscript{ar} & 1.12 \\
\hspace{1mm} +GH-LED\textsubscript{ar} & 1.45 \\
\bottomrule
\end{tabular}
\caption{Standard deviation over 10 runs on the object occlusion benchmark.}
\label{table:varbenchoo}
\end{table}

\begin{table}[hbt!]
\centering
\begin{tabular}{p{5.0cm} p{0.8cm} p{0.8cm}}
\toprule
                                & $\boldsymbol{\sigma@5}$ & $\boldsymbol{\sigma@8}$ \\ \midrule
\textbf{Video-LLaMA-13B} \\ 
\hspace{1mm} PID & 0.50 & 0.88 \\
\hspace{1mm} Extended Kalman & 0.57 & 0.78 \\
\hspace{1mm} Linear+SGD & 1.08 & 1.83 \\
\hspace{1mm} Poly+ZO+MI (ours) & 0.89 & 1.10 \\
\hspace{1mm} +GO-LED-OL & 1.26 & 1.94 \\
\hspace{1mm} +GH-LED & 1.67 & 2.47 \\
\hspace{1mm} +GO-LED-OL\textsubscript{ar} & 1.98 & 2.65 \\
\hspace{1mm} +GH-LED\textsubscript{ar} & 1.96 & 2.16 \\
\cmidrule{1-3}
\textbf{Chat-UniVi-13B} \\ 
\hspace{1mm} PID & 0.43 & 0.64 \\
\hspace{1mm} Extended Kalman & 0.47 & 0.51 \\
\hspace{1mm} Linear+SGD & 1.11 & 1.79 \\
\hspace{1mm} Poly+ZO+MI (ours) & 0.63 & 1.06 \\
\hspace{1mm} +GO-LED-OL & 1.11 & 1.84 \\
\hspace{1mm} +GH-LED & 1.56 & 1.61 \\
\hspace{1mm} +GO-LED-OL\textsubscript{ar} & 1.20 & 2.84 \\
\hspace{1mm} +GH-LED\textsubscript{ar} & 1.48 & 2.88 \\
\bottomrule
\end{tabular}
\caption{Standard deviation over 10 runs on the feature identification benchmark.}
\label{table:varbenchfi}
\end{table}

\begin{figure}[hbt!]
    \centering
    
    \begin{minipage}{\linewidth}
        \centering
        % Removed \subfigure; just use \includegraphics directly
        \includegraphics[width=\textwidth]{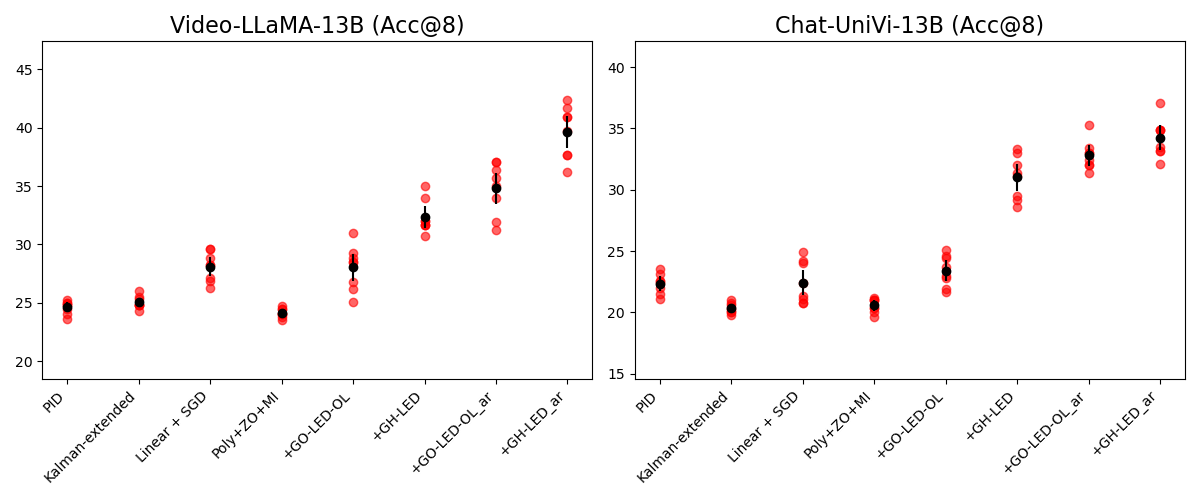}
        \label{fig:upper}
    \end{minipage}

    \caption{Variance over runs by method in our evaluation on handling object occlusion.}
    \label{fig:varobjoc}
\end{figure}

\begin{figure}[hbt!]
    \centering
    
    \begin{minipage}{\linewidth}
        \centering
        \bf \small Video-LLaMA-13B
        \par
        % Removed \subfigure; just use \includegraphics
        \includegraphics[width=\textwidth]{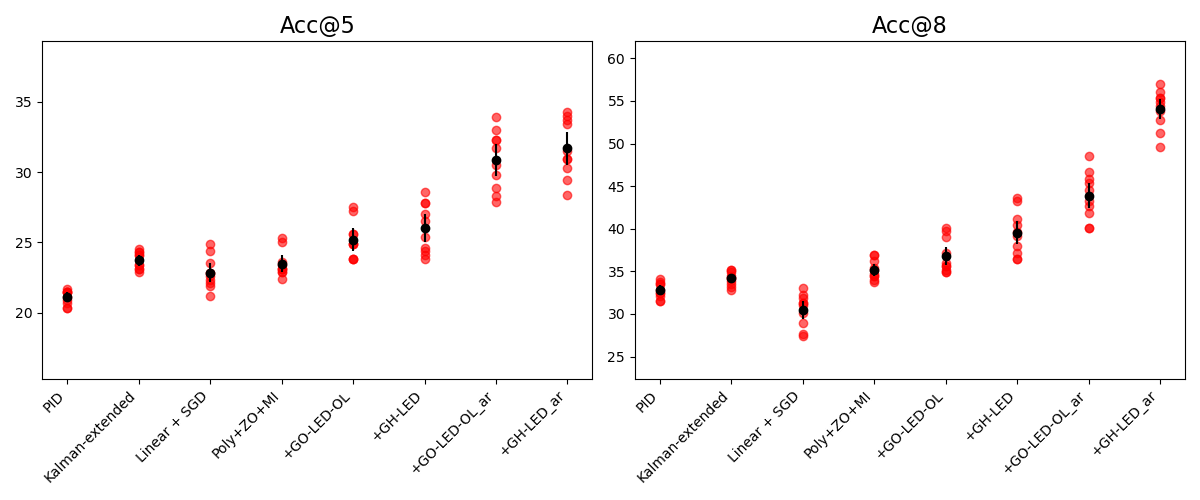}
        \label{fig:upper}
    \end{minipage}
    
    \vspace{1em}
    
    \begin{minipage}{\linewidth}
        \centering
        \bf \small Chat-UniVi-13B
        \par
        % Removed \subfigure; just use \includegraphics
        \includegraphics[width=\textwidth]{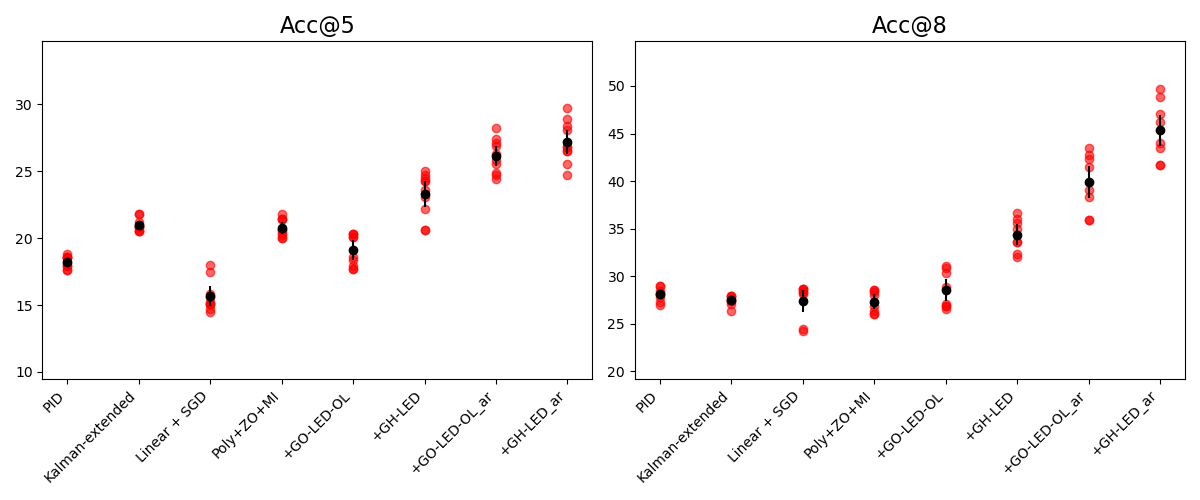}
        \label{fig:lower}
    \end{minipage}
    
    \caption{Variance over runs by method for feature identification on a budget of camera actions.}
    \label{fig:varfi}
\end{figure}

\subsection{Number of Operations}

Positioning of a camera in a 3D scene results in minimal computational overhead. The leading factor in assessing efficiencies of methods is the number of camera actions as each action incurs a constant increment in the overall time required to process inputs (see Figure~\ref{fig:time_line}). We observe that computational efficiency of all methods evaluated is a secondary concern. To provide a complete picture, reporting on mean operations per point of accuracy for each control method is presented in Figure~\ref{fig:avg_ops_acc}. Our methods require fewer CPU operations to improve the accuracy of a Video-LLaMA-13B system on FeatureID-3DS benchmark - the main benchmark that we propose to assess time efficiency.

\begin{figure}[hbt!]
    \begin{minipage}{\linewidth}
        \centering
        \textbf{ \small {Wall-clock Time by Action Count}}

        \begin{minipage}{\linewidth}
            \centering
            \begin{minipage}{\textwidth}
                \centering
                \includegraphics[width=\textwidth]{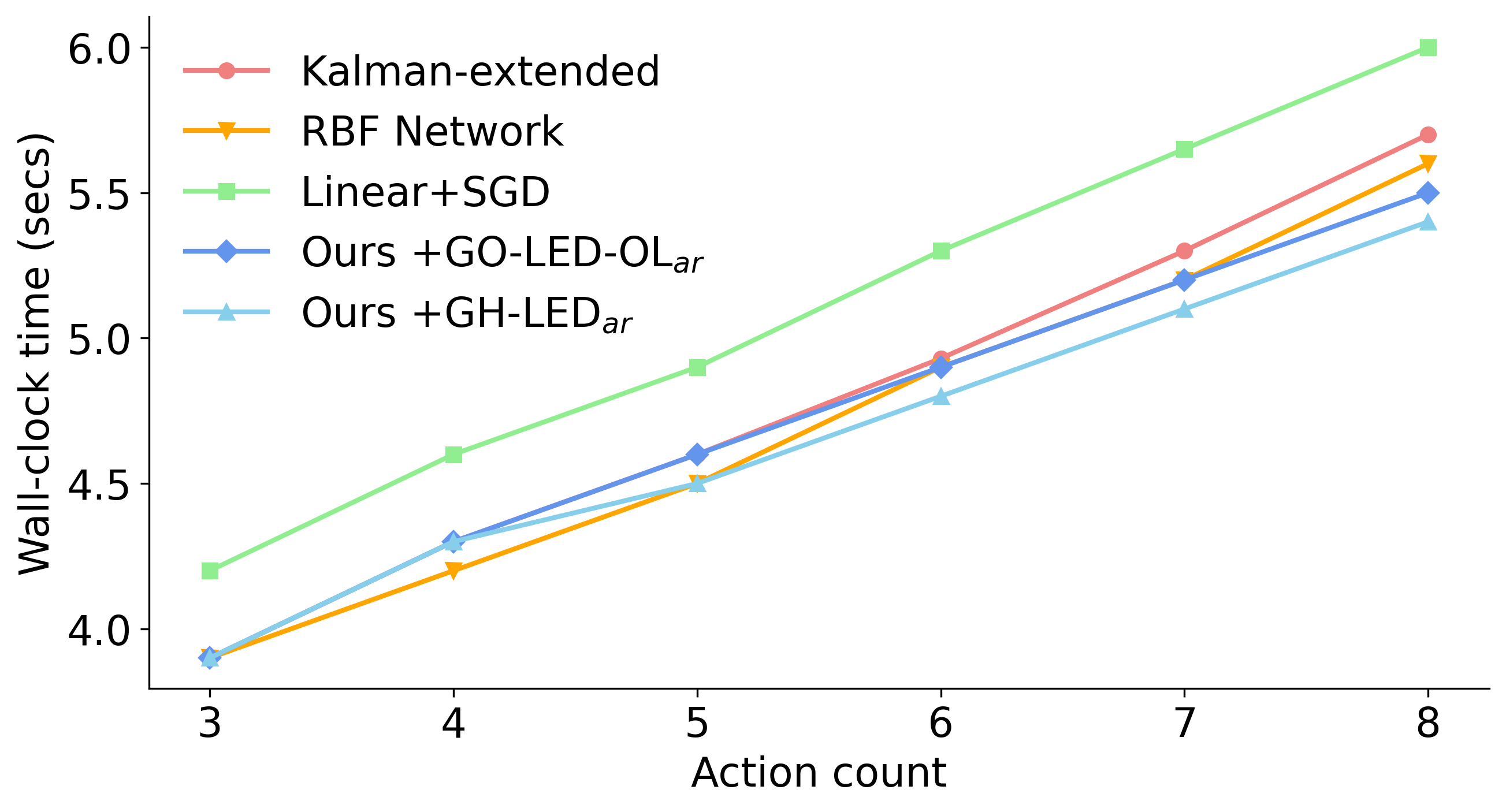}
                \label{fig:avg_ops}
            \end{minipage}
        \end{minipage}

        \caption{Wall-clock time in relation to the number of actions for a Video-LLaMA-13B system in the FeatureID-3DS benchmark is a linear progression. A single camera action equals one conversation turn with the result that action count is the primary factor in time to process inputs.}
        \label{fig:time_line}
    \end{minipage}
\end{figure}

\textbf{Results:} Our $GO\text{-}LED\text{-}OL\textsubscript{ar}$ metric assists the Poly+ZO+MI controller to prioritise viewpoints that reduce VLM errors with fewer actions (see scores for Acc@5 in Table~\ref{table:featureid}). Analysis of viewpoint replacements provided in Appendix~\ref{sec:app_results} indicates that views displaying features with strong visual prominence are prioritised. Results for the above suggest only minor variations between runs and are presented in Appendix~\ref{sec:app_results}.

\begin{figure}[hbt!]
    \begin{minipage}{\linewidth}
        \centering
        \textbf{ \small {Mean Operations / Accuracy $\downarrow$ }}

        \begin{minipage}{\linewidth}
            \centering
            \begin{minipage}{\textwidth}
                \centering
                \includegraphics[width=\textwidth]{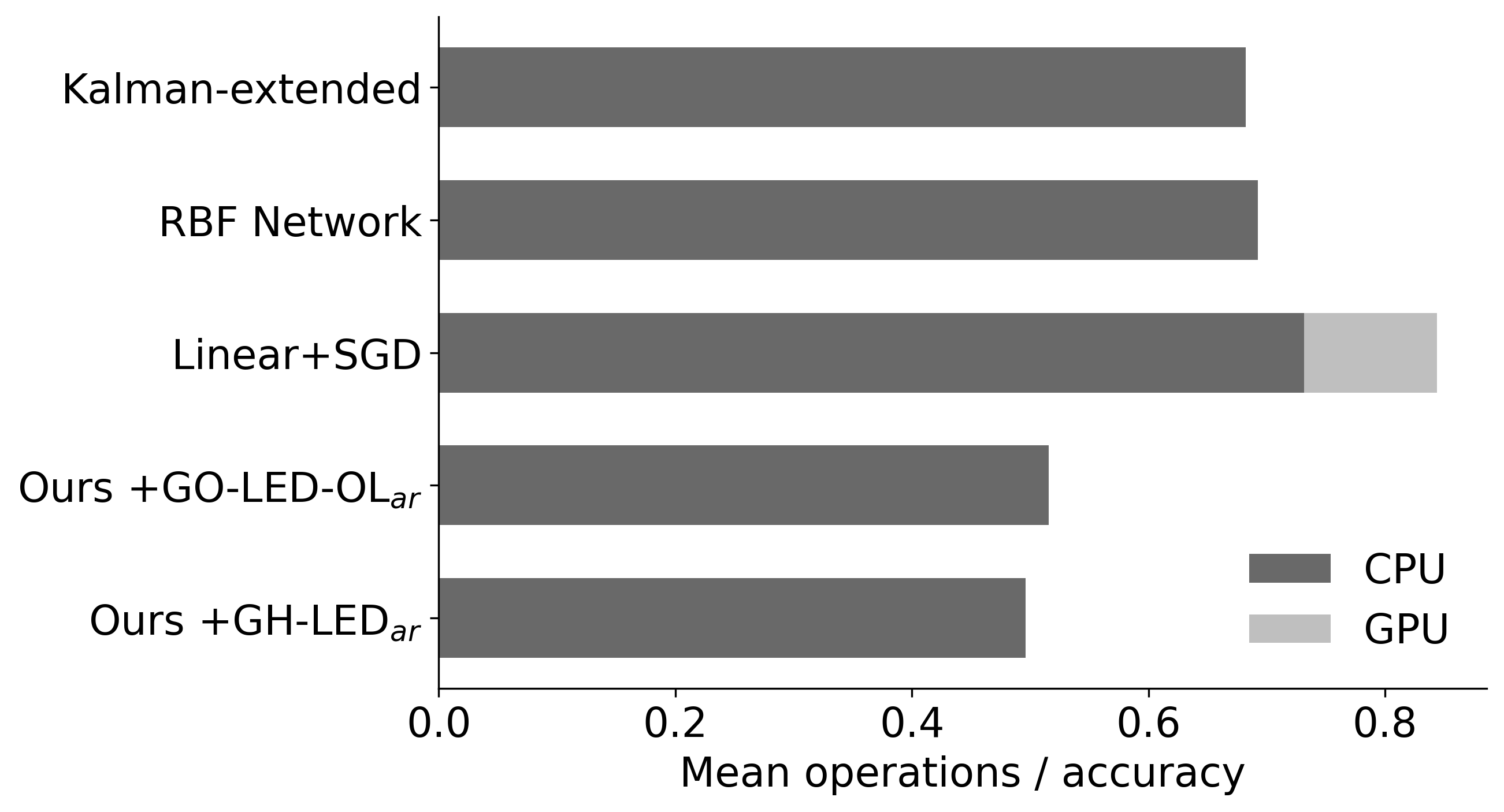}
                \label{fig:avg_ops}
            \end{minipage}
        \end{minipage}

        \caption{Computational efficiency of evaluated methods is assessed by reporting the average CPU (billions of instructions) and GPU (G-FLOPs) required to gain a percentage point of accuracy over runs in the range $[3, 8]$ actions. Scores are for each method with Video-LLaMA-13B in the FeatureID-3DS benchmark.}
        \label{fig:avg_ops_acc}
    \end{minipage}
\end{figure}

\subsection{Assessments with Language Inputs}
 We evaluate our best performing method and system pair for the FeatureID-3DS on input pairs with modified descriptions to assess the influence of phrasing. To conduct the first test, descriptions are replaced by n-grams containing random selections of alphabetic characters. These samples are generated using the method proposed by \cite{chu-etal-2022-signal}. Performance collapses with linguistic inputs that are unrelated to scenes. We also perform an assessment with descriptions consisting of an additional sentence. Declines in performance are due to working with VLMs trained on shorter textual inputs (see Table~\ref{table:sentence_cnt}).

\vspace{-0.4em}
\begin{table}[hbt!]
\begin{center}
\begin{threeparttable}
\setlength{\tabcolsep}{0.5pt}
\begin{tabular}{l l l l l}
\hline
                                & \multicolumn{2}{l }{\textbf{rnd n-grams}} & \multicolumn{2}{l}{\textbf{2 sentences}} \\ \cline{2-5}
                                & \textbf{Acc@8}    & \textbf{$\Delta$ on best}   & \textbf{Acc@8}    & \textbf{$\Delta$ on best}   \\ \hline
\textbf{Ours +GO-LED-OL\_ar}            & 15.2 & -29.3 & 35.3 & -9.2   \\ 
\textbf{Ours + GH-LED\_ar}            & 15.3 & -38.0 & 43.9 & -9.4   \\ \hline
\end{tabular}
\caption{Performance on FeatureID-3DS for the best variant (ie Video-LLaMA-13B with our controller and $MI\textsubscript{ar}$ metrics) using n-grams with alphabetic characters selected at random (rnd n-grams) and longer descriptions of 2 sentences. The delta is the drop in performance compared with the default in the paper of 1-sentence descriptions.}
\label{table:sentence_cnt}
\end{threeparttable}
\end{center}
\end{table}
\vspace{-0.8em}

% Section

\section{Additional Results from the Numerical Analysis}
\label{sec:app_numerical}

\subsection{Object Counts in Scenes}
An analysis of the ability for VLMs to discriminate 3D scenes based on the object counts in view is performed with scenes from our diagnostic dataset and additional samples. We split UC-3DS-MI into a subset with two objects and scenes with a single object. A third subset is generated with three objects to complete the data required to run the test. Weak performances for all methods on scenes with three objects are due to the inability for the open source VLMs to reason over scenes with a high number of objects.

\begin{table}[hbt!]
\centering
\begin{tabularx}{\linewidth}{X l l l}
\toprule
                                & \multicolumn{3}{l}{\textbf{Number of Objects}} \\ 
                                & \textbf{2} & \textbf{3} & \textbf{4} \\ \midrule

VLM (no control) & 0.34 & 0.31 & 0.21 \\
Ours+GH-LED$\textsubscript{ar}$ & 0.78 & 0.72 & 0.24 \\
Ours+GO-LED-OL$\textsubscript{ar}$ & 0.83 & 0.79 & 0.26 \\ 
\bottomrule
\end{tabularx}
\caption{Results on scenes grouped into sets based on the counts of polygon objects in view. Tests are run with a standalone Video-LLaMA-13B (VLM with no control) and our Poly+ZO+MI controller with the active regret measures $GH\text{-}LED\textsubscript{ar}$ and $GO\text{-}LED\text{-}OL\textsubscript{ar}$.}
\label{table:obj_cnt}
\end{table}

\subsection{Posterior Concentration} 
 Sensitivity in relation to combined visual and linguistic outputs is dependent on the combination of mixture components in computing MI. The advantages of negating overlaps between variables are displayed in the distances between scores for $MI\text{-}ar$. This relation between minimising regret and sensitivity in MI on limited samples motivates a diagnostic on statistical complexity in model optimisation. We study changes in the parameters of a logistic regression model estimated in a framework for Gibbs sampling. Our metric is the difference between maximum and minimum posterior concentrations that starts with samples from the conditional distribution $p(X, y) = p(rX \mid y) \cdot p(y)$: 
\begin{equation}
\Delta y = \max(y_t) - \min(y_t)
\end{equation}
where
\begin{equation}
\begin{split}
y_t & \sim \text{Bernoulli}\left(\frac{1}{1 + e^{-X_{t-1}}}\right), \quad p(X_t y_t) = \\
& p(y_t \mid X_t \cdot \text{Bernoulli}\left(\frac{1}{1 + e^{-r_t}}\right)
\end{split}
\end{equation}
Model selection is determined by the low model bias and theoretical guarantees provided by logistic regression when the target for real-based measures is $\{0, 1\}$~\citep{efron1975efficiency}. Our diagnostic examines variance in the posterior $\beta$ of the model while a logistic regression is applied to cumulative counts of paired MI estimates in increments of $6$ samples. PC dispersion is a point difference over the inverse of the standard deviation $\sigma$ for $\beta$ 
\begin{equation}
\text{PC Dispersion} = \frac{1}{\sigma(p(\beta | X, y))}
\end{equation}
where $p(\beta | X, y) \propto p(y | X, \beta) p(\beta)$. Our end measure quantifies absolute dispersion of variance in the model as the number of input samples increases. Posterior concentration is proportional to model stability supplied by the metric over the range of sample sizes from the increment when at least one of each of $\{0, 1\}$ labels is recorded. 

\begin{table}[hbt!] 
\begin{center}
\begin{tabular}{l c c}
\hline
%\underline{\bf \small PC Dispersion  $\downarrow$}
\bf \small PC Dispersion  $\downarrow$
& \multicolumn{1}{c}{\textbf{MI}} & \multicolumn{1}{c}{\textbf{MI-ar}} \\ \cline{1-3} 
{\textbf{GO-LED-OL}}      & 298 & 23\\ 
{\textbf{GH-LED}}   & 162 & 55  \\ 
\hline
\end{tabular}
\end{center}
\caption {Our novel PC dispersion measure demonstrates the reduced levels of variance for MI metrics computed with active regret minimisation.}
\label{table:pcd}
\end{table}

A low value on the change in posterior concentration for $MI\text{-}ar$ methods in Table~\ref{table:pcd} indicates stable updates in the beta parameter of the model over a run. Sensitivity for $MI\text{-}ar$ variants to information (see Table 1 in the main paper) supports application in scenarios with limited demonstrations and online feedback.

Plots to illustrate the differences in posterior concentration for variants of our metrics are provided in Figure~\ref{fig:hist_go_led_cv_pcd}. Posterior concentration is proportional to model stability supplied by the metric over the range of sample sizes from the increment when at least one each of $\{0, 1\}$ labels is recorded. A low value on the change in posterior concentration for $MI\text{-}ar$ methods indicates stable updates in the beta parameter of the model over a run.

\subsection{Model Stability and Number of Data Samples}
An additional diagnostic quantifies the stability of a model fitted using MI variants with and without active regret minimisation (see Figure~\ref{fig:mean_scores}). Given the low availability of data for 3D scenes paired with language descriptions, a desirable property is to enable fitting a model with a minimum number of samples from dataset $\mathcal{D}$ that predict the boolean target of system responses. We design the analysis in the form of direct comparison to demonstrate the contribution of MI methods in training a model to identify viewpoints commensurate with the quantity of information content presented by a scene.

\begin{figure}[hbt!]
    \begin{tcolorbox}[
        colframe=black,colback=white,boxrule=1.0pt,width=\linewidth,
        left=0pt,right=0pt,top=0pt,bottom=0pt,arc=0pt
    ]
        \begin{minipage}{\linewidth}
            \centering
            \underline{\bf \small PC Dispersion $\downarrow$}
            \vspace{1em}

            \begin{minipage}{\linewidth}
                \centering
                \begin{minipage}{0.45\textwidth}
                    \centering
                    \includegraphics[width=\textwidth]{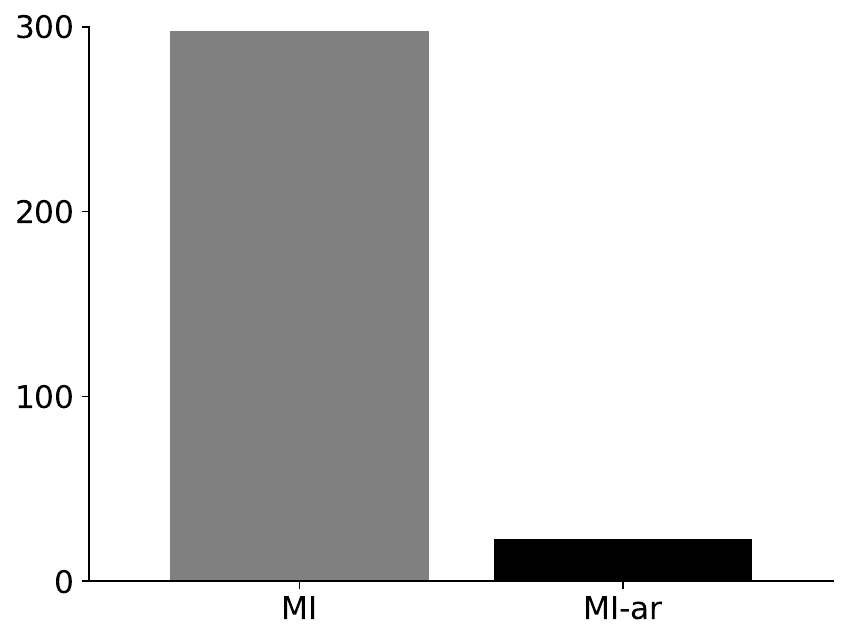}
                    \label{fig:cent2a}
                \end{minipage}
                \hfill
                \begin{minipage}{0.45\textwidth}
                    \centering
                    \includegraphics[width=\textwidth]{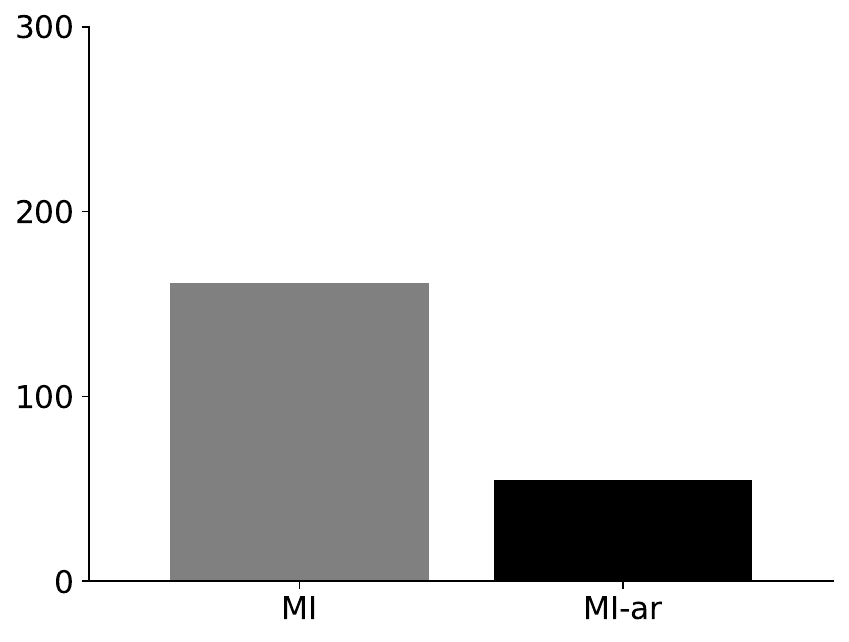}
                    \label{fig:cent2b}
                \end{minipage}

                \bf \small GO-LED-OL
                \hspace{1.8cm}
                \bf \small GH-LED
            \end{minipage}

            \caption{Variants of the $GO\text{-}LED\text{-}OL$ and $GH\text{-}LED$ metrics with 
            active regret minimisation $MI\text{-}ar$ are compared against versions calculated with standard MI on PC dispersion. Our diagnostic is derived with Gibbs sampling and measures variance in models receiving different numbers of samples.}
            \label{fig:hist_go_led_cv_pcd}
        \end{minipage}
    \end{tcolorbox}
\end{figure}

\begin{figure}[hbt!]
    \begin{minipage}{\linewidth}
        \centering
        \underline{\bf \small Mean Scores} \\[0.7em]
        \begin{minipage}{\linewidth}
            \centering
            \begin{minipage}{0.45\textwidth}
                \centering
                \includegraphics[width=\textwidth]{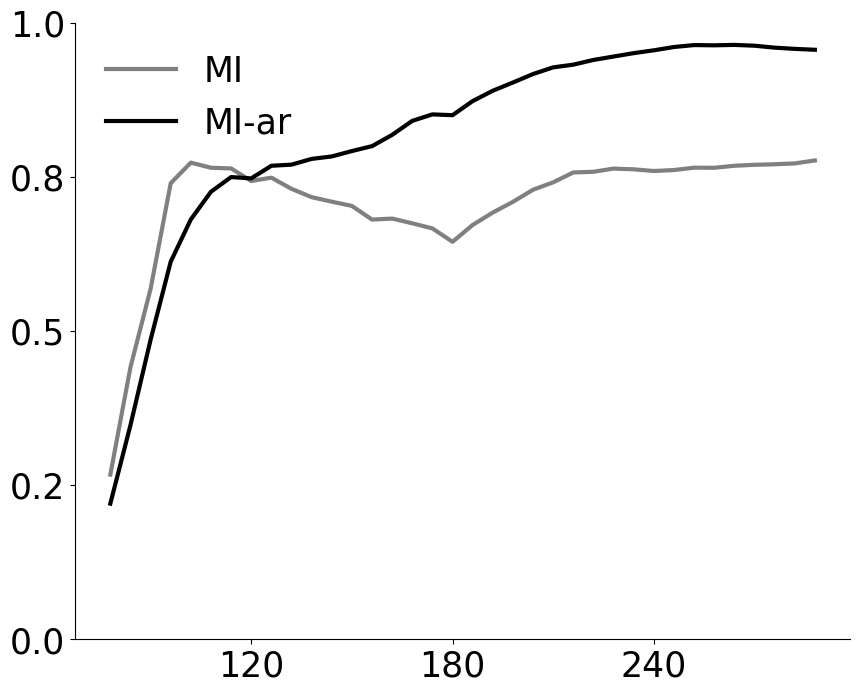}
                \label{fig:cent2}
            \end{minipage}
            \hfill
            \begin{minipage}{0.45\textwidth}
                \centering
                \includegraphics[width=\textwidth]{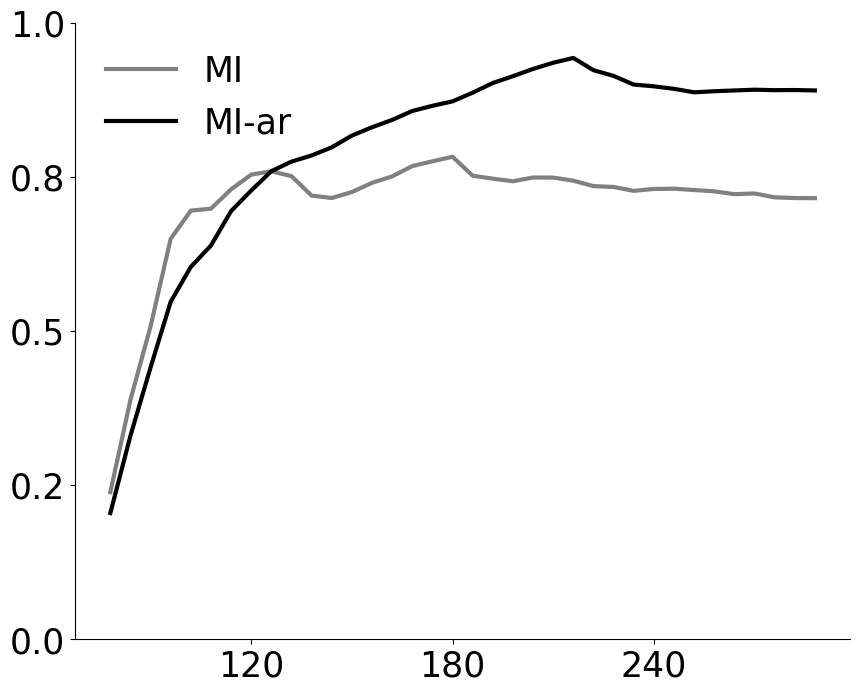}
                \label{fig:cent1}
            \end{minipage}
        \end{minipage}

        \bf \small GO-LED-OL \hspace{2.7cm} \bf \small GH-LED
        \caption{Model stability in relation to the number of data samples. Variants with $MI\text{-}ar$ show a stable progression in measuring information content.}
        \label{fig:mean_scores}
    \end{minipage}
\end{figure}

%\onecolumn

%\newpage
%\begin{multicols}{2}
%\end{multicols}

\section{Additional Details on the Method}
\label{sec:app_method}

\subsection{Controller}

We provide the full specification for our controller to predict camera actions. Assuming a prior process where a VLM returns predictions on a sample scene $\varsigma$ and language description $l$ from dataset $\mathcal{D}$, a set of functions models these predictions and MI-ZO to output a sequence of camera actions. The actions position the camera to return a set of viewpoints defined by $(X, Y)-$ and $z-$axes. Functions are described by subprocess and in a figure with detailed specifications on individual operations at each stage. 

\textbf{Interval Estimation} To enable working directly with continuous MI scores, a function converts values to intervals $i$ with selection of interval size based on entropy maximisation
\begin{equation}
    X_{MI} \leftarrow \text{interval}_i = \arg\max_{\Delta_i} H(x_k, \Delta_i)
\end{equation}
on proposals for cut points generated with Halton sequences~\cite{halton2005quasi}.  We prefer this estimation method to a random number generator to reduce variation. Variance is further reduced by computing a mean over cut point proposals returning maximum entropy. The result is an automatic process that requires no set interval widths in experiments.

\textbf{Component Models} Two component models $CM$ filter data in the responses with the proxy labels $\hat{Y}$ generated during the first step in Algorithm~\ref{alg:cmu}. During the correction round, proxy labels replace the boolean values on the match of a viewpoint with its description that are provided as actual labels during the measurement round. To limit processing time, modeling is performed in both models with iterative least squares. 

Component Model 1 increments the probability of prediction error $\mathbb{P}^{[x \neq x']}$ by the system for a viewpoint $Vp$ where an error was marked in the prior round. Coefficients are measured for the set of decisions $Dec$ with corresponding viewpoint labels and the demonstration data are updated. Test error rates and score-based measures on scene attributes are processed by Component Model 2 that ranks $z-$axis levels $Dim^z$ for each viewpoint. Traces of the covariance matrices $tr$ in each component model are retained as indicators of model fit $\lim_{n \to \infty}$. 

\textbf{Central Unit} Outputs are passed to the Central Unit of the controller. Acceptance of VLM feedback and decisions on the $z-$axis level by viewpoint are modeled using the covariance traces normalised and converted to an inverse factor. View-level data are passed to update element values over a low dimensional representation of the scene in the form of an interaction matrix.

\textbf{Interaction Matrix} An interaction matrix $\mathbb{A}$ is a graph $XYZ$ and a superset over the scene composed of an element drawn from set $XY$ (element $xy$) with set $Z$ (element $z$). Elements in each case are unique instances by position. Set $XY$ is a cyclic graph consisting of adjacent nodes with a direct edge to any element in the fully connected graph of set $Z$. Pendant nodes in the factor graph $XY$ define the adjacency matrix of graph $XYZ$. A graph product will result in the targeted structure for the graph and a strong product $\boxtimes$ provides a specific advantage in preserving the connectivity of the factors in the edges of any vertex set.

%\end{multicols}
\onecolumn

\begin{center}
%\begin{minipage}{0.45\textwidth}
\begin{minipage}{\textwidth}
\centering
\begin{algorithm}[H]
\caption{Controller Module}
\label{alg:cmu}
\begin{algorithmic}
\STATE {\bfseries Input:} Visual scene $\varsigma$, Description $l$, Correctness $y$
\STATE Generate proxy labels on few samples with derivative-free estimation
\FOR{views in scene $\varsigma$} 
    \STATE $MI \leftarrow g_{MI-ZO}(H(Dim_n), Y)$ \hspace{68mm} $\triangleright$ Compute MI scores with MI-ZO
    \STATE $X_{MI} \leftarrow \text{interval}_i = \arg\max_{\Delta_i} H(x_k, \Delta_i)$
    \STATE Return proxy labels by viewpoint
\STATE \textbf{return} $\hat{Y}$
\ENDFOR
\FOR{$\varsigma$ in $\mathcal{D}$}
    \STATE $CM_1 \leftarrow 0$
\ENDFOR
\STATE 
\STATE \textbf{Function} Component Model 1 
\STATE {\bfseries Input:} Decision $Dec$, proxy labels $\hat{Y}$
\FOR{views in scene $\varsigma$}
    \STATE $\{\mathbb{P}_i\}_{i=1}^{m} \leftarrow \min_{Dec} \sum_{k=1}^{n} \left(Dec(Vp_k) - \hat{y}_k\right)^2$ \hspace{61mm} $\triangleright$ Model prediction errors
    \STATE $tr\left(\{R_k\}_{k=1}^{n}\right) \leftarrow tr\left(\left\{ \left(Dec(Vp_k) - \hat{y}_k\right)^2 \right\}_{k=1}^{n}\right)$ \hspace{69mm} \(\triangleright\) Compute trace
\STATE \textbf{return} Set of error probabilities per viewpoint $\mathbb{P}^{[x \neq x']}$, Metric on model fit $tr\left(\{R_j\}_{j=1}^{n}\right) \mathbb{P}^{[x \neq x']}$
\ENDFOR

\STATE 
\STATE \textbf{Function} Component Model 2 
\STATE \textbf{Input:} $z-$axis levels $Vp_{Dim^z}$, proxy labels $\hat{Y}$
\FOR{views in scene $\varsigma$}
    \STATE $\{CS_j\}_{j=1}^{n} \leftarrow \min_{Vp_{Dim^z}} \sum_{k=1}^{m} \left(Vp_{{Dim^z}_k} - \hat{y}_k\right)^2$ \hspace{53mm} \(\triangleright\) Model $z-$axis rankings
    \STATE $tr\left(\{R_j\}_{j=1}^{n}\right) \leftarrow tr\left(\left\{ \left(Vp_{{Dim^z}_j} - \hat{y}_j\right)^2 \right\}_{j=1}^{n}\right)$ \hspace{69mm} $\triangleright$ Compute trace
\STATE \textbf{return} Confidence in $z-$axis level predictions $CS_j$, Metric on model fit $tr\left(\{R_j\}_{j=1}^{n}\right) CS_j$
\ENDFOR
\STATE 
\STATE \textbf{Function} Central Unit 
\STATE \textbf{Input:} Error probability per viewpoint $\mathbb{P}^{[x \neq x']}$, Confidence in $z-$axis level predictions $CS_j$, \\ Metrics on model fit $(\mathbb{P}^{[x \neq x']}, tr\left(\{R_j\}_{j=1}^{n}\right) CS_j)$
\FOR{views in scene $\varsigma$}
    \STATE $\sum_{i=1}^{p} \mathbb{P}_i \cdot \left(\sum tr\left(R_j^2\right) \right)^{-1} \leq \tau$ \hspace{85mm} $\triangleright$ Viewpoint operations 
    \STATE $\sum_{j=1}^{n} CS_j \cdot \left(\sum tr\left(R_j^2\right) \right)^{-1} \leq \tau$ \hspace{72mm} $\triangleright$ Confidence score operations
    \STATE $\{\mathfrak{a}_i\}_{i=1}^{p} \leftarrow \boxtimes\left(\mathbb{A}(XY, Z), (Out_x, Out_y)\right)$ \hspace{64mm} $\triangleright$ Update interaction matrix
\ENDFOR
\STATE {\bfseries return} Set of camera actions $\{\mathfrak{a}_1, \mathfrak{a}_2, \ldots, \mathfrak{a}_n\}$
\end{algorithmic}
\end{algorithm}
\end{minipage}
\end{center}

\newpage

% Section

\section{Theoretical Analysis}
\label{sec:app_proofs}

We present details of the theoretical analysis and proofs for the MI-ZO algorithm. The section begins by proving nonpositivity when estimating multi-information, presents the reformulation of the problem in our approach using a duality, moves the analysis into the online setting, and concludes with a proof to Theorem~\ref{theorem:miexp}, which states that a function exists that places an upper bound on elements in component variables that reduce the expressivity of multivariate combinations:

\begin{theorem}[Function with upper bound on nonpositive contribution]\label{theorem:miexp}
There exists a function that combines a set of $n>2$ entropies $H(Dim_n)$ with an upper bound on the nonpositive contribution of reductive units to an output estimate MI that is constant with a bound on the inner product of the vector on total units and the vector of observed information.
\end{theorem}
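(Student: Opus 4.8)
The plan is to exhibit the combining function explicitly as the weighted mixture already introduced in Algorithm~1, $g(H_1,\ldots,H_n) = MI\!\bigl(\sum_{h} \pi_h(\theta^{mix};\phi)\,H_h \cdot \Lambda,\, Y\bigr)$, and then to bound the part of this estimate that is driven down by reductive units $\mathfrak{b}_{\mathfrak{d}}$. First I would rewrite the estimate as a signed sum over unit states, separating the additive contribution of up units $\mathfrak{b}_{\mathfrak{u}}\in\mathfrak{Bu}$ from the reductive contribution of down units $\mathfrak{b}_{\mathfrak{d}}\in\mathfrak{Bd}$, so that the quantity to be controlled is the aggregate negative term identified in Lemma~1 as the source of nonpositivity. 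The goal is to show this term is at most $g(\vec{w},\alpha,\gamma)$, a constant whenever the inner product $\langle\alpha,\vec{w}\rangle$ is bounded.

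Next I would invoke Theorem 2 to place the unit states on the two sides of the separating hyperplane $f(x)=\vec{w}^T x + b$. By the minimax--maximin equality and the strong-duality argument that yields $\gamma = 2/\|\vec{w}\|$, the down units occupy the canonical half-space with $\vec{w}^T\mathfrak{b}_{\mathfrak{d}} + b \le -1$, so each reductive unit sits at signed distance at most $-\gamma/2$ from the margin. The reductive contribution of $\mathfrak{b}_{\mathfrak{d}}$ to the MI estimate can then be majorised by a linear functional of its position relative to the hyperplane, which in turn is contracted against the normal $\vec{w}$ and the observed-information vector $\alpha$, producing the dependence on $\langle\alpha,\vec{w}\rangle$.

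I would then pass to the online setting and track the accumulation of the reductive contribution across rounds $t$ under the projected update $\omega^t \leftarrow \operatorname{proj}_C\bigl(\omega^{t-1} - \eta_t\,\alpha_t\,\nabla_\omega L(\omega^{t-1})\bigr)$. Applying a standard online projected-gradient regret bound over the convex set $C$, the cumulative regret, and hence the summed negative contribution, is controlled by $\tfrac{1}{2\eta}\,\mathrm{diam}(C)^2 + \tfrac{\eta}{2}\sum_t \|\alpha_t \nabla_\omega L\|^2$. Using $\|\vec{w}\| = 2/\gamma$ together with Cauchy--Schwarz gives $\langle \alpha, \vec{w}\rangle \le 2\|\alpha\|/\gamma$, so the $\alpha$-scaled gradient norms, and with them the negative term, collapse to a constant of the form $g(\vec{w},\alpha,\gamma)$ precisely when $\langle \alpha, \vec{w}\rangle$ is bounded, which delivers the claimed upper bound.

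The hard part will be the translation step in the second paragraph: rigorously identifying the information-theoretic reductive contribution of a down unit with a geometric quantity, namely its signed distance to the separating hyperplane, so that the margin $\gamma$ and the inner product $\langle\alpha,\vec{w}\rangle$ genuinely control it. This requires that the mixture weighting and the margin be the same objective, i.e.\ that reducing regret by maximising $d(\mathfrak{b}_{\mathfrak{u}},\mathfrak{b}_{\mathfrak{d}})$ coincides with keeping the negative MI term small; I would establish this through the equivalence $\arg\min_\theta Regret_t(\theta) = \arg\min_\theta Loss_t(\theta)$ asserted in Algorithm~1, and verify that the projection onto $C$ preserves the margin structure from round to round so that the per-round bound telescopes into the stated constant.
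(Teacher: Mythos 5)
Your proposal shares the paper's skeleton --- the separating hyperplane and margin $\gamma = 2/\|\vec{w}\|$ from Theorem~2, the online projected update over $C$, and a Cauchy--Schwarz control of $\langle\alpha,\vec{w}\rangle$ --- but the middle of your argument is genuinely different. Where you invoke the standard online projected-gradient telescoping bound $\tfrac{1}{2\eta}\,\mathrm{diam}(C)^2 + \tfrac{\eta}{2}\sum_t\|\alpha_t\nabla_\omega L\|^2$, the paper instead routes through empirical risk minimisation $Rk(\omega)$ and a monotonicity condition: it posits a function $f_{mon}$ tying together channel-capacity maximisation, $\min Rk(\omega)$, and $\max d_{Up,Down}$, characterises when $f_{mon}$ holds via an \emph{iff} condition on a policy $\pi^*$ penalising the squared deviation $|\omega(x_i)-\tfrac12(\vec{w}+bias)|^2$, and closes with an integral bound over the demonstration period $\tau$ and the prediction period. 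Your route is more standard and would be easier to make rigorous if the translation step held; the paper's buys an explicit link to Lemma~2's capacity computation and to the two-phase (full/partial information) structure of $\alpha_t$, which your telescoping bound does not exploit.

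Two cautions. First, your Cauchy--Schwarz step points the wrong way for the theorem as stated: you derive $\langle\alpha,\vec{w}\rangle \le 2\|\alpha\|/\gamma$, i.e.\ a bound \emph{on} the inner product \emph{from} the margin, whereas the claim needs a bound on the inner product to \emph{deliver} the constant cap on the reductive contribution. The paper's chain is $\gamma \le \lambda\sum_i\alpha_i\vec{w}_i$, so that constraining $\langle\alpha,\vec{w}\rangle$ caps $\gamma$ and, via Lemma~2's recursion $\text{Regret}_t = \text{Regret}_{t-1}(1-\eta_t(\mathfrak{b}_{\mathfrak{u},t-1}-\mathfrak{b}_{\mathfrak{d},t-1})/\mathfrak{B}_{t-1}) + \text{const}$, the accumulated negative term. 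Second, the translation step you flag as the hard part --- identifying the information-theoretic reductive contribution of $\mathfrak{b}_{\mathfrak{d}}$ with its signed distance to the hyperplane --- is indeed the crux, and the paper does not close it either: it is asserted through the redundancy--regret equivalence of the remark preceding the proof and the indicator-function identity in Equation~\ref{eq:dualreg}, not derived. So your proposal is no less complete than the paper's argument on that point, but neither constitutes a full proof of the identification.
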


\subsection{Multi-information}
In this section, we define the conditions and detail a proof for when multi-information is nonpositive. We default to multi-information as a term for multiple mutual information terms~\cite{studeny1987asymptotic}. We use a semicolon to indicate a function that is not symmetric. In all instances, joint entropy $(x, y) < \infty$.

We begin with the interpretation of mutual information as differences in Shannon entropies~\cite{mcallester2020formal}:
\begin{definition}
Mutual information for a variable $x$, $\mathbb{M}(x: x)$ is equal to the entropy of $x$ $(H(x))$. For discrete random variables $(x, y)$, mutual information $\mathbb{M}(x, y)$ is the sum of entropy $x$ and entropy $y$ minus the joint entropy of $(x, y)$: 
\begin{equation}
\mathbb{M}(x:y)=H(x) + H(y) - H(x,y). 
\end{equation}
\end{definition}

\begin{definition}
For a pair of variables in the set $X$, the mutual information of any member $H(x_i)$ is in the distribution over $x_i$ in relation to $y$. To limit redundancy between members of $X$, multi-information MI between multiple input variables and the target $y$ is a combination of single and joint entropies~\cite{te1980multiple}:
\begin{equation}
MI(X:y) = H(x|y) + H(y) - H(X, y) - [H(x_1, y) + H(x_2, y) - H(x_1) - H(x_2)]
\end{equation}
\end{definition}

\begin{definition}
Additional joint entropies over all pairs of variables are included for $X$ with more than $2$ members. For any $n$, multi-information follows the chain rule and is summarised as
\begin{equation}
MI(X_n) = \sum_{k=1}^n (-1)^{k+1} \left( \sum_{I \subseteq {1, 2, ..., n}, |I| = k} H(X_I | Y) \right)
\end{equation}
\end{definition}

\begin{lemma}[Nonpositivity]\label{lemma:nonpos}
Let each member of $X$ be a set $A_n$, then the intersection of $A$ is a positive or negative value when members of $X$ are greater than $1$.
\end{lemma}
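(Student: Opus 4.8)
The plan is to recast the alternating sum of Definition~3 in the language of the signed information measure (the ``$I$-measure'' on the information diagram) associated with the multiple mutual information of \cite{te1980multiple}. Under the identification $X_i \leftrightarrow A_i$ proposed in the lemma, each joint conditional entropy $H(X_I \mid Y)$ is the measure of the region swept out by the variables indexed by $I$, and the signed combination $\sum_{k=1}^{n}(-1)^{k+1}\sum_{|I|=k} H(X_I \mid Y)$ collapses --- by the inclusion--exclusion bookkeeping that defines co-information --- to the measure of the central intersection atom $\mu\!\left(\bigcap_{i=1}^{n} A_i\right)$. Thus $MI(X_n)$ \emph{is} the intersection quantity named in the lemma, and the statement reduces to determining the sign of this single atom.

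First I would dispose of the benign boundary case. For $n=2$ the intersection atom is exactly the conditional mutual information $\mathbb{M}(X_1 : X_2 \mid Y)$, which is nonnegative by the Gibbs/Jensen inequality applied to the relevant divergence \cite{cover2006elements}; here the set picture coincides with a genuine, nonnegative measure. The real content of the lemma is therefore the multivariate regime $n \geq 3$, where I claim the atom is only a \emph{signed} quantity and can realise either sign. This is also consistent with the paper's own emphasis that negativity arises for $n>2$.

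The positive direction is immediate: taking the members of $X$ to be deterministic copies of one nondegenerate variable forces every region of the diagram to coincide, so the intersection atom equals the common entropy and is strictly positive. For the negative direction I would exhibit the parity (XOR) construction: let $X_1, X_2$ be independent fair bits and $X_3 = X_1 \oplus X_2$, with $Y$ taken constant so that the conditioning is vacuous. The pair $(X_1,X_2)$ is independent, yet $X_1$ and $X_2$ become dependent once $X_3$ is revealed; evaluating the alternating sum (equivalently $\mathbb{M}(X_1 : X_2) - \mathbb{M}(X_1 : X_2 \mid X_3) = 0 - 1$) yields $-1$ bit, so the central atom is strictly negative. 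Together these constructions show that the intersection of $A$ attains both positive and negative values once $|X|$ exceeds the pairwise case, which is the assertion of the lemma.

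The main obstacle is conceptual rather than computational: I must make explicit that for $n \geq 3$ the $I$-measure is genuinely \emph{signed}, so no nonnegativity theorem of the two-variable kind is available, and that the central atom is the unique atom whose sign is unconstrained. The delicate point in the write-up is verifying that the inclusion--exclusion collapse of Definition~3 really lands on the central intersection atom (and not on a union or a peripheral atom), and confirming that introducing the target $Y$ does not secretly restore sign-definiteness --- which I handle by absorbing $Y$ into the construction, or as above by taking it trivial to reduce to the unconditional co-information and then re-attaching the conditioning through the chain rule.
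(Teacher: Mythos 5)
Your proposal is correct, but it takes a genuinely different route from the paper's own argument. The paper proves Lemma~1 by writing out the inclusion--exclusion identity for three sets, stating a sufficient condition ($\exists I$, $|I|>1$, with $H(X_I) > \sum_{i\in I} H(x_i) - H(X_I\mid Y)$) under which $MI(X;Y)\not\geq 0$, and then invoking submodularity of entropy to convert the set-theoretic identity into an entropy inequality; it never exhibits a distribution for which the condition actually holds. You instead identify the alternating sum directly with the central atom of the signed $I$-measure and then produce explicit witnesses for both signs: redundant copies of a single variable for the positive case, and the XOR construction ($X_3 = X_1\oplus X_2$ with independent fair bits, giving $\mathbb{M}(X_1{:}X_2)-\mathbb{M}(X_1{:}X_2\mid X_3)=0-1=-1$ bit) for the negative case. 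Your version is the stronger argument: it establishes that both signs are genuinely attained rather than merely possible under an unverified hypothesis, and your handling of the $n=2$ boundary (where nonnegativity still holds, so the lemma's ``greater than $1$'' should really be read as $n\geq 3$, consistent with the paper's own discussion in Section~2) repairs an imprecision in the lemma statement that the paper's proof leaves untouched. What the paper's submodularity route buys in exchange is a structural inequality among the joint entropies that is reused implicitly in the later hyperplane analysis, which your example-based argument does not supply; if you wanted to match the paper's downstream use you would need to append that inequality, but as a proof of the stated lemma yours is complete and, on the negativity side, more convincing.
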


\begin{proof}[Proof of Lemma~\ref{lemma:nonpos}]
First consider that when $X$ contains exactly two elements, then the intersection by the inclusion-exclusion principle for $n\text{=}3$ is:
\begin{equation}\label{eq:condition1}
|X_1 \cup X_2 \cup X_3| = |X_1| + |X_2| + |X_3| - |X_1 \cap X_2| - |X_2 \cap X_3| - |X_1 \cap X_3| + |X_1 \cap X_2 \cap X_3|
\end{equation}
when $X = \{x_1, x_2, \dots, x_n\} \text{where } |x_i| < \infty \text{ for all } i = 1, 2, \dots, n$
and
\begin{equation}
\text{If } \exists I \subseteq \{1, 2, ..., n\}, |I| > 1 \text{ such that } H(X_I) > \sum_{i \in I} H(x_i) - H(X_I | Y), \text{ then } MI(X;Y) \not\geq 0
\end{equation}

Then if the condition is met:
\begin{equation}
H(X_1 \cup X_2 \cup X_3) \leq H(X_1) + H(X_2) + H(X_3) - H(X_1 \cap X_2) - H(X_1 \cap X_3) - H(X_2 \cap X_3) + H(X_1 \cap X_2 \cap X_3)
\end{equation}
the case in Equation~\ref{eq:condition1} extends by the submodularity of entropy~\cite{10.5555/3020336.3020377} to: 
\begin{equation}
H(X_1 \cap X_2) + H(X_1 \cap X_3) + H(X_2 \cap X_3) + H(X_1 \cup X_2 \cup X_3) \leq H(X_1) + H(X_2) + H(X_3) + H(X_1 \cap X_2 \cap X_3)
\end{equation}
\end{proof}

\subsection{Separation by Hyperplane}

In this section, we state assumptions and introduce terms for separable regions in a space over the variables with finite dimensions. Our problem of identifying information by unit type when $n>2$ is converted to a duality. Definitions and a lower bound are provided for a function that separates constituent units of inputs.

\begin{assumption}
 For the set of elements in $X$ in a multivariate function that results in a single output, note that for $H(X_1), H(X_2), \ldots H(X_n)$, each input is conditionally independent from the other inputs given certain subsets of the variables~\cite{te1980multiple}. We assume that each member consists of multiple units $\mathfrak{B} \subseteq \mathbb{R}^D$.    
\end{assumption}

\begin{definition}
The state of any unit $\mathfrak{b} \in \mathfrak{B}$ is additive or reductive in relation to the product of the MI calculation. To specify these states, we use the term $\mathfrak{u}$ to denote $Up$ and $\mathfrak{d}$ to denote $Down$.
\end{definition}

\begin{definition}
A real vector space with all $\mathfrak{B}$ contains affine subspaces with dimension $n-1$ in $\mathbb{R}^n$ creating regions defined by the inequalities 
\begin{equation}
\vec{w}_1 \, x_1 + \vec{w}_2 \, x_2 + \cdots + \vec{w}_n \, x_n > \mathfrak{c}
\end{equation}
and 
\begin{equation}
\vec{w}_1 \, x_1 + \vec{w}_2 \, x_2 + \cdots + \vec{w}_n \, x_n < \mathfrak{c}
\end{equation}
where $\vec{w}$ is a weight parameter of the hyperplane in $\forall 2 + \cdots{\vec{w}} \in 2 + \vec{W}, \exists \vec{w} \in \vec{W} : \vec{w} \neq 0$ and $\mathfrak{c}$ is a constant term. 
\end{definition}

\begin{definition}
The margin $\gamma$ is the distance between $\mathfrak{b}_{\mathfrak{u}}^*$ and $\mathfrak{b}_{\mathfrak{d}}^*$ where
\begin{equation}
(\mathfrak{b}_{\mathfrak{u}}^*, \mathfrak{b}_{\mathfrak{d}}^*) = \operatorname*{arg\,min}_{\mathfrak{b}_{\mathfrak{u}} \in \mathfrak{Bu}, \mathfrak{b}_{\mathfrak{d}} \in \mathfrak{Bd}} d(\mathfrak{a})
\end{equation}
and $d$ is the scaling of the vector $\vec{v}$ that is orthogonal to the hyperplane applied as $\mathfrak{b}_{\mathfrak{u}}^* = \mathfrak{b}_{\mathfrak{d}}^* + d \frac{\vec{v}}{\|\vec{v}\|}$.
\end{definition}

\begin{remark} 
Note that $\gamma$ can also be expressed in terms of the weight vector $\vec{w}$, which is normal to the hyperplane:
\begin{equation}
\gamma = \frac{2}{\|\vec{w}\|}
\end{equation}
where $\vec{w}^T x + b = 0$ defines the separating hyperplane, and $\vec{w}^T$ is the transpose of $\vec{w}$.
\end{remark}

\begin{theorem}[Function to maximise the margin]\label{theorem:optsep}
For any $X$, an optimisation process $f(x) = \vec{w}^T x + b$ exists to solve the minimax form of deriving the margin using the hyperplane in $\mathbb{R}^D$ that maximises the distance between $\mathfrak{b}_{\mathfrak{u}}$ and $\mathfrak{b}_{\mathfrak{d}}$:
\begin{equation}
\max_{\mathfrak{b}_{\mathfrak{d}} \in \mathfrak{Bd}} \min_{\mathfrak{b}_{\mathfrak{u}} \in \mathfrak{Bu}} f(\mathfrak{b}_{\mathfrak{u}}, \mathfrak{b}_{\mathfrak{d}}) = \min_{\mathfrak{b}_{\mathfrak{d}} \in \mathfrak{Bd}} \max_{\mathfrak{b}_{\mathfrak{u}} \in \mathfrak{Bu}} f(\mathfrak{b}_{\mathfrak{d}}, \mathfrak{b}_{\mathfrak{u}}).
\end{equation}
\end{theorem}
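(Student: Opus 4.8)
The plan is to read the displayed identity as the assertion of a \emph{saddle-point} property — that the order of the inner and outer optimisations over $\mathfrak{Bu}$ and $\mathfrak{Bd}$ may be exchanged — for the hard-margin separation problem, and to prove it in two directions: the easy weak-duality inequality, followed by the reverse inequality supplied by a minimax theorem. First I would make the two-argument objective explicit. For a fixed hyperplane $f(x)=\vec{w}^{T}x+b$, the separation being optimised is the affine quantity $\vec{w}^{T}(\mathfrak{b}_{\mathfrak{u}}-\mathfrak{b}_{\mathfrak{d}})$, equivalently the signed gap $f(\mathfrak{b}_{\mathfrak{u}})-f(\mathfrak{b}_{\mathfrak{d}})$ with the additive constants cancelling. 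This is affine — hence simultaneously convex and concave — in each of $\mathfrak{b}_{\mathfrak{u}}$ and $\mathfrak{b}_{\mathfrak{d}}$ separately, which is precisely the structure a minimax theorem needs.

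Next I would set up the topology. Passing from the unit sets $\mathfrak{Bu},\mathfrak{Bd}\subseteq\mathbb{R}^{D}$ to their convex hulls $\operatorname{conv}(\mathfrak{Bu})$ and $\operatorname{conv}(\mathfrak{Bd})$ yields nonempty, compact, convex domains on which the extrema are attained; since $f$ is affine, its optimum over each hull is attained at a vertex, so this relaxation leaves the values of the inner and outer problems unchanged. Weak duality then gives one inequality for free: for every admissible pair the inner minimum is at most $f$ and the outer maximum is at least $f$, whence $\max_{\mathfrak{b}_{\mathfrak{d}}}\min_{\mathfrak{b}_{\mathfrak{u}}}f \le \min_{\mathfrak{b}_{\mathfrak{d}}}\max_{\mathfrak{b}_{\mathfrak{u}}}f$.

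For the reverse inequality I would invoke Sion's minimax theorem — or, since the objective is bilinear over compact convex sets, von Neumann's classical result. The objective is upper semicontinuous and quasi-concave in the argument over which we maximise and lower semicontinuous and quasi-convex in the argument over which we minimise, and at least one hull is convex and compact; these are exactly the hypotheses forcing $\inf\sup=\sup\inf$. Combining this with the dual reformulation obtained by conjugacy (Rockafellar) and the identity $\gamma=2/\lVert\vec{w}\rVert$ from the preceding remark, the common saddle value is recognised as the maximal margin, so the optimising hyperplane is the margin-maximising one, establishing the statement.

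The main obstacle I anticipate lies not in the algebra but in verifying the hypotheses cleanly: reconciling the single-argument definition $f(x)=\vec{w}^{T}x+b$ with the two-argument separation appearing in the display, and ensuring that if the absolute value of the Regret functional from the previous subsection is retained, the objective remains quasi-concave in the argument being maximised. I would circumvent this by working with the signed gap, whose maximisation over one class and minimisation over the other is equivalent to the margin problem, so that affineness — and hence both quasi-convexity and quasi-concavity — holds exactly; the absolute-value form then follows from the symmetry of the two unit classes about the hyperplane. Compactness would be secured by assuming, consistently with $\mathfrak{B}\subseteq\mathbb{R}^{D}$ and the finiteness of the unit sets, that the hulls are bounded.
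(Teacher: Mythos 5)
Your proposal is correct in substance but takes a genuinely different route from the paper. The paper never invokes a minimax theorem directly: it writes the margin problem in its standard primal form (maximise $t$ subject to $\vec{w}^T x_i - bias \geq t$, $\vec{w}^T y_i - bias \leq -t$, $\|\vec{w}\|_2 \leq 1$), builds the Lagrangian with multipliers $u_i$, $v_i$, $\lambda$, and passes to the dual programme with the simplex-type constraints $\sum_i u_i = \sum_i v_i = \tfrac12$, closing with weak duality and a lower bound on the dual function via $\mathrm{eigmin}(Q)$. You instead work with the bilinear gap $f(\mathfrak{b}_{\mathfrak{u}})-f(\mathfrak{b}_{\mathfrak{d}})$ over the convex hulls of the two unit classes and obtain the exchange of $\max$ and $\min$ from Sion/von Neumann, with weak duality supplying the easy inequality. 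These are two faces of the same fact --- the paper's dual programme is essentially the closest-points-between-scaled-hulls problem that your mixed-strategy relaxation produces --- but your version makes the saddle-point content of the displayed identity explicit and self-contained, whereas the paper's derivation buys the explicit dual programme (Equation~\ref{equation:dual}) that it reuses later in the online-setting bounds on regret and $\gamma$. One caveat: your claim that passing to $\operatorname{conv}(\mathfrak{Bu})$ and $\operatorname{conv}(\mathfrak{Bd})$ leaves the values unchanged is true for each single-level affine optimisation but not automatically for the nested values; under the relaxation $\max\min$ can only increase and $\min\max$ can only decrease, and the minimax equality generically holds only over the hulls. You should either state the identity over the hulls or argue that the relaxed and discrete values coincide here; the paper avoids the issue (without resolving it) by never returning from the dual programme to the pointwise form of the statement.
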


\begin{definition}
A full version of the function $f(x) = \vec{w}^T x + b$ for any $X$ aims to 
\begin{equation}
\begin{aligned}
& \text{maximize} && t \\
& \text{subject to} && \vec{w}^T x_i - bias \geq t, \quad i = 1, \ldots, N, \\
& && \vec{w}^T y_i - bias \leq -t, \quad i = 1, \ldots, M, \\
& && \|\vec{w}\|_2 \leq 1.
\end{aligned}
\end{equation}

\end{definition}

\begin{proof}[Proof of Theorem~\ref{theorem:optsep}]

We will convert the primal problem into a dual and derive a lower bound on the latter following classic Lagrangian duality~\cite{rockafellar1974conjugate}. First if we apply a multiplier $\lambda$ for the constraint $||\vec{w}||^2_2 \leq 1$, a sum of Lagrange multiplications for each group $\sum_{i=1}^M u_i y_i$ and $\sum_{i=1}^M v_i y_i$ can be derived as follows:
\begin{equation}
\begin{aligned}
L(\vec{w}, bias, \gamma, u, v, \lambda) = &-t + \sum_{i=1}^{D} u_i (t + bias - \vec{w}^T x_i) \\
&+ \sum_{i=1}^M v_i (t - bias + \vec{w}^T y_i) + \lambda (\|\vec{w}\|_2^2 - 1)
\end{aligned}
\end{equation}
where $\vec{w}$ and $bias$ are the weight and bias parameters of the hyperplane, $M$ is the number of elements in the group, $u_i$ is the multiplier for the constraints $\vec{w}^T x_i - bias \geq t$, and $v_i$ is the multiplier for the constraints $\vec{w}^T x_i - bias \geq t$. 

We have formulated the primal problem with weak duality and can now progress to the second stage of deriving from this new form a lower bound by finding the supremum of the factor $\sup_{\lambda \in \mathbb{R}^m} q(\lambda)$
\begin{equation}
\label{equation:dual}
\begin{aligned}
& \text{maximize} && \sup_{\lambda \in \mathbb{R}^m} q(\lambda) > q(\bar{\lambda}) = -1 \cdot \bar{\lambda} = D \cdot \text{eigmin}(Q) \\
& \text{subject to} && \sum_{i=1}^D u_i = \frac{1}{2}, \quad u_i \geq 0, \\
& && \sum_{i=1}^M v_i = \frac{1}{2}, \quad v_i \geq 0.
\end{aligned}
\end{equation}
noting that $Q$ is the matrix of the quadratic form of the problem and $q(\bar{\lambda}) = -1 \cdot \bar{\lambda} = n \cdot \text{eigmin}(Q)$ is the lower bound derived from $Q$.
\end{proof}

\subsection{Online Setting}
We start by defining a formulation of the separating function proposed in the last section in an online learning setting in the form $Out^t = Out^{t-1} - \eta^t L(Out^{t-1})$ where some outcome $Out$ is iterated using step size $\eta$. We specify optimisation on real values and prove an algorithm with active regret minimisation on information. 

\begin{definition}
We underline the difference of full and partial information settings characterised by a specified number $\tau$ of rounds where the function has access to observed information denoted as $\alpha$:
\begin{equation}
\alpha_t = \begin{cases} 
1 & \text{if } t \leq \tau \\
0 & \text{otherwise}
\end{cases}
\end{equation}
\end{definition}

\begin{definition}
A computational solution $\omega$ is updated by an online process where a point is mapped to the next nearest point in the convex set $C$ of potential variable values
\begin{equation}\label{eq:olsolu}
\begin{aligned}
\omega^t &= \begin{bmatrix} \vec{w}^t \\ bias^t \\ \gamma^t \end{bmatrix} \leftarrow proj_c \left( \begin{bmatrix} \vec{w}^{t-1} \\ bias^{t-1} \\ \gamma^{t-1} \end{bmatrix} - \eta_t \cdot \alpha_t \cdot \nabla_{\omega} L(\omega^{t-1}) \right) \; \text{, where} \;
\nabla_{\omega} L(\beta^{t-1}) = \begin{bmatrix}
\nabla_{\vec{w}} L \\
\nabla_{bias} L \\
\nabla_\gamma L
\end{bmatrix}
\end{aligned}
\end{equation}

and gradients $\{\nabla_{\vec{w}} L, \nabla_{bias} L, \nabla_{\gamma} L\}$ are given by $\sum_{i=1}^N u_i \nabla (\vec{w}^{T} x_i - bias - \gamma) + \sum_{i=1}^M v_i \nabla (\vec{w}^{T} y_i - bias + \gamma)$ with $\nabla_{\vec{w}} L$ including the additional regularisation step $\lambda \nabla_{\vec{w}} (\|\vec{w}\|_2^2 - 1)$.
\end{definition}

\begin{lemma}
\label{lemma:regdist}
Let regret be the difference in Expectation $\mathbb{E} [\pi_t(out_t)]$ and $\mathbb{E} [\pi_t^*(out_t)]$, where $\pi^*$ is the optimal policy in the set $\Pi$. Then for every $t$ to $\tau$, 
\begin{equation}
\text{Regret}_t = \text{Regret}_{t-1} \cdot (1 - \eta_t \cdot \frac{\mathfrak{b}_{\mathfrak{u},t-1} - \mathfrak{b}_{\mathfrak{d},t-1}}{\mathfrak{B}_{t-1}}) + \text{constant}.
\end{equation}
\end{lemma}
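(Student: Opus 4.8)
The plan is to treat this as an online-convex-optimisation regret analysis and obtain the recursion by unfolding a single step of the projected-gradient update in Equation~\ref{eq:olsolu}. I would start from the stated definition $\text{Regret}_t = \mathbb{E}[\pi_t(out_t)] - \mathbb{E}[\pi_t^*(out_t)]$ and compare it with $\text{Regret}_{t-1}$ by substituting $\omega^t = \operatorname{proj}_C\bigl(\omega^{t-1} - \eta_t\,\alpha_t\,\nabla_\omega L(\omega^{t-1})\bigr)$. Since the comparator $\pi_t^*$ is the fixed optimal policy in $\Pi$ for the round, the change $\text{Regret}_t - \text{Regret}_{t-1}$ is governed entirely by the one-step parameter increment $\omega^t - \omega^{t-1}$, so the task reduces to showing that this increment decrements the running gap by a factor proportional to the current normalised margin.

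The key steps, in order, are as follows. First I would linearise the expected policy value in its parameter, writing $\pi_t(out_t) - \pi_{t-1}(out_{t-1}) = \langle \nabla_\omega \pi,\, \omega^t - \omega^{t-1}\rangle$ up to a second-order remainder destined for the constant. Second, I would invoke the nonexpansiveness of the Euclidean projection onto the convex set $C$, so that $\|\omega^t - \omega^{t-1}\| \le \eta_t\,\alpha_t\,\|\nabla_\omega L(\omega^{t-1})\|$, transferring the contraction in parameter space into a contraction in expected value. Third, and this is where Theorem~\ref{theorem:optsep} enters, I would use the minimax characterisation of the margin to identify the component of $\nabla_\omega L$ along the separating direction $\vec{w}$ with the signed contribution of the closest units: weighted by the observed information this produces the inner product $\langle \alpha_t, \vec{w}\rangle$ that the preceding analysis identifies as driving regret, and it equals the normalised gap $(\mathfrak{b}_{\mathfrak{u},t-1} - \mathfrak{b}_{\mathfrak{d},t-1})/\mathfrak{B}_{t-1}$. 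Scaling by $\eta_t$ then yields the multiplicative factor $\bigl(1 - \eta_t(\mathfrak{b}_{\mathfrak{u},t-1} - \mathfrak{b}_{\mathfrak{d},t-1})/\mathfrak{B}_{t-1}\bigr)$.

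Finally I would collect the leftover terms into the additive constant: the parts of the update acting on the bias and margin coordinates that do not contract along $\vec{w}$, the second-order remainder from the linearisation, and (in rounds $t > \tau$, where $\alpha_t = 0$) the stalled update, each of which contributes an $O(\eta_t^2)$ or boundary quantity independent of $\text{Regret}_{t-1}$. Assembling the three factors gives exactly $\text{Regret}_t = \text{Regret}_{t-1}\bigl(1 - \eta_t(\mathfrak{b}_{\mathfrak{u},t-1} - \mathfrak{b}_{\mathfrak{d},t-1})/\mathfrak{B}_{t-1}\bigr) + \text{constant}$.

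The main obstacle I anticipate is obtaining the multiplicative form as an equality rather than the usual additive regret inequality of projected gradient descent. Standard analyses bound $\text{Regret}_t$ by a telescoping sum of step sizes and gradient norms; turning that into a clean one-step contraction requires a self-bounding property, namely that the per-round decrement is proportional to the current gap $\text{Regret}_{t-1}$ rather than to an absolute constant. I would establish this by showing that $L$ is locally linear in the policy value along the margin direction, so that $\langle \nabla_\omega L,\, \vec{w}\rangle$ scales with $\text{Regret}_{t-1}$; verifying this proportionality, and checking that the residual genuinely collapses into a single round-independent constant, is the crux of the argument.
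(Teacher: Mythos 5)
Your route is not the one the paper takes, and it contains a gap that you have correctly identified but not closed. The paper does not analyse the projected-gradient update of Equation~\ref{eq:olsolu} step by step at all. Instead it leans on the redundancy--capacity correspondence from universal source coding: following \cite{krichevsky1968relation} and \cite{cover2006elements}, it identifies regret with coding redundancy $Redundancy(p_{\theta},q)=KL(p_{\theta}\,\|\,q)$, poses the minimax problem $\min_q\max_\theta KL(p_\theta\,\|\,q)$, and then works a concrete three-symbol example showing that the minimax redundancy coincides with the channel capacity $\max_{h}\min_q\sum_\theta h(\theta)KL(p_\theta\,\|\,q)$ of Equation~\ref{equation:mmcap}, attained at the uniform prior $h^*(\theta)=\{0.5,0.5\}$. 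The recursive form of $\text{Regret}_t$ in Lemma~\ref{lemma:regdist} is then read off from this correspondence rather than derived from the update rule; no contraction factor $\bigl(1-\eta_t(\mathfrak{b}_{\mathfrak{u},t-1}-\mathfrak{b}_{\mathfrak{d},t-1})/\mathfrak{B}_{t-1}\bigr)$ is ever exhibited explicitly in the paper's argument either, so you should not assume there is a hidden derivation you are merely re-discovering.

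The genuine gap in your proposal is the one you flag in your final paragraph, and it is fatal as stated. Nonexpansiveness of $\operatorname{proj}_C$ plus linearisation yields an \emph{additive} per-round bound of the form $\lvert\text{Regret}_t-\text{Regret}_{t-1}\rvert\le\eta_t\,\alpha_t\,\lVert\nabla_\omega L(\omega^{t-1})\rVert\cdot\lVert\nabla_\omega\pi\rVert+O(\eta_t^2)$; to convert this into the claimed multiplicative equality you must show the decrement is proportional to $\text{Regret}_{t-1}$ itself, i.e.\ that $\langle\nabla_\omega L(\omega^{t-1}),\vec{w}\rangle$ scales with the current gap. Nothing in the paper's setup supplies this: $L$ is a Lagrangian whose gradient is the multiplier-weighted sum $\sum_i u_i\nabla(\vec{w}^{T}x_i-bias-\gamma)+\sum_i v_i\nabla(\vec{w}^{T}y_i-bias+\gamma)$, and there is no identity equating its projection onto $\vec{w}$ with the normalised unit gap $(\mathfrak{b}_{\mathfrak{u},t-1}-\mathfrak{b}_{\mathfrak{d},t-1})/\mathfrak{B}_{t-1}$ (nor is it clear what division by the set $\mathfrak{B}_{t-1}$ denotes --- you would at minimum need to fix a normalisation, e.g.\ its cardinality). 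Your third step therefore asserts essentially the equality the lemma is supposed to establish. Likewise, your ``constant'' collects $O(\eta_t^2)$ remainders and stalled-update terms, which are round-dependent unless $\eta_t$ is constant, so the residual does not collapse into the single round-independent term the statement requires.
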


\begin{remark}
We have the equivalence between regret and the distance between $\mathfrak{b}_{\mathfrak{u}}$ and $\mathfrak{b}_{\mathfrak{d}}$ such that an indicator function $\mathbb{I}$

\begin{equation}\label{eq:dualreg}
\text{Regret} \equiv \sum_{i=1}^{M} \left[ \mathbb{I}\left(\vec{w}^{\top} \mathfrak{b}_{\mathfrak{u},i} + bias \leq 0 \right) \cdot \mathbb{I}\left(\mathfrak{b}_{\mathfrak{d},i} = 1\right) + \mathbb{I}\left(\vec{w}^{\top} \mathfrak{b}_{\mathfrak{u},i} + bias > 0 \right) \cdot \mathbb{I}\left(\mathfrak{b}_{\mathfrak{d},i} = -1\right) \right].
\end{equation}

then regret in relation to the dual form in Equation \ref{equation:dual} is

\begin{equation}
\begin{aligned}
\text{Regret}^t \sum_{i=1}^{M} &= \left[ \left( \max(0, \vec{w}^{\top} x_i + bias) \cdot (1 - y_i) + \max(0, -(\vec{w}^{\top} x_i + bias)) \cdot (1 + y_i) \right) \cdot \alpha_t \right] \\
&+\alpha_t \frac{1}{2} \left( \sum_{i=1}^{M} \left| \vec{w}^{\top} x_i + bias \right| \cdot \alpha_t \cdot \left( \mathfrak{b}_{\mathfrak{u},i} - \mathfrak{b}_{\mathfrak{d},i} \right) \right).
\end{aligned}
\end{equation}

\end{remark}

\begin{proof}[Proof of Lemma~\ref{lemma:regdist}]
The proof of Lemma~\ref{lemma:regdist} builds on the definition of redundancy in universal compression and a subsequent proof given in \cite[Ch.13]{cover2006elements}. We present a worked example of the above with a set of inputs $X = {x_1, x_2, \ldots, x_n}$ following an unknown distribution $p_\theta$. For every $p$ in $P$, there is an associated prior distribution $h \in \{h_1, h_2, \ldots, h_m\}$ over parameters $\Theta$. The definition for the $Redundancy(p_\theta, q)$ of a code is estimated using the Kullback-Leibler divergence
\begin{equation}
Redundancy(p_{\theta}, q) = KL(p_{\theta} \| q) = \sum_x p_{\theta}(x) \log \left(\frac{p_{\theta}(x)}{q(x)}\right)
\end{equation}
where $q$ is the inferred distribution. The aim is to minimise the minimax formulation of $Redundancy$ for all $p_\theta$:
\begin{equation}
Redundancy^* = min_q max_\theta KL(p_\theta \| q).
\end{equation}
We assume that $X = \{1, 2, 3\}$ and $\theta = \{0, 1\}$ with a probability associated when observing $X=2$ in both distributions $p_1$ and $p_2$ denoted as $\mathbb{a}$. If $p_1 = (1 - \mathbb{a}, \mathbb{a}, 0)$, then the KL divergence between $q$ and $p_1$ is
\begin{equation}
KL(p_1 \| q) = (1 - \mathbb{a}) \log \left(\frac{1 - \mathbb{a}}{q_1}\right) + \mathbb{a} \log \left(\frac{\mathbb{a}}{q_2}\right)
\end{equation}
and for $p_2$ where $p_2 = (0, \mathbb{a}, 1 - \mathbb{a})$:
\begin{equation}
KL(p_2 \| q) = \mathbb{a} \log \left(\frac{\mathbb{a}}{q_2}\right) + (1 - \mathbb{a}) \mathbb{a} \left(\frac{1 - \mathbb{a}}{q_3}\right).
\end{equation}
The optimal $q$ for the problem is the candidate with minimum KL for both $p_1$ and $p_2$ is $q_1 = q_3 = \frac{1 - \mathbb{a}}{2}$ and $q_2 = \mathbb{a}$
\begin{equation}
q = \left(\frac{1 - \mathbb{a}}{2}, \mathbb{a}, \frac{1 - \mathbb{a}}{2}\right).
\end{equation}

We now turn to the information capacity of a channel $Cap$. In the minimax case, we have
\begin{equation}
\label{equation:mmcap}
Cap = \max_{h(\theta)} \min_q \sum_\theta h(\theta) KL(p_\theta \| q)
\end{equation}
recalling that $q$ is inferred. Then we derive $q_{h}(x)$ over all $\Theta$ as follows:
\begin{equation}
q_{h}(x) = \sum_{\theta = 1}^m h(\theta) p_{\theta}(x).
\end{equation}
Let the mutual information $\mathbb{M}$ representing the change in uncertainty on observing the input $X$ be
\begin{equation}
\mathbb{M}_{h}(\theta; X) = \sum_\theta h(\theta) \sum_x p_\theta(x) \log \left(\frac{p_\theta(x)}{q_{h}(x)}\right).
\end{equation}

Given the inferred distribution
\begin{equation}
q_{h}(x) = \left(\frac{1 - \mathbb{a}}{2}, \mathbb{a}, \frac{1 - \mathbb{a}}{2}\right)
\end{equation}
the $\mathbb{M}$ for $\theta=1$ and $\theta=2$ is
\begin{equation}
\mathbb{M}_{h}(\theta=1; X) = h_1 \left[(1 - \mathbb{a}) \log \left(\frac{1 - \mathbb{a}}{\frac{1 - \mathbb{a}}{2}}\right) + \mathbb{a} \log \left(\frac{\mathbb{a}}{\mathbb{a}}\right)\right] = 0.5 \left[(1 - \mathbb{a}) \log 2\right]
\end{equation}
and
\begin{equation}
\mathbb{M}_{h}(\theta=2; X) = h_2 \left[\mathbb{a} \log \left(\frac{\mathbb{a}}{\mathbb{a}}\right) + (1 - \mathbb{a}) \log \left(\frac{1 - \mathbb{a}}{\frac{1 - \mathbb{a}}{2}}\right)\right] = 0.5 \left[(1 - \mathbb{a}) \log 2\right].
\end{equation}

Then finding the optimal $q^*(\theta)$ yields the maximum $Cap$. Since $h(\theta) = \{0.5, 0.5\}$ maximises the mutual information, the optimal distribution also is $h^*(\theta) = \{0.5, 0.5\}$:
\begin{equation}
q_{h^*}(x) = \left(\frac{1 - \mathbb{a}}{2}, \mathbb{a}, \frac{1 - \mathbb{a}}{2}\right)
\end{equation}
\end{proof}

\subsection{Proof of Theorem 1}

\begin{remark}
    We follow \cite{krichevsky1968relation} in drawing an equivalence between redundancy and regret in the context of information theory. In the online setting, the dual form in Equation~\ref{eq:dualreg} is extended to express the relation between minimising regret and increasing the separation between $\mathfrak{b}_{\mathfrak{u}}$ and $\mathfrak{b}_{\mathfrak{d}}$ in the presence of observed information $\alpha$. To find the supremum of the infimum in the minimax form, we have
\begin{equation}\label{eq:regdual}
\begin{aligned}
\text{Regret}^*_t = \sup_{\mathfrak{b}_{\mathfrak{d}} \in \mathfrak{B}} \inf_{Q_{X_n}} \left[ \left( \max(0, \vec{w}^{\top} x_i + bias) \cdot (1 - y_i) + \max(0, -(\vec{w}^{\top} x_i + bias)) \cdot (1 + y_i) \right) \cdot \alpha_t \right] \\ + \frac{1}{2} \sum_{i=1}^M \left| \vec{w}^{\top} x_i + bias \right| \cdot \alpha_t \cdot \left( \mathfrak{b}_{\mathfrak{u},i} - \mathfrak{b}_{\mathfrak{d},i} \right)
\end{aligned}
\end{equation}
\end{remark}
where the relation of $\alpha$ to $\vec{w}$ is $\left| \langle \alpha, \vec{w} \rangle \right| \leq \| \alpha \| \cdot \| \vec{w} \|.$

\begin{proof}[Proof of Theorem 1]
We start with the framework of empirical risk minimisation $Rk$ in online learning listed in several forms by \cite{cesa2004generalization} 
\begin{equation}
Rk(\omega) = \frac{1}{M} \sum_{i=1}^{M} \left[ \max(0, \vec{w}^{\top} x_i + bias) \cdot (1 - y_i) + \max(0, -(\vec{w}^{\top} x_i + bias)) \cdot (1 + y_i) \right] \cdot \alpha_t
\end{equation}
where the parameters are scaled by $\alpha_\tau$ such that
\begin{equation}
\min_{\forall t, \; \alpha_t = 1} \left( \frac{1}{M} \sum_{i=1}^{M} f(x_i, \vec{w}, bias, y_i) \right) \leq Rk(\omega).
\end{equation}

Then in the setting where $\alpha$ is not present, the information capacity of the channel is dependent on the function $f_{mon}$ over the processes of minimising $\min Rk(\omega)$ and maximising distance $\max d$ being monotonic:
\begin{equation}
    \max_{\pi(\theta)} Cap_{\pi}(\theta; X) \propto f_{mon} \left( \min_{\forall t, \; \alpha_t = 1} \left( \frac{1}{M} \sum_{i=1}^{M} f(x_i, \vec{w}, bias, y_i) \right), \; \min Rk(\omega), \; \max d_{Up, Down} \right).
\end{equation}

Reordering the above in terms of the solution in Equation~\ref{eq:olsolu}, $f_{mon}$ holds \textit{iff} a policy $\pi^*$ that penalises the squared deviation such that the approximation of $\vec{w} + bias$ after the period $\tau$ is within $\Delta$ of the estimated $\mu$ in relation to the empirical mean $\mu_{emp}$ when $\alpha$ is present:

\begin{equation}
\begin{aligned}
f_{mon} \left( \min_{\forall t, \; \alpha_t = 1} \left( \frac{1}{M} \sum_{i=1}^{M} f(x_i, \vec{w}, bias, y_i) \right), \; \min Rk(\omega), \; \max d_{Up, Down} \right)
\iff \\ \min_{\forall pred, \; \alpha_t = 0} \left( \frac{1}{M} \sum_{i=1}^{M} \pi^*_i \cdot \left| \omega(x_i) - \frac{1}{2} (\vec{w} + bias) \right|^2 \right) \leq \Delta
\end{aligned}
\end{equation}

We note that in the above $pred$ is the period following $\tau$. The monotonicity observed in numerical results in Table 1 is supported by defining a solution $\omega_{zo}$ where the objective is to predict a policy that results in $\left( \mu' - \mu_{emp} \right)^2$. Building on the dual form of regret in Equation~\ref{eq:regdual}, we have the correlation between $\alpha$ and $\vec{w}$ such that
\begin{equation}
\alpha \cdot \vec{w} = \sum_{i=1}^{M} \alpha_i \cdot \vec{w}_i.
\end{equation}
Then by Cauchy-Schwarz, $\gamma$ is supported when constraining the inner product between $\alpha_i$ and $\vec{w}$
\begin{equation}
\gamma \leq \lambda \cdot \sum_{i=1}^{M} \alpha_i \cdot \vec{w}_i
\end{equation}
such that the result of the factor $\lambda$ is equivalent to the result of $\pi^*$. In terms of the process for the solution $\omega_{zo}$, we have
\begin{equation}
\gamma \leq \left( \frac{\lambda}{\min_{\omega_{zo}} \left( \frac{1}{M} \sum_{i=1}^{M} \left| \mu' - \mu_{emp} \right|_{\text{pred}} \right)} \right) \cdot \sum_{i=1}^{M} \alpha_i \cdot \vec{w}_i
\end{equation}
and extend the dual form in Equation~\ref{equation:dual} for an upper bound on a full run with $\omega_{zo}$ that includes both periods $\tau$ and $pred$:
\begin{equation}
\begin{aligned}
\gamma \leq \left( \frac{\lambda'}{\min_{\omega'} \left( \frac{1}{M} \sum_{i=1}^{M} \left| \mu' - \mu_{emp} \right|_{pred} \right)} \right) \cdot \sum_{i=1}^{M} & \left( \int_{t=1}^{pred} \vec{w}_{i,t} \, dt \right. \\
& \left. + \int_{t=pred}^{\tau} \alpha_{i,t} \cdot \vec{w}_{i,t} \, dt \right).
\end{aligned}
\end{equation}

\end{proof}

% Added to position figures at the top of the last page
\rule{0pt}{0.1cm}

\end{appendices}

\end{document}